\theoremstyle{plain}
\newtheorem{theorem}{Theorem}[section]
\newtheorem{proposition}[theorem]{Proposition}
\newtheorem{lemma}[theorem]{Lemma}
\theoremstyle{definition}
\newtheorem{definition}[theorem]{Definition}
\theoremstyle{remark}
\newtheorem{remark}[theorem]{Remark}
\DeclareMathOperator{\GW}{\mathrm{GW}}
\newcommand{\ri}{\text{(\rom 1)}}
\newcommand{\rii}{\text{(\rom 2)}}
\newcommand{\riii}{\text{(\rom 3)}}
\icmltitlerunning{A Gromov--Wasserstein Geometric View of Spectrum-Preserving Graph Coarsening}
\begin{document}

\twocolumn[
\icmltitle{A Gromov--Wasserstein Geometric View of Spectrum-Preserving \\ Graph Coarsening}

\icmlsetsymbol{equal}{*}

\begin{icmlauthorlist}
\icmlauthor{Yifan Chen}{hkbu}
\icmlauthor{Rentian Yao}{uiuc}
\icmlauthor{Yun Yang}{uiuc}
\icmlauthor{Jie Chen}{ibm}
\end{icmlauthorlist}

\icmlaffiliation{hkbu}{Hong Kong Baptist University}
\icmlaffiliation{uiuc}{University of Illinois Urbana-Champaign}
\icmlaffiliation{ibm}{MIT-IBM Watson AI Lab, IBM Research}

\icmlcorrespondingauthor{Yifan Chen}{yifanc@comp.hkbu.edu.hk}

\icmlkeywords{graph}

\vskip 0.3in
]

\printAffiliationsAndNotice{~}  %

\begin{abstract}
Graph coarsening is a technique for solving large-scale graph problems by working on a smaller version of the original graph, and possibly interpolating the results back to the original graph. It has a long history in scientific computing and has recently gained popularity in machine learning, particularly in methods that preserve the graph spectrum. This work studies graph coarsening from a different perspective, developing a theory for preserving graph distances and proposing a method to achieve this. The geometric approach is useful when working with a collection of graphs, such as in graph classification and regression. In this study, we consider a graph as an element on a metric space equipped with the Gromov--Wasserstein (GW) distance, and bound the difference between the distance of two graphs and their coarsened versions. Minimizing this difference can be done using the popular weighted kernel $K$-means method, which improves existing spectrum-preserving methods with the proper choice of the kernel. The study includes a set of experiments to support the theory and method, including approximating the GW distance, preserving the graph spectrum, classifying graphs using spectral information, and performing regression using graph convolutional networks.
Code is available at 
\url{https://github.com/ychen-stat-ml/GW-Graph-Coarsening}.
\end{abstract}

\section{Introduction}
Modeling the complex relationship among objects by using graphs and networks is ubiquitous in scientific applications~\citep{Morris+2020}. Examples range from analysis of chemical and biological networks~\citep{debnath1991structure, helma2001predictive, dobson2003distinguishing, irwin2012zinc, sorkun2019aqsoldb}, learning and inference with social interactions~\citep{oettershagen2020temporal}, to image understanding~\citep{dwivedi2020benchmarkgnns}. Many of these tasks are faced with large graphs, the computational costs of which can be rather high even when the graph is sparse (e.g., computing a few extreme eigenvalues and eigenvectors of the graph Laplacian can be done with a linear cost by using the Lanczos method, while computing the round-trip commute times requiring the pesudoinverse of the Laplacian, which admits a cubic cost). Therefore, it is practically important to develop methods that can efficiently handle graphs as their size grows. In this work, we consider one type of methods, which resolve the scalability challenge through working on a smaller version of the graph. The production of the smaller surrogate is called \emph{graph coarsening}.

\begin{figure}[t]
  \centering
  \includegraphics[width=\columnwidth]{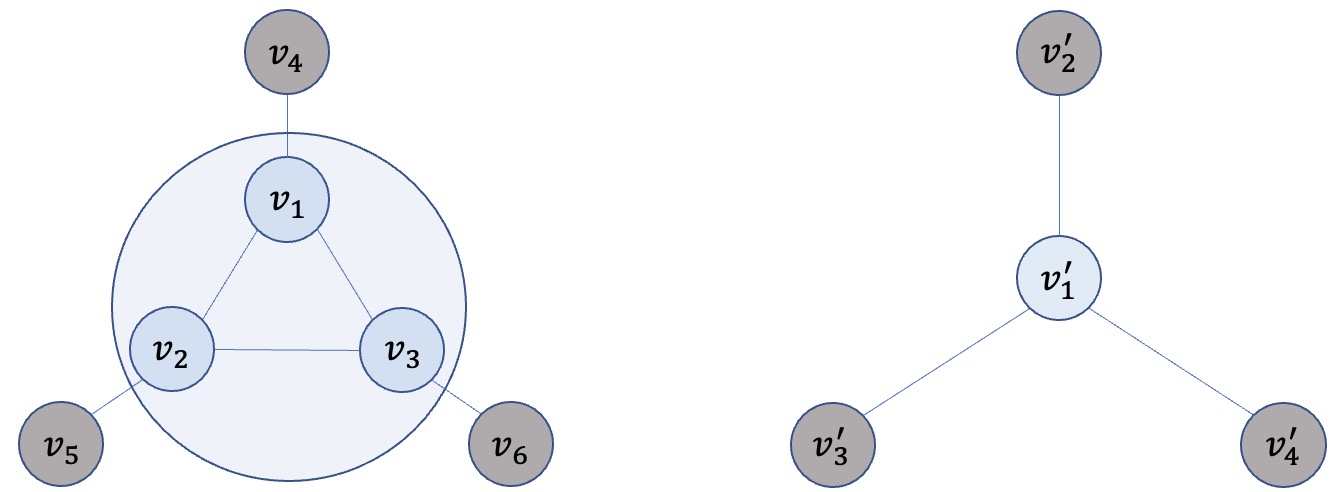}
  \caption{An example of graph coarsening. Three nodes $v_1, v_2$, and $v_3$ are merged to a supernode $v'_1$ in the coarsened graph, while each of the other nodes ($v_4$, $v_5$, and $v_6$) is a separate supernode.}
  \label{fig:coarening procedure}
\end{figure}

Graph coarsening is a methodology for solving large-scale graph problems. Depending on the problem itself, one may develop a solution based on the coarsened graph, or interpolate the solution back to the original one. \Cref{fig:coarening procedure} illustrates a toy example of graph coarsening, whereby a cluster of nodes ($v_1$, $v_2$, and $v_3$) in the original graph are merged to a so-called \emph{supernode} ($v'_1$) in the coarsened graph. 
If the problem is graph classification (predicting classes of multiple graphs in a dataset), one may train a classifier by using these smaller surrogates, if it is believed that they inherit the characteristics of the original graphs necessary for classification. 
On the other hand, if the problem is node regression,%
\footnote{Machine learning problems on graphs could be on the graph level (e.g., classifying the toxicity of a protein graph) or on the node level (e.g., predicting the coordinates of each node (atom) in a molecular graph). In this work, our experiments focus on graph-level problems.}
one may predict the targets for the nodes in the coarsened graph and interpolate them in the original graph \citep{huang2021scaling}.

Graph coarsening has a long history in scientific computing, while it gains attraction in machine learning recently~\citep{chen2022graph}. A key question to consider is what characteristics should be preserved when reducing the graph size. A majority of work in machine learning focuses on the spectral properties~\citep{loukas2018spectrally, loukas2019graph, bravo2019unifying, jin2020graph}. That is, one desires that the eigenvalues of the original graph $G$ are close to those of the coarsened graph $G^{(c)}$. While being attractive, it is unclear why this objective is effective for problems involving a collection of graphs (e.g., graph classification), where the distances between graphs shape the classifier and the decision boundary. Hence, in this work, we consider a different objective, which desires that the distance of a pair of graphs, $\text{dist}(G_1,G_2)$, is close to that of their coarsened versions, $\text{dist}(G_1^{(c)},G_2^{(c)})$.

To achieve so, we make the following contributions.

\begin{itemize}[itemsep=-0.3em, topsep=0.0em, leftmargin=1.0em]
\item We consider a graph as an element on the metric space endowed with the Gromov--Wasserstein (GW) distance~\citep{chowdhury2019gromov}, which formalizes the distance of graphs with different sizes and different node weights. We analyze the distance between $G$ and $G^{(c)}$ as a major lemma for subsequent results.

\item We establish an upper bound on the difference of the GW distance between the original graph pair and that of the coarsened pair. Interestingly, this bound depends on only the respective spectrum change of each of the original graphs. Such a finding explains the effectiveness of spectrum-preserving coarsening methods for graph classification and regression problems.

\item We bridge the connection between the upper bound and weighted kernel $K$-means, a popular clustering method, under a proper choice of the kernel. This connection leads to a graph coarsening method that we demonstrate to exhibit attractive inference qualities for graph-level tasks, compared with other spectrum-preserving methods.
\end{itemize}

\section{Related Work}
\label{sec:related}

Graph coarsening was made popular by scientific computing many decades ago, where a major problem was to solve large, sparse linear systems of equations~\citep{saad2003}. (Geometric) Multigrid methods~\citep{Briggs2000} were used to solve partial differential equations, the discretization of which leads to a mesh and an associated linear system. Multigrid methods solve a smaller system on the coarsened mesh and interpolate the solution back to the original mesh. When the linear system is not associated with a geometric mesh, algebraic multigrid methods~\citep{ruge1987algebraic} were developed to treat the coefficient matrix as a general graph, so that graph coarsening results in a smaller graph and the procedure of ``solving a smaller problem then interpolating the solution on the larger one'' remains applicable.

Graph coarsening was introduced to machine learning as a methodology of graph summarization~\citep{10.1145/3186727} through the concepts of ``graph cuts,'' ``graph clustering,'' and ``graph partitioning.'' A commonality of these concepts is that graph nodes are grouped together based on a certain objective. Graph coarsening plays an important role in multilevel graph partitioning~\citep{Karypis1999}. The normalized cut~\citep{shi2000normalized} is a well-known and pioneering method for segmenting an image treated as a graph. \citet{dhillon2007weighted} compute weighted graph cuts by performing clustering on the coarsest graph resulting from hierarchical coarsening and refining the clustering along the reverse hierarchy. Graph partitioning is used to form convolutional features in graph neural networks~\citep{defferrard2016convolutional} and to perform neighborhood pooling~\citep{Ying2018}. Graph coarsening is used to learn node embeddings in a hierarchical manner~\citep{Chen2018}. For a survey of graph coarsening with comprehensive accounts on scientific computing and machine learning, see~\citet{chen2022graph}.

A class of graph coarsening methods aim to preserve the spectra of the original graphs. 
\citet{loukas2018spectrally} and \citet{loukas2019graph} introduce the notion of ``restricted spectral similarity'' (RSS), requiring the eigenvalues and eigenvectors of the coarsened graph Laplacian, when restricted to the principal eigen-subspace, to approximate those of the original graph. Local variation algorithms are developed therein to achieve RSS. \citet{jin2020graph} suggest the use of a spectral distance as the key metric for measuring the preservation of spectral properties. The authors develop two coarsening algorithms to maximally reduce the spectral distance between the original and the coarsened graph. \citet{bravo2019unifying} develop a probabilistic algorithm to coarsen a graph while preserving the Laplacian pseudoinverse, by using an unbiased procedure to minimize the variance.

Graph coarsening is increasingly used in deep learning. One limitation of traditional coarsening methods is that they mean to be universally applicable to any graphs, without the flexibility of adapting to a particular dataset or distribution of its own characteristics. \citet{DBLP:conf/iclr/CaiWW21} address this limitation by adjusting the edge weights in the coarsened graph through graph neural networks (GNNs). Conversely, graph coarsening techniques can be applied to scale up GNNs by preprocessing the graphs~\citep{huang2021scaling}. On a separate note, graph condensation~\citep{jin2021graph} is another technique to accelerate GNNs, which uses supervised learning to condense node features and the graph structure.
Furthermore, graph pooling can assign a node in the original graph to multiple supernodes~\citep{grattarola2022understanding}, similar to the relaxation-based graph coarsening strategy explored in the algebraic multigrid literautre~\citep{Dorit2011}.

\section{Notations and Preliminaries}

In this section, we set up the notations for graph coarsening and review the background of Gromov--Wasserstein distance. Additional information can be found in Appendix~\ref{app: list_notation}.

\subsection{Graphs and Laplacians}

We denote an undirected graph as $G$, whose node set is $\m V \defeq \set{v_i}_{i = 1}^N$ with size $N = |\m V|$ and whose symmetric weighted adjacency is $\mtx{A} \defeq [a_{ij}]$. The $N$-by-$N$ (combinatorial) Laplacian matrix is defined as $\mtx{L} \defeq \mtx{D} - \mtx{A}$, where $\mtx{D} \defeq \text{diag}\paren{\mtx{A 1}_N}$ is the degree matrix. The normalized Lapalcian is $\mtx{\m L} \defeq \mtx{D}^{-\frac12} \mtx{L} \mtx{D}^{-\frac12}$. Without loss of generality, we assume that $G$ is connected, in which case the smallest eigenvalue of $\mtx{L}$ and $\mtx{\m L}$ is zero and is simple.

\subsection{Graph coarsening and coarsened graphs}
\label{sec:gc_def}

Given a graph $G$, graph coarsening amounts to finding a smaller graph $G^{(c)}$ with $n \leq |\m V|$ nodes to approximate $G$. One common coarsening approach obtains the coarsened graph from a parititioning $\m P = \set{\m P_1, \m P_2, \dots, \m P_n}$ of the node set $\m V$~\citep{loukas2018spectrally}. In this approach, each subset $\m P_j$ of nodes are collapsed to a supernode in the coarsened graph and the edge weight between two supernodes is the sum of the edge weights crossing the two corresponding partitions. Additionally, the sum of the edge weights in a partition becomes the weight of the supernode. In the matrix notation, we use $\mtx C_p \in \set{0, 1}^{n \times N}$ to denote the membership matrix induced by the partitioning $\m P$, with the $(k,i)$ entry being $\mtx C_p(k, i) = 1_{\{v_i\in\m P_k\}}$, where $1_{\{\cdot\}}$ is the indicator function. For notational convenience, we define the adjacency matrix of the coarsened graph to be $\mtx{A}^{(c)}=\mtx C_p \mtx A \mtx C_p^T$. Note that $\mtx A^{(c)}$ includes not only edge weights but also node weights.

If we similarly define $\mtx{D}^{(c)} \defeq \text{diag}\paren{\mtx{A^{(c)} 1}_n}$ to be the degree matrix of the coarsened graph $G^{(c)}$, then it can be verified that the matrix $\mtx L^{(c)}=\mtx{C}_p \mtx{L} \mtx{C}_p^{\intercal} = \mtx{D}^{(c)} - \mtx A^{(c)}$ is a (combinatorial) Laplacian (because its smallest eigenvalue is zero and is simple). Additionally, $\mtx{\m L}^{(c)} = \paren{\mtx{D}^{(c)}}^{-\frac12} \mtx{L}^{(c)} \paren{\mtx{D}^{(c)}}^{-\frac12}$ is the normalized Laplacian.

In the literature, the objective of coarsening is often to minimize some difference between the original and the coarsened graph. For example, in spectral graph coarsening, \citet{loukas2019graph} proposes that the $\mtx L$-norm of any $N$-dimensional vector $\mtx x$ be close to the $\mtx L^{(c)}$-norm of the $n$-dimensional vector $(\mtx{C}_p^{\intercal})^+ \mtx x$; while \citet{jin2020graph} propose to minimize the difference between the ordered eigenvalues of the Laplacian of $G$ and those of the lifted Laplacian of $G^{(c)}$ (such that the number of eigenvalues matches). Such objectives, while being natural and interesting in their own right, are not the only choice. Questions may be raised; for example, it is unclear why the objective is to preserve the Laplacian spectrum but not the degree distribution or the clustering coefficients. Moreover, it is unclear how preserving the spectrum benefits the downstream use. In this paper, we consider a different objective---preserving the graph distance---which may help, for example, maintain the decision boundary in graph classification problems. To this end, we review the Gromov--Wasserstein (GW) distance.

\subsection{Gromov--Wasserstein distance and its induced metric space}

The GW distance was originally proposed by \citet{memoli2011gromov} to measure the distance between two metric measure spaces $M_{\m X}$ and $M_{\m Y}$.%
\footnote{A metric measure space $M_{\m X}$ is the triple $(\m X, d_X, \mu_X)$, where $(\m X, d_X)$ is a metric space with metric $d_X$ and $\mu_X$ is a Borel probability measure on $\m X$.} However, using metrics to characterize the difference between elements in sets $\mathcal{X}$ and $\mathcal{Y}$ can be too restrictive. \citet{peyre2016gromov} relaxed the metric notion by proposing the GW discrepancy, which uses dissimilarity, instead of metric, to characterize differences. \citet{chowdhury2019gromov} then extended the concept of GW distance/discrepancy from metric measure spaces to measure networks, which can be considered a generalization of graphs.
\begin{definition}[Measure network]
\label{def:measure networks}
A measure network is a triple $\paren{\m X, \omega_X , \mu_X}$, where $\m X$ is a Polish space (a separable and completely metrizable topological space), $\mu_X$ is a fully supported Borel probability measure, and $\omega_X$ is a bounded measurable function on $\m X^2$.
\end{definition}

In particular, a graph $G$ (augmented with additional information) can be taken as a discrete measure network. We let $\m X$ be the set of graph nodes $\set{v_i}_{i = 1}^N$ and associate with it a probability mass $\mu_X = \mtx m = \brkt{m_1, \dots,m_{N}}^{\intercal} \in \mb R_+^{N}$, $\sum_{i=1}^{N} m_i=1$. Additionally, we associate $G$ with the node similarity matrix $\mtx S = [s_{ij}] \in \mb R^{N \times N}$, whose entries are induced from the measurable map $\omega_X$: $s_{ij} = \omega_X\paren{v_i, v_j}$. Note that the mass $\mtx m$ and the similarity $\mtx S$ do not necessarily need to be related to the node weights and edge weights, although later we will justify and advocate the use of some variant of the graph Lapalcian as $\mtx S$.

For a source graph $G_s$ with $N_s$ nodes and mass $\mtx m_s$ and a target graph $G_t$ with $N_t$ nodes and mass $\mtx m_t$, we can define a transport matrix $\mtx{T} = [t_{ij}] \in\mb R^{N_s\times N_t}$, where $t_{ij}$ specifies the probability mass transported from $v_i^s$ (the $i$-th node of $G_s$) to $v_j^t$ (the $j$-th node of $G_t$). We denote the collection of all feasible transport matrices as $\Pi_{s,t}$, which includes all $\mtx T$ that satisfy $\mtx{T1}=\mtx m_s$ and $\mtx{T}^\intercal \mtx 1 = \mtx m_t$. Using the $\ell_p$ transportation cost $L(a, b)=(a-b)^p$, \citet{chowdhury2019gromov} define the $\text{GW}_p$ distance for graphs as
\begin{align}
\label{eqn:gwd}
\GW_p^p(G_s, G_t) 
=& \min_{\mtx T \in \Pi_{s, t}} \sum_{i, j=1}^{N_S}\sum_{i',j'=1}^{N_t} \left|s^s_{ij} - s^t_{i'j'} \right|^p \mtx T_{ii'} \mtx T_{jj'} \nonumber \\
=& \min_{\mtx T \in \Pi_{s, t}} \dotp{\mtx M}{\mtx T},
\end{align}
whre the cross-graph dissimilarity matrix $\mtx M \in \mb R^{N_s \times N_t}$ has entries $\mtx M_{jj'} = \sum_{i, i'} \left|s^s_{ij} - s^t_{i'j'} \right|^p \mtx T_{ii'}$ (which by themselves are dependent on $\mtx T$).

The computation of the $\text{GW}_p$ distance (GW distance for short) can be thought of as finding a proper alignment of nodes in two graphs, such that the aligned nodes $v^s_j$ and $v^t_{j'}$ have similar interactions with other nodes in their respective graphs. Intuitively, this is achieved by assigning large transport mass $\mtx T_{jj'}$ to a node pair $(v^s_j, v^t_{j'})$ with small dissimilarity $\mtx M_{jj'}$, making the GW distance a useful tool for graph matching~\citep{xu2019gromov}. We note that variants of the GW distance exist; for example, \citet{titouan2019optimal} proposed a fused GW distance by additioanlly taking into account the dissimilarity of node features. We also note that computing the GW distance is NP-hard, but several approximate methods were developed~\citep{xu2019scalable,zheng2022brief}. In this work, we use the GW distance as a theoretical tool and may not need to compute it in action.

On closing this section, we remark that \citet[Theorem 18]{chowdhury2019gromov} show that the GW distance is indeed a metric for measure networks, modulo weak isomorphism.%
\footnote{The weak isomorphism allows a node to be split into several identical nodes with the mass preserved. See Definition~3 of \citet{chowdhury2019gromov}.}
Therefore, we can formally establish the metric space of interest.
\begin{definition}[$\GW_p$ space]
Let $\m N$ be the collection of all measure networks. For $p \geq 1$, we denote by $\paren{\m N, \GW_p}$ the metric space of measure networks endowed with the $\mathrm{GW}_p$ distance defined in~\eqref{eqn:gwd} and call it the $\GW_p$ space.
\end{definition}

\section{Graph Coarsening from a Gromov--Wasserstein Geometric View}

In this section, we examine how the GW geometric perspective can shape graph coarsening. In \Cref{sec:scale}, we provide a framework that unifies many variants of the coarsening matrices through the use of the probability mass $\mtx m$ introduced in the context of measure networks. Then, in \Cref{sec:gwd}, we analyze the variant associated with the similarity matrix $\mtx S$. In particular, we establish an upper bound of the difference of the GW distances before and after coarsening. Based on the upper bound, in \Cref{sec:svdmap} we connect it with some spectral graph techniques and in \Cref{sec:similarity} we advocate a choice of $\mtx S$ in practice.

\subsection{A unified framework for coarsening matrices}\label{sec:scale}

The membership matrix $\mtx C_p$ is a kind of coarsening matrices: it connects the adjacency matrix $\mtx A$ with the coarsened version $\mtx A^{(c)}$ through $\mtx A^{(c)} = \mtx C_p \mtx A \mtx C_p^{\intercal}$. There are, however, different variants of coarsening matrices. We consider three here, all having a size $n \times N$.

\begin{enumerate}[itemsep=-0.3em, topsep=0.0em, leftmargin=1.7em, label=(\roman*)]
\item Accumulation. This is the matrix $\mtx C_p$. When multiplied to the left of $\mtx A$, the $i$-th row of the product is the sum of all rows of $\mtx A$ corresponding to the partition $\m P_i$.

\item Averaging. A natural alternative to summation is (weighted) averaging. We define the diagonal mass matrix $\mtx W = \text{diag}\paren{m_1, \cdots, m_N}$ and for each partition $\m P_i$, the accumulated mass $c_i = \sum_{j\in\m P_i}m_j$ for all $i\in[n]$. Then, the averaging coarsening matrix is
$$\bar{\mtx C}_w \defeq \mathrm{diag}(c_1^{-1}, \cdots, c_n^{-1}) \mtx C_p \mtx W.$$
This matrix takes the effect of averaging because of the division over $c_i$. Moreover, when the probability mass $\mtx m$ is uniform (i.e., all $m_j$'s are the same), we have the relation $\bar{\mtx C}_w^+ = \mtx C_p^{\intercal}$.

\item Projection. Neither $\mtx C_p$ nor $\bar{\mtx C}_w$ is orthogonal. We define the projection coarsening matrix as
$$\mtx C_w \defeq \mathrm{diag}(c_1^{-1/2}, \cdots, c_n^{-1/2}) \mtx C_p \mtx W^\frac12,$$
by noting that $\mtx C_w \mtx C_w^\intercal$ is the identity and hence $\mtx C_w$ has orthonoral rows. Therefore, the $N$-by-$N$ matrix $\mtx \Pi_w \defeq \mtx W^{\frac12} \mtx C_p^\intercal \bar{\mtx C}_w \mtx W^{-\frac12} = \mtx C_w^\intercal \mtx C_w$ is a projection operator.
\end{enumerate}

In \Cref{sec:gc_def}, we have seen that the combinatorial Laplacian $\mtx L$ takes $\mtx C_p$ as the coarsening matrix, because the relationship $\mtx L^{(c)} = \mtx C_p \mtx L \mtx C_p^{\intercal}$ inherits from $\mtx A^{(c)} = \mtx C_p \mtx A \mtx C_p^{\intercal}$. On the other hand, it can be proved that, if we take the diagonal mass matrix $\mtx W$ to be the degree matrix $\mtx D$, the normalized Laplacian defined in \Cref{sec:gc_def} can be written as $\mtx{\m L}^{(c)} = \mtx C_w \mtx{\m L} \mtx C_w^\intercal$ (see \Cref{app: doubly}). In other words, the normalized Laplacian uses $\mtx C_w$ as the coarsening matrix. The matrix $\mtx{\m L}^{(c)}$ is called a doubly-weighted Laplacian~\citep{chung1996combinatorial}.

For a general similarity matrix $\mtx S$ (not necessarily a Laplacian), we use the averaging coarsening matrix $\bar{\mtx{C}}_w$ and define $\mtx S^{(c)} \defeq \bar{\mtx{C}}_w \mtx{S} \bar{\mtx{C}}_w^{\intercal}$. This definition appears to be more natural in the GW setting; see a toy example in \Cref{app: toy scale}. It is interesting to note that \citet{vincent-cuaz2022semirelaxed} proposed the concept of semi-relaxed GW (srGW) divergence, wherein the first-order optimality condition of the constrained srGW barycenter problem is exactly the equality $\mtx S^{(c)} = \bar{\mtx{C}}_w \mtx{S} \bar{\mtx{C}}_w^{\intercal}$. See \Cref{app:srgw}.

In the next subsection, we will consider the matrix $\mtx U \defeq \mtx W^\frac{1}{2}\mtx S\mtx W^\frac{1}{2}$. We define the coarsened version by using the projection coarsening matrix $\mtx C_w$ as in $\mtx U^{(c)} \defeq \mtx C_w \mtx U \mtx C_w^{\intercal}$. We will bound the distance of the original and the coarsened graph by using the eigenvalues of $\mtx U$ and $\mtx U^{(c)}$. The reason why $\mtx U$ and $\mtx S$ use different coarsening matrices lies in the technical subtlety of $\mtx W^\frac{1}{2}$: $\mtx C_w$ absorbs this factor from $\bar{\mtx{C}}_w$.

\subsection{Graph distance on the \texorpdfstring{$\text{GW}_2$}{GW-2} space}
\label{sec:gwd}

Now we consider the GW distance between two graphs. For theoretical and computational convenience, we take the $\ell_2$ transportation cost (i.e., taking $p=2$ in~\eqref{eqn:gwd}). When two graphs $G_1$ and $G_2$ are concerned, we inherit the subscripts $_1$ and $_2$ to all related quantities, such as the probability masses $\mtx m_1$ and $\mtx m_2$, avoiding verbatim redundancy when introducing notations. This pair of subscripts should not cause confusion with other subscripts, when interpreted in the proper context. Our analysis can be generalized from the $\text{GW}_2$ distance to other distances, as long as the transportation cost satisfies the following decomposable condition.

\begin{proposition}{\citep{peyre2016gromov}}\label{prop: decompose_loss}
Let $\mtx T^\ast\in \mb R^{N_1\times N_2}$ be the optimal transport plan from $\mtx m_1$ to $\mtx m_2$,%
\footnote{The existence of $\mtx T^*$ is guaranteed by the fact that the feasible region of $\mtx T$ is compact and the object function $\dotp{\mtx M}{\mtx T}$ is continuous with respect to $\mtx T$.}
and $\mtx S_k \in \mb R^{N_k \times N_k}$ be similarity matrices for $k=1, 2$. If the transport cost can be written as $L(a, b)=f_{1}(a) + f_{2}(b) - h_{1}(a) h_{2}(b)$ for some element-wise functions $\left(f_{1}, f_{2}, h_{1}, h_{2}\right)$, then we can write $\mtx M$ in~\eqref{eqn:gwd} as
$$
f_{1}(\mtx S_1) \mtx m_1 \mtx{1}_{N_{2}}^{\intercal} + \mtx{1}_{N_{1}} \mtx m_2^{\intercal} f_{2}(\mtx S_2)^{\intercal} - h_{1}(\mtx S_1) \mtx T^* h_{2}(\mtx S_2)^{\intercal}.
$$
\end{proposition}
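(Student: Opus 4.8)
The claim is a purely algebraic rewriting of the cross-graph dissimilarity matrix $\mtx M$, whose $(j,j')$ entry is $\sum_{i,i'}\left|s^s_{ij}-s^t_{i'j'}\right|^p\mtx T^*_{ii'}$, under the hypothesis that the pointwise cost decomposes as $L(a,b)=f_1(a)+f_2(b)-h_1(a)h_2(b)$. So the plan is to substitute this decomposition into the entrywise formula and split the sum into three pieces, each of which I will recognize as one entry of the corresponding matrix product.

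First I would write, for fixed $j\in[N_1]$ and $j'\in[N_2]$,
\begin{align*}
\mtx M_{jj'}
&=\sum_{i=1}^{N_1}\sum_{i'=1}^{N_2} L\!\left(s^1_{ij},\,s^2_{i'j'}\right)\mtx T^*_{ii'}\\
&=\sum_{i,i'}\Bigl(f_1(s^1_{ij})+f_2(s^2_{i'j'})-h_1(s^1_{ij})h_2(s^2_{i'j'})\Bigr)\mtx T^*_{ii'}.
\end{align*}
Then I would treat the three summands separately. For the first term, $f_1$ acts elementwise, so $f_1(s^1_{ij})$ does not depend on $i'$; summing $\mtx T^*_{ii'}$ over $i'$ uses the marginal constraint $\mtx T^{*}\mtx 1_{N_2}=\mtx m_1$, giving $\sum_{i'}\mtx T^*_{ii'}=(m_1)_i$, hence $\sum_i f_1(s^1_{ij})(m_1)_i=\bigl(f_1(\mtx S_1)\mtx m_1\bigr)_j$, which is exactly the $(j,j')$ entry of the rank-one matrix $f_1(\mtx S_1)\mtx m_1\mtx 1_{N_2}^{\intercal}$ (constant in $j'$). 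Symmetrically, the second term uses $\mtx T^{*\intercal}\mtx 1_{N_1}=\mtx m_2$ and yields $\bigl(f_2(\mtx S_2)\mtx m_2\bigr)_{j'}$, i.e.\ the $(j,j')$ entry of $\mtx 1_{N_1}\mtx m_2^{\intercal}f_2(\mtx S_2)^{\intercal}$. For the third term, $\sum_{i,i'}h_1(s^1_{ij})\mtx T^*_{ii'}h_2(s^2_{i'j'})$ is by definition the $(j,j')$ entry of the triple product $h_1(\mtx S_1)^{\intercal}\mtx T^* h_2(\mtx S_2)$; since $\mtx S_1,\mtx S_2$ are symmetric, $h_1(\mtx S_1)^{\intercal}=h_1(\mtx S_1)$ and $h_2(\mtx S_2)=h_2(\mtx S_2)^{\intercal}$, matching the stated form $h_1(\mtx S_1)\mtx T^* h_2(\mtx S_2)^{\intercal}$. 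Assembling the three entrywise identities over all $j,j'$ gives the matrix expression in the statement.

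There is essentially no hard step here — it is bookkeeping with the two marginal constraints of $\mtx T^*$ and the observation that applying $f_1,f_2,h_1,h_2$ elementwise commutes with the row/column indexing. The only point requiring a word of care is the transpose placement in the bilinear term and the implicit use of symmetry of the similarity matrices; if one does not want to invoke symmetry, the cleanest statement is $\mtx M=f_1(\mtx S_1)\mtx m_1\mtx 1_{N_2}^{\intercal}+\mtx 1_{N_1}\mtx m_2^{\intercal}f_2(\mtx S_2)^{\intercal}-h_1(\mtx S_1)^{\intercal}\mtx T^* h_2(\mtx S_2)$, which reduces to the displayed formula under symmetry. Finally, to connect with the $\mathrm{GW}_2$ case used in the sequel, I would note that $L(a,b)=(a-b)^2=a^2+b^2-2ab$ fits the template with $f_1(a)=a^2$, $f_2(b)=b^2$, $h_1(a)=a$, $h_2(b)=2b$ (or any split of the factor $2$), which is the instance invoked later in \Cref{sec:gwd}.
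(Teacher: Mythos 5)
Your proof is correct. The paper does not actually prove this proposition---it is quoted from \citet{peyre2016gromov} and used as a black box (the closest in-paper computation, the expansion in the proof of Lemma~\ref{lem: decomp_GW}, takes the proposition as input)---but your entrywise substitution, the use of the two marginal constraints $\mtx T^*\mtx 1_{N_2}=\mtx m_1$ and $(\mtx T^*)^\intercal\mtx 1_{N_1}=\mtx m_2$, and the identification of the bilinear term as $h_1(\mtx S_1)\mtx T^* h_2(\mtx S_2)^\intercal$ constitute exactly the standard derivation. Your remark about the transpose placement and the implicit symmetry of $\mtx S_1,\mtx S_2$ is a correct and worthwhile point of care that the cited statement glosses over.
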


Clearly, for the squared cost $L(a, b) = (a-b)^2$, we may take $f_1(a) = a^2$, $f_2(b) = b^2$, $h_1(a) = a$, and $h_2(b) = 2b$.

We start the analysis by first bounding the distance between $G$ and $G^{(c)}$.

\begin{theorem}[Single graph]
\label{thm: self-coarsened}
Consider a graph $G$ with positive semi-definite (PSD) similarity matrix $\mtx S$ and diagonal mass matrix $\mtx W$ and similarly the coarsened graph $G^{(c)}$. Let $\lambda_{1}\geq \lambda_{2}\geq  \cdots \geq \lambda_{N}$ be the sorted eigenvalues of $\mtx U = \mtx W^\frac{1}{2}\mtx S\mtx W^\frac{1}{2}$ and $\lambda^{(c)}_{1} \geq \cdots \geq \lambda^{(c)}_{n}$ be the sorted eigenvalues of $\mtx U^{(c)} = \mtx C_{w}\mtx U\mtx C_{w}^\intercal$. Then,
\begin{align}
\label{eqn:self-coarsened-bd}
\GW_2^2(G, G^{(c)}) \leq \lambda_{N-n+1} \sum_{i=1}^{n} \paren{\lambda_{i} - \lambda^{(c)}_{i}} + C_{\mtx U, n},
\end{align}
where $C_{\mtx U, n} = \sum_{i=1}^{n}\lambda_{i}(\lambda_{i} - \lambda_{N - n + i}) + \sum_{i=n+1}^{N}\lambda_{i}^2$ is non-negative and is independent of coarsening.
\end{theorem}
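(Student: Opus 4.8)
The plan is to bound $\GW_2^2(G,G^{(c)})$ by evaluating the quadratic objective of \eqref{eqn:gwd} at one explicit feasible coupling and then relaxing the resulting expression. The coupling I would use is $\mtx T\defeq\mtx W\mtx C_p^\intercal$, which moves the entire mass $m_i$ of node $v_i$ to the supernode of its block. Since each node lies in exactly one block, $\mtx T\mtx 1_n=\mtx W\mtx 1_N=\mtx m$ and $\mtx T^\intercal\mtx 1_N=\mtx C_p\mtx m=(c_1,\dots,c_n)^\intercal$, the mass vector of $G^{(c)}$; hence $\mtx T$ is feasible and $\GW_2^2(G,G^{(c)})\le\dotp{\mtx M(\mtx T)}{\mtx T}$, the objective of \eqref{eqn:gwd} at $\mtx T$.

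Next I would evaluate $\dotp{\mtx M(\mtx T)}{\mtx T}$ in closed form. By \cref{prop: decompose_loss} with $f_1(a)=f_2(a)=a^2$, $h_1(a)=a$, $h_2(a)=2a$, it splits into a marginal term for $G$, a marginal term for $G^{(c)}$, and a cross term $-2\,\mathrm{tr}(\mtx T^\intercal\mtx S\mtx T\,\mtx S^{(c)})$. The marginal terms are $\|\mtx U\|_F^2=\sum_{i=1}^N\lambda_i^2$ and $\|\mtx U^{(c)}\|_F^2=\sum_{i=1}^n(\lambda^{(c)}_i)^2$, using $\mtx U=\mtx W^{1/2}\mtx S\mtx W^{1/2}$ together with the identity $\mtx U^{(c)}=\mathrm{diag}(c_i^{1/2})\,\mtx S^{(c)}\,\mathrm{diag}(c_i^{1/2})$ (obtained by comparing $\mtx C_w\mtx U\mtx C_w^\intercal$ with $\bar{\mtx C}_w\mtx S\bar{\mtx C}_w^\intercal$, since $\mtx C_w=\mathrm{diag}(c_i^{-1/2})\mtx C_p\mtx W^{1/2}$). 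For the cross term, substituting $\mtx T=\mtx W\mtx C_p^\intercal$ and $\bar{\mtx C}_w=\mathrm{diag}(c_i^{-1})\mtx C_p\mtx W$ gives $\mtx T^\intercal\mtx S\mtx T=\mtx C_p\mtx W\mtx S\mtx W\mtx C_p^\intercal=\mathrm{diag}(c_i)\,\mtx S^{(c)}\,\mathrm{diag}(c_i)$, whence $\mathrm{tr}(\mtx T^\intercal\mtx S\mtx T\,\mtx S^{(c)})=\mathrm{tr}\big((\mathrm{diag}(c_i^{1/2})\,\mtx S^{(c)}\,\mathrm{diag}(c_i^{1/2}))^2\big)=\mathrm{tr}\big((\mtx U^{(c)})^2\big)=\|\mtx U^{(c)}\|_F^2$. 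Collecting terms, $\GW_2^2(G,G^{(c)})\le\|\mtx U\|_F^2-\|\mtx U^{(c)}\|_F^2=\sum_{i=1}^N\lambda_i^2-\sum_{i=1}^n(\lambda^{(c)}_i)^2$.

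Finally I would relax this to \eqref{eqn:self-coarsened-bd}. Because $\mtx C_w\mtx C_w^\intercal=\mtx I_n$, $\mtx U^{(c)}=\mtx C_w\mtx U\mtx C_w^\intercal$ is a compression of $\mtx U$ onto an $n$-dimensional subspace, so Cauchy interlacing gives $\lambda_{N-n+i}\le\lambda^{(c)}_i\le\lambda_i$ for $i\in[n]$, and PSD-ness of $\mtx S$ makes all $\lambda_i\ge0$. Writing the bound as $\sum_{i=1}^n\big(\lambda_i^2-(\lambda^{(c)}_i)^2\big)+\sum_{i=n+1}^N\lambda_i^2$, it suffices to show, for each $i\in[n]$, that $\lambda_i^2-(\lambda^{(c)}_i)^2\le\lambda_{N-n+1}(\lambda_i-\lambda^{(c)}_i)+\lambda_i(\lambda_i-\lambda_{N-n+i})$; cancelling $\lambda_i^2$, this reduces to $\lambda^{(c)}_i(\lambda_{N-n+1}-\lambda^{(c)}_i)\le\lambda_i(\lambda_{N-n+1}-\lambda_{N-n+i})$, which holds because when $\lambda^{(c)}_i\ge\lambda_{N-n+1}$ the left side is $\le0$ while the right side is $\ge0$, and otherwise $0\le\lambda^{(c)}_i\le\lambda_i$ and $0\le\lambda_{N-n+1}-\lambda^{(c)}_i\le\lambda_{N-n+1}-\lambda_{N-n+i}$. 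Summing over $i$ and adding back $\sum_{i=n+1}^N\lambda_i^2$ yields the stated inequality; that $C_{\mtx U,n}\ge0$ is immediate since $\lambda_i\ge\lambda_{N-n+i}\ge0$, and $C_{\mtx U,n}$ plainly depends only on the spectrum of $\mtx U$ and on $n$.

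The main obstacle is the bookkeeping in the middle step: carrying the diagonal rescalings $\mtx W^{1/2}$ and $\mathrm{diag}(c_i^{\pm1/2})$ through the products so that the cross-term trace collapses \emph{exactly} to $\|\mtx U^{(c)}\|_F^2$. This exact cancellation is precisely what leaves a bound depending only on the two spectra (and, once combined over a pair of graphs, only on the respective spectrum changes). The feasibility check in the first step and the interlacing-plus-case-split in the last step are routine.
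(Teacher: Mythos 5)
Your proposal is correct and follows essentially the same route as the paper: it evaluates the $\GW_2^2$ objective at the explicit clustering coupling $\mtx W\mtx C_p^\intercal$ (the paper's $\wt{\mtx T}_{co}$, up to transpose), exploits the exact identity that the cross term collapses to $\Tr[(\mtx U^{(c)})^2]$ so that the bound becomes $\Tr(\mtx U^2)-\Tr[(\mtx U^{(c)})^2]=\sum_{i=1}^N\lambda_i^2-\sum_{i=1}^n(\lambda_i^{(c)})^2$, and then relaxes via the Poincar\'e separation theorem. Your per-index case split in the last step is a harmless reorganization of the paper's chain of inequalities $(\lambda_i^{(c)})^2\ge\lambda_i^{(c)}\lambda_{N-n+i}$ followed by $\lambda_{N-n+i}\le\lambda_{N-n+1}$, so no substantive difference remains.
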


\begin{remark}
(\rom 1) The bound is tight when $n=N$ because the right-hand side is zero in this case.
(\rom 2) The choice of coarsening only affects the spectral difference
\begin{align}
\label{eqn:delta}
\Delta \defeq \sum_{i=1}^n(\lambda_{i} - \lambda_{i}^{(c)}),
\end{align}
because $C_{\mtx U, n}$ is independent of it. Each term $\lambda_{i} - \lambda_{i}^{(c)}$ in $\Delta$ is non-negative due to the Poincar\'e separation theorem (see \Cref{sec:interlacing}).
(\rom 3) $\Delta$ is a generalization of the spectral distance proposed by \citet{jin2020graph}, because our matrix $\mtx U$ is not necessarily the normalized Laplacian. For additional discussions, see \Cref{app:spectral_distance}.
(\rom 4) When $\mtx U$ is taken as the normalized Laplacian, our bound is advantageous over the bound established by \citet{jin2020graph} in the sense that $\Delta$ is the only term impacted by coarsening and that no assumptions on the $K$-means cost are imposed.
\end{remark}

We now bound the difference of distances. The following theorem suggests that the only terms dependent on coarsening are $\Delta_1$ and $\Delta_2$, counterparts of $\Delta$ in~\Cref{thm: self-coarsened}, for graphs $G_1$ and $G_2$ respectively.

\begin{theorem}\label{thm: coarsen12}
Given a pair of graphs $G_1$ and $G_2$, we extend all notations in \Cref{thm: self-coarsened} by adding subscripts $_1$ and $_2$ respectively for $G_1$ and $G_2$. We denote the optimal transport plan induced by $\GW_2(G_1, G_2)$ as $\mtx T^*$ and let the normalized counterpart be $\mtx P = \mtx W_1^{-\frac12} \mtx T^* \mtx W_2^{-\frac12}$. Additionally, we define $\mtx V_1 \defeq \mtx P \mtx W_2^{\frac12} \mtx S_2 \mtx W_2^{\frac12} \mtx P^\intercal$ with eigenvalues $\nu_{1,1} \geq \nu_{1,2}\geq \cdots \geq \nu_{1,N_1}$ and $\mtx V_2 \defeq \mtx P^\intercal \mtx W_1^{\frac12} \mtx S_1 \mtx W_1^{\frac12} \mtx P$ with eigenvalues $\nu_{2,1} \geq \nu_{2,2}\geq \cdots \geq \nu_{2,N_2}$, both independent of coarsening. Then, $\abs{\GW_2^2(G_1^{(c)}, G_2^{(c)}) - \GW_2^2(G_1, G_2)}$ is upper bounded by
\begin{align*}
    & \max \Big\{ \lambda_{1, N_1-n_1+1} \cdot \Delta_1 + C_{\mtx U_1, n_1} \\
&\qquad\qquad + \lambda_{2, N_2-n_2+1} \cdot \Delta_2 + C_{\mtx U_2, n_2}, \\
    &\qquad 2 \cdot \left[ \nu_{1, N_1 - n_1 + 1} \cdot \Delta_1 + C_{\mtx U_1, \mtx V_1, n_1} \right. \\
&\qquad\qquad + \left. \nu_{2, N_2 - n_2 + 1} \cdot \Delta_2 + C_{\mtx U_2, \mtx V_2, n_2} \right] \Big\},
\end{align*}
where $C_{\mtx U_1, n_1}$ is from \Cref{thm: self-coarsened} and the other coarsening-independent terms $\mtx U_2, C_{\mtx U_2, n_2}, C_{\mtx U_2, \mtx V_2, n_2}, C_{\mtx U_2, \mtx V_2, n_2}$ are introduced in \Cref{lem: bd_diff_I3_further} in Appendix~\ref{ref:gw2cg}.
\end{theorem}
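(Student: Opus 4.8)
The plan is to push both squared distances through the ``decomposable'' representation of \Cref{prop: decompose_loss} and then reduce to the eigenvalue bookkeeping already developed for \Cref{thm: self-coarsened}. Taking $f_1(a)=a^2$, $f_2(b)=b^2$, $h_1(a)=a$, $h_2(b)=2b$ and evaluating $\dotp{\mtx M}{\mtx T}$ at the optimal plan $\mtx T^*$, using the marginal constraints $\mtx T^*\mtx 1=\mtx m_1$ and $(\mtx T^*)^\intercal\mtx 1=\mtx m_2$, yields the identity
\[
\GW_2^2(G_1,G_2)=\|\mtx U_1\|_F^2+\|\mtx U_2\|_F^2-2\dotp{\mtx U_1}{\mtx V_1},\qquad \dotp{\mtx U_1}{\mtx V_1}=\dotp{\mtx U_2}{\mtx V_2}=\mathrm{tr}((\mtx T^*)^\intercal\mtx S_1\mtx T^*\mtx S_2),
\]
where $\|\mtx U_k\|_F^2=\sum_i\lambda_{k,i}^2$ and the second equality follows by substituting $\mtx T^*=\mtx W_1^{\frac12}\mtx P\mtx W_2^{\frac12}$. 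The same identity holds for $\GW_2^2(G_1^{(c)},G_2^{(c)})$ with $\mtx U_k$ replaced by $\mtx U_k^{(c)}=\mtx C_{w,k}\mtx U_k\mtx C_{w,k}^\intercal$ and $\mtx T^*$ by the coarsened optimal plan. Subtracting, the difference splits into three blocks: (I) from the $\mtx U_1$ Frobenius term, (II) from the $\mtx U_2$ Frobenius term, and (III) from the cross trace term.

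Next I would bound the difference from each side, which is where the $\max$ comes from. For the direction $\GW_2^2(G_1^{(c)},G_2^{(c)})-\GW_2^2(G_1,G_2)$, I feed the coarsened problem the feasible plan $\mtx C_{p,1}\mtx T^*\mtx C_{p,2}^\intercal$ obtained by accumulating $\mtx T^*$ over the partitions; since this plan is used on the coarsened side while $\mtx T^*$ is exactly optimal on the original side, block (III) telescopes against a \emph{common} plan, and the identities relating $\mtx C_p$, $\bar{\mtx C}_w$, $\mtx C_w$ from \Cref{sec:scale} (in particular $\mtx W^{\frac12}\mtx C_p^\intercal\bar{\mtx C}_w=\mtx C_w^\intercal\mtx C_w\mtx W^{\frac12}$) collapse it into a trace in $\mtx U_k,\mtx U_k^{(c)},\mtx V_k$ alone. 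For the reverse direction, $\GW_2^2(G_1,G_2)-\GW_2^2(G_1^{(c)},G_2^{(c)})$, I keep the exact value on the original side and lower-bound the coarsened minimum spectrally: write it as $\|\mtx U_1^{(c)}\|_F^2+\|\mtx U_2^{(c)}\|_F^2$ minus twice a maximum of a trace, and bound that maximum by a von Neumann--type trace inequality. In both directions, blocks (I) and (II) are handled exactly as in \Cref{thm: self-coarsened}: since $\mtx C_{w,k}$ has orthonormal rows, $\mtx U_k^{(c)}$ is a compression of $\mtx U_k$, so the Poincar\'e separation theorem gives $\lambda_{k,N_k-n_k+i}\le\lambda_{k,i}^{(c)}\le\lambda_{k,i}$, whence
\[
\|\mtx U_k\|_F^2-\|\mtx U_k^{(c)}\|_F^2=\sum_{i=1}^{n_k}(\lambda_{k,i}-\lambda_{k,i}^{(c)})(\lambda_{k,i}+\lambda_{k,i}^{(c)})+\sum_{i=n_k+1}^{N_k}\lambda_{k,i}^2
\]
can be split to isolate the coarsening-dependent quantity $\lambda_{k,N_k-n_k+1}\,\Delta_k$ plus a spectrum-only remainder, the constant $C_{\mtx U_k,n_k}$. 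For block (III) the von Neumann estimate applied to the PSD matrices $\mtx U_k$, $\mtx U_k^{(c)}$, $\mtx V_k$ likewise isolates $\nu_{k,N_k-n_k+1}\,\Delta_k$ plus a spectrum-only remainder $C_{\mtx U_k,\mtx V_k,n_k}$, the factor $2$ in the second branch coming from the coefficient $-2$ in front of block (III). Collecting, the push-forward argument produces the first entry of the $\max$ and the spectral-lower-bound argument produces the second, so their maximum bounds $\abs{\GW_2^2(G_1^{(c)},G_2^{(c)})-\GW_2^2(G_1,G_2)}$.

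The main obstacle is block (III). Because the coarsened optimal plan is not the accumulation of $\mtx T^*$, one cannot telescope against a common plan in both directions simultaneously; the reverse direction must go through a necessarily looser spectral lower bound, and the delicate point is to arrange that estimate so that its only coarsening-dependent part is again $\Delta_k$---not a cross term mixing $G_1$ and $G_2$, and nothing depending on the coarsened plan. Making the von Neumann trace bounds land on exactly the same $\Delta_k$ that appears in blocks (I) and (II), so that the spectral differences of $\mtx U_1$ and $\mtx U_2$ are literally the only coarsening-dependent quantities in the whole estimate, is the computation I expect to require the most care, and it is precisely what forces the auxiliary matrices $\mtx V_1,\mtx V_2$ and their eigenvalues $\nu_{k,i}$ into the second branch; the bookkeeping for the coarsening-independent constants $C_{\mtx U_k,\mtx V_k,n_k}$ is then routine.
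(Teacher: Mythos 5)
Your skeleton---decompose both squared distances via \Cref{lem: decomp_GW} into $I_1+I_2-2I_3$, bound each signed difference separately, and take the max---matches the paper, and $\mtx C_{p,1}\mtx T^*\mtx C_{p,2}^\intercal$ is indeed the right feasible plan to feed the coarsened problem. However, the branches are mis-assigned and one of your two arguments does not deliver its branch. The first entry of the max (the $\lambda$-branch) comes from the direction $\GW_2^2(G_1,G_2)-\GW_2^2(G_1^{(c)},G_2^{(c)})$, and the mechanism there is \emph{not} a von Neumann lower bound on the coarsened minimum: it is that the coarsened optimal plan lifts (via $\bar{\mtx C}_{w,k}^\intercal$) to a feasible plan for the original pair achieving the same cross trace, so $I_3'\le I_3$, block (III) contributes $2(I_3'-I_3)\le 0$ and is simply dropped, and the branch is just $(I_1-I_1')+(I_2-I_2')$ bounded as in \Cref{thm: self-coarsened}. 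Your von Neumann estimate would bound $I_3'$ by something like $\sum_i\lambda^{(c)}_{1,i}\lambda^{(c)}_{2,i}$, which need not be $\le I_3$ (the optimal plan may align eigenvectors poorly), leaving an uncontrolled positive remainder absent from the stated bound. Conversely, the push-forward direction $\GW_2^2(G_1^{(c)},G_2^{(c)})-\GW_2^2(G_1,G_2)$ is the one that produces the \emph{second} entry (the $\nu$-branch with the factor $2$), not the first: there $(I_1'-I_1)+(I_2'-I_2)\le 0$, so no positive $\lambda$-terms can arise.

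More seriously, you flag the cross term in block (III) as the delicate point but do not resolve it, and it is the heart of the proof. Writing $\mtx U_k^\Pi \defeq \mtx\Pi_{w,k}\mtx U_k\mtx\Pi_{w,k}$, inserting the push-forward plan gives
\begin{align*}
I_3-I_3'\;\le\; \Tr\!\left[(\mtx U_1-\mtx U_1^\Pi)\mtx V_1\right]+\Tr\!\left[\mtx V_2(\mtx U_2-\mtx U_2^\Pi)\right]-\Tr\!\left[(\mtx U_1-\mtx U_1^\Pi)\mtx P(\mtx U_2-\mtx U_2^\Pi)\mtx P^\intercal\right],
\end{align*}
and the last term genuinely mixes the two coarsenings. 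The paper eliminates it by (a) proving the operator-norm bound $\norm{\mtx P}\le 1$ for the normalized plan (\Cref{lem:normalized_transport}), so Cauchy--Schwarz bounds it by $\norm{\mtx U_1-\mtx U_1^\Pi}_F\norm{\mtx U_2-\mtx U_2^\Pi}_F$, and (b) the exact identities $I_k-I_k'=\norm{\mtx U_k-\mtx U_k^\Pi}_F^2$, so that in $(I_1'-I_1)+(I_2'-I_2)+2(I_3-I_3')$ these pieces combine as $-a^2-b^2+2ab=-(a-b)^2\le 0$ and the cross term is absorbed by blocks (I) and (II); only the two clean traces survive and are passed to Ruhe's inequality (\Cref{lem: bd_diff_I3_further}) to yield $\nu_{k,N_k-n_k+1}\Delta_k+C_{\mtx U_k,\mtx V_k,n_k}$. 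Without the norm bound on $\mtx P$ and this completion of squares, the cross term remains coarsening-dependent and the claimed decoupling into $\Delta_1$ and $\Delta_2$ fails.
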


\begin{remark}
(\rom 1) The above bound takes into account both the differences between two graphs and their respective coarsenings. Even when the two graphs are identical $G_1 = G_2$, the bound can still be nonzero if the coarsened graphs $G_1^{(c)}$ and $G_2^{(c)}$ do not match.
(\rom 2) The decoupling of $\Delta_1$ and $\Delta_2$ offers an algorithmic benefit when one wants to optimize the differences of distances for all graph pairs in a dataset: it suffices to optimize the distance between $G$ and $G^{(c)}$ for each graph individually. This benefit is in line with the prior practice of directly applying spectrum-preserving coarsening methods for graph-level tasks \citep{jin2020graph, huang2021scaling}. Their experimental results and our numerical verification in \Cref{sec:exp} show that our bound is useful and it partly explains the empirical success of spectral graph coarsening.
\end{remark}

\subsection{Connections with truncated SVDs and Laplacian eigenmaps}
\label{sec:svdmap}

The spectral difference $\Delta=\sum_{i=1}^n \paren{\lambda_i - \lambda^{(c)}_i}$ in \Cref{thm: self-coarsened} can be used as the loss function for defining an optimal coarsening. Because $\Delta + \sum_{i=n+1}^N \lambda_i = \Tr(\mtx U) - \Tr(\mtx C_w \mtx U \mtx C_w^{\intercal})$ and because $\sum_{i=n+1}^N \lambda_i$ is independent of coarsening, minimizing $\Delta$ is equivalent to the following problem
\begin{align} \label{eqn:trace_loss}
\min_{{\mtx{C}_w}} \Tr\paren{\mtx U - \mtx\Pi_w \mtx U \mtx\Pi_w},
\end{align}
by recalling that $\mtx\Pi_w = \mtx C_w^{\intercal} \mtx C_w$ is a projector. The problem~\eqref{eqn:trace_loss} is a well-known trace optimization problem, which has rich connections with many spectral graph techniques~\citep{Kokiopoulou2011}.

\textbf{Connection with truncated SVD.}
At first sight, a small trace difference does not necessarily imply the two matrices are close. However, because $\mtx \Pi_w$ is a projector, the Poincar\'e separation theorem (see \Cref{sec:interlacing}) suggests that their eigenvalues can be close. A well-known example of using the trace to find optimal approximations is the truncated SVD, which retains the top singular values (equivalently eigenvalues for PSD matrices). The truncated SVD is a technique to find the optimal rank-$n$ approximation of a general matrix $\mtx U$ in terms of the spectral norm or the Frobenius norm, solving the problem
\begin{align*}
\min_{\mtx{C} \in \m O_n} \Tr\left(\mtx U - \mtx{C}^{\intercal} \mtx{C} \mtx{U} \mtx{C}^{\intercal} \mtx{C} \right),
\end{align*}
where $\m O_n$ is the class of all rank-$n$ orthogonal matrices. The projection coarsening matrix $\mtx C_w$ belongs to this class.

\textbf{Connection with Laplacian eigenmaps.}
The Laplacian eigenmap~\citep{belkin2003laplacian} is a manifold learning technique that learns an $n$-dimensional embedding for $N$ points connected by a graph. The embedding matrix $\mtx Y$ solves the trace problem $\min_{\mtx{Y} \mtx{D} \mtx{Y}^\intercal = \mtx I_n} \Tr \left(\mtx{Y} \mtx{L} \mtx{Y}^{\intercal} \right)$. If we let $\bar{\mtx Y} = \mtx Y \mtx D^{-\frac{1}{2}}$, the problem is equivalent to
\begin{align*}
\min_{\bar{\mtx{Y}} \in \m O_n} \Tr\left(\bar{\mtx{Y}} \mtx{\m L} \bar{\mtx{Y}}^{\intercal} \right).
\end{align*}
Different from truncated SVD, which uses the top singular vectors (eigenvectors) to form the solution, the Laplacian eigenmap uses the bottom eigenvectors of $\mtx{\m L}$ to form the solution.

\subsection{Signless Laplacians as similarity matrices}
\label{sec:similarity}

The theory established in \Cref{sec:gwd} is applicable to any PSD matrix $\mtx S$, but for practical uses we still have to define it. Based on the foregoing exposition, it is tempting to let $\mtx S$ be the Laplacian $\mtx L$ or the normalized Laplacian $\mtx{\m L}$, because they are PSD and they reveal important information of the graph structure~\citep{tsitsulin2018netlsd}. In fact, as a real example, \citet{chowdhury2021generalized} used the heat kernel as the similarity matrix to define the GW distance (and in this case the GW framework is related to spectral clustering). However, there are two problems that make such a choice troublesome. First, the (normalized) Laplacian is sparse but its off-diagonal, nonzero entries are negative. Thus, when $\mtx S$ is interpreted as a similarity matrix, a pair of nodes not connected by an edge becomes more similar than a pair of connected nodes, causing a dilema. Second, under the trace optimization framework, one is lured to intuitively look for solutions toward the bottom eigenvectors of $\mtx S$, like in Laplacian eigenmaps, an opposite direction to the true solution of~\eqref{eqn:trace_loss}, which is toward the top eigenvectors instead.

To resolve these problems, we propose to use the signless Laplacian~\citep{cvetkovic2007signless}, $\mtx D + \mtx A$, or its normalized version, $\mtx{I}_N + \mtx{D}^{-\frac12} \mtx{A} \mtx{D}^{-\frac12}$, as the similarity matrix $\mtx S$. These matrices are PSD and their nonzero entries are all positive. With such a choice, the spectral difference $\Delta$ in~\eqref{eqn:delta} has a fundamentally different behavior from the spectral distance used by \citet{jin2020graph} when defining the coarsening objective.

\section{Computational Equivalence between Graph Coarsening and Weighted Kernel \texorpdfstring{$K$}{K}-means}
\label{sec:equivalence}

By recalling that $\mtx U = \mtx W^{\frac12} \mtx S \mtx W^{\frac12}$ and that $\mtx \Pi_w = \mtx{C}_w^{\intercal} \mtx{C}_w$ is a projector, we rewrite the coarsening objective in~\eqref{eqn:trace_loss} as
\begin{align}\label{eqn:kmeans}
\Tr\paren{\mtx W^{\frac12} \mtx S \mtx W^{\frac12}} - \Tr\paren{\mtx{C}_w \mtx W^{\frac12} \mtx S \mtx W^{\frac12} \mtx{C}_w^{\intercal}}.
\end{align}
When $\mtx S$ is PSD, it could be interpreted as a kernel matrix such that there exists a set of feature vectors $\{\mtx \phi_i\}$ for which $\mtx S_{ij} = \langle \mtx \phi_i, \mtx \phi_j \rangle$ for $i,j = 1,\ldots,N$. Then, with simple algebraic manipulation, we see that~\eqref{eqn:kmeans} is equivalent to the well-known clustering objective:
\begin{equation}\label{eqn:kmeans_obj}
\sum_k \sum_{i \in \m P_k} m_i \| \mtx \phi_i - \mtx \mu_k \|^2
\quad\text{with}\quad
\mtx \mu_k = \sum_{i \in \m P_k} \frac{m_i}{c_k} \mtx \phi_i ,
\end{equation}
where the norm $\|\cdot\|$ is induced by the inner product $\langle \cdot, \cdot \rangle$. Here, $\mtx \mu_k$ is the weighted center of $\mtx \phi_i$ for all nodes $i$ belonging to cluster $k$. Hence, the weighted kernel $K$-means algorithm~\citep{dhillon2004kernel, dhillon2007weighted} can be applied to minimize~\eqref{eqn:kmeans_obj} by iteratively recomputing the centers and updating cluster assignments according to the distance of a node to all centers. We denote the squared distance between $\mtx \phi_i$ and any $\mtx \mu_j$ by $\mathrm{dist}^2_j(i)$, which is
\begin{align}
\mtx{S}_{ii} - 2 \underset{k \in \m P_j}{\sum} m_k \mtx{S}_{ki}/{c_j} 
+ \sum_{k_1, k_2 \in \m P_j} m_{k_1} m_{k_2} \mtx{S}_{k_1 k_2}/{c_j^2}.
\label{eqn:kmeans_dist}
\end{align}
The $K$-means algorithm is summarized in \Cref{alg:WKKmeans} and we call this method \emph{kernel graph coarsening} (KGC).

\begin{algorithm}[tb]
  \caption{Kernel graph coarsening (KGC).}\label{alg:WKKmeans}
    {\bfseries Input:} $\mtx S$: similarity matrix, $\mtx m$: node mass vector, $n$: number of clusters \\
    {\bfseries Output:} $\m P = \set{\m P_i}_{i=1}^n$: node partition \\
    \vspace{-1em}
  \begin{algorithmic}[1] %
    \FUNCTION[$\mtx S, \mtx m, n$]{KGC}
    \STATE Initialize the $n$ clusters: $\m P^{(0)} = \set{\m P_1^{(0)}, \dots, \m P_n^{(0)}}$; $c^{(0)}_j = \sum_{k \in \m P^{(0)}_j} m_k, \forall j \in [n]$.
    \STATE Set the counter of iterations $t=0$.
    \FOR{node $i=1$ {\bfseries to} $N$}
    \STATE Find its new cluster index by~\eqref{eqn:kmeans_dist}:
    \begin{align*}
        \mathrm{idx}(i) = \argmin_{j \in [n]} \mathrm{dist}^{(t)}_j(i).
    \end{align*}
    \ENDFOR
    \STATE Update the clusters: for all $j \in [n]$,
    \begin{align*}
        \m P_j^{(t+1)} = \set{i: \mathrm{idx}(i) = j}, c^{(t+1)}_j = \sum_{k \in \m P^{(t+1)}_j} m_k.
    \end{align*}
    \IF{the partition $\m P^{(t+1)}$ is invariant} 
        \STATE \textbf{return} $\m P^{(t+1)}$
    \ELSE
        \STATE Set $t = t+1$ and go to Line 4.
    \ENDIF
    \ENDFUNCTION
  \end{algorithmic}
\end{algorithm}

\textbf{KGC as a post-refinement of graph coarsening.}
KGC can be used as a standalone coarsening method. A potential drawback of this usage is that the $K$-means algorithm is subject to initialization and the clustering quality varies significantly sometimes. Even advanced initialization techniques, such as \texttt{$K$-means++}~\citep{arthur2006k}, are not gauranteed to work well in practice. Moreover, KGC, in the vanilla form, does not fully utilize the graph information (such as node features), unless additional engineering of the similarity matrix $\mtx S$ is conducted. Faced with these drawbacks, we suggest a simple alternative: initialize KGC by the output of another coarsening method and use KGC to improve it~\citep{scrucca2015improved}. In our experience, KGC almost always monotonically reduces the spectral difference $\Delta$ and improves the quality of the initial coarsening.

\textbf{Time complexity.}
Let $T$ be the number of $K$-means iterations. The time complexity of KGC is $\m O\paren{T \paren{M+Nn}}$, where $M = \mathrm{nnz}(\mtx S)$ is the number of nonzeros in $\mtx S$. See \Cref{sec:alg time} for the derivation of this cost and a comparison with the cost of spectral graph coarsening~\citep{jin2020graph}.

\section{Numerical Experiments}
\label{sec:exp}

We evaluate graph coarsening methods, including ours, on eight benchmark graph datasets: MUTAG~\citep{debnath1991structure, kriege2012subgraph}, PTC~\citep{helma2001predictive}, PROTEINS~\citep{borgwardt2005protein, schomburg2004brenda}, MSRC~\citep{neumann2016propagation}, IMDB~\citep{yanardag2015deep}, Tumblr~\citep{oettershagen2020temporal}, AQSOL~\citep{sorkun2019aqsoldb, dwivedi2020benchmarkgnns}, and ZINC~\citep{irwin2012zinc}. Information of these datasets is summarized in Table~\ref{tab:dataset}.

\begin{table}[h]
\centering
\caption{Summary of datasets. $\abs{V}$ and $\abs{E}$ denote the average number of nodes and edges, respectively. All graphs are treated undirected. $\mathrm{R}(1)$ represents a regression task.}
\label{tab:dataset}
\resizebox{0.9\columnwidth}{!}{
\begin{tabular}{ccccc}
\textbf{Dataset}   & \textbf{Classes} &\textbf{Size}  & $\mathbf{\abs{V}}$     & $\mathbf{\abs{E}}$    \\ 
\hline
MUTAG     & 2  & 188   &17.93 & 19.79 \\
PTC       & 2    & 344     &14.29 & 14.69 \\
PROTEINS  & 2 & 1113   &39.06 & 72.82 \\
MSRC     & 8 & 221   &39.31 & 77.35 \\
IMDB     & 2 & 1000   &19.77 & 96.53 \\
Tumblr    & 2  & 373   &53.11 & 199.78 \\
AQSOL    & $\mathrm{R}(1)$  & 9823   &17.57 & 17.86 \\
ZINC    & $\mathrm{R}(1)$  & 12000   &23.16 & 49.83 \\
\hline
\end{tabular}
}
\end{table}

We compare our method with the following baseline methods~\citep{loukas2019graph, jin2020graph}: \circled{1} \textbf{Variation Neighborhood Graph Coarsening (VNGC)}; \circled{2} \textbf{Variation Edge Graph Coarsening (VEGC)}; \circled{3} \textbf{Multilevel Graph Coarsening (MGC)}; and \circled{4} \textbf{Spectral Graph Coarsening (SGC)}. For our method, we consider the vanilla KGC, which uses \texttt{$K$-means++} for initialization, and the variant KGC(A), which takes the output of the best-performing baseline method for initialization. More implementation details are provided in Appendix~\ref{app: exp}.

\subsection{GW distance approximation}
\label{sec:exp_distMtx}

For a sanity check, we evaluate each coarsening method on the approximation of the GW distance, to support our motivation of minimizing the upper bound in Theorem~\ref{thm: self-coarsened}.

We first compare the average squared GW distance, by using the normalized signless Laplacian $\mtx{I}_N + \mtx{D}^{-\frac12} \mtx{A} \mtx{D}^{-\frac12}$ as the similarity matrix $\mtx{S}$ (see Section~\ref{sec:similarity}). We vary the coarsened graph size $n = \lceil c * N \rceil$ for $c = 0.3, \cdots, 0.9$. For each graph in PTC and IMDB, we compute $\GW_2^2(G, G^{(c)})$ and report the average in Table~\ref{tab:avg_gw}, Appendix~\ref{app:gw_distMtx}. We obtain similar observations across the two datasets. In particular, KGC and KGC(A) outperform baselines. When $c$ is small, it is harder for $K$-means clustering to find the best clustering plan and hence KGC(A) works better. When $c$ gets larger, KGC can work better, probably because the baseline outputs are poor intiializations.

We then report the average gap between the left- and right-hand sides of the bound~\eqref{eqn:self-coarsened-bd} in Table~\ref{tab:avg_gap}, Appendix~\ref{app:gw_distMtx}. For all methods, including the baselines, the gap is comparable to the actual squared distance shown in Table~\ref{tab:avg_gw}, showcasing the quality of the bound. Moreover, the gap decreases when $c$ increases, as expected.

We next compute the matrix of GW distances, $\mtx Z$ (before coarsening) and $\mtx Z^{(c)}$ (after coarsening), and compute the change $\|\mtx Z - \mtx Z^{(c)}\|_F$. Following previous works \citep{pmlr-v32-chan14,Xu2020LearningGV}, we set $n = \lfloor N_{\mathrm{max}} / \log(N_{\mathrm{max}}) \rfloor$. The similarity matrix for the coarsened graph uses the averaging coarsening matrix as advocated in Section~\ref{sec:scale}; that is, $\mtx S^{(c)} = \bar{\mtx C}_w \paren{\mtx{I}_N + \mtx{D}^{-\frac12} \mtx{A} \mtx{D}^{-\frac12}} \bar{\mtx C}_w^\intercal$. Additionally, we use a variant of the similarity matrix, $\mtx S^{(c)} = \mtx I_n + \paren{\mtx D^{(c)}}^{-\frac12} \mtx A^{(c)} \paren{\mtx D^{(c)}}^{-\frac12}$, resulting from the projection coarsening matrix, for comparison.

\begin{table}[t]
\caption{Change of the matrix of GW distances (computed as Frobenius norm error) and coarsening time. Dataset: PTC. Results are averged over 10 runs. MGC is a deterministic method and therefore standard deviation is 0. KGC(A) is initialized with the MGC output.}
\label{tab:gwDistMtx}
\begin{center}
\resizebox{\columnwidth}{!}{
\begin{tabular}{c|ccc}
\textbf{Coars.\ Mat.} &\textbf{Methods}  &\textbf{Frob.\ Error}$\downarrow$ &\textbf{Time}$\downarrow$ \\
\hline
\multirow{4}{*}{Projection} &VNGC             & $182.85\pm0.02$  & $6.38\pm0.01$ \\
&VEGC             & $54.81\pm0.02$  & $3.81\pm0.$ \\
&MGC             & $13.69\pm0.$  & $6.71\pm0.01$ \\
&SGC             & $12.41\pm0.04$  & $30.24\pm0.07$ \\
\hline
\multirow{6}{*}{Averaging} &VNGC             & $17.34\pm0.01$  & $6.55\pm0.18$ \\
&VEGC             & $9.22\pm0.02$  & $3.75\pm0.01$ \\
&MGC             & $5.31\pm0.$  & $6.59\pm0.02$ \\
&SGC             & $6.06\pm0.02$  & $28.06\pm0.10$ \\
&KGC             & $\mathbf{4.45\pm0.03}$  & $\mathbf{1.34\pm0.33}$ \\
&KGC(A)             & $5.28\pm0.$  & $\mathbf{0.27\pm0.}$ \\
\hline
\end{tabular}
}
\end{center}
\end{table}

Table~\ref{tab:gwDistMtx} summarizes the results for PTC. It confirms that using ``Averaging'' is better than using ``Projection'' for the coarsening matrix. Additionally, KGC(A) initialized with MGC (the best baseline) induces a significantly small change (though no better than the change caused by KGC), further verifying that minimizing the loss~\eqref{eqn:trace_loss} will lead to a small change in the GW distance. The runtime reported in the table confirms the analysis in Section~\ref{sec:equivalence}, showing that KGC is efficient. Moreover, the runtime of KGC(A) is nearly negligible compared with that of the baseline method used for initializing KGC(A).

\subsection{Laplacian spectrum preservation}
\label{sec:lap_spec_exp}

We evaluate the capability of the methods in preserving the Laplacian spectrum, by noting that the spectral objectives of the baseline methods differ from that of ours (see remarks after Theorem~\ref{thm: self-coarsened}). We coarsen each graph to its $10\%, 20\%, 30\%$, and $40\%$ size and evaluate the metric $\frac15 \sum_{i=1}^5 \frac{\lambda_i - \lambda_i^{(c)}}{\lambda_i}$ (the relative error of the first five largest eigenvalues). The experiment is conducted on Tumblr and the calculation of the error metric is repeated ten times.

In Figure~\ref{fig:time vs error}, we see that KGC has a comparable error to the variational methods VNGC and VEGC, while incurring a lower time cost, especially when the coarsening ratio is small. Additionally, KGC(A) consistently improves the initialization method SGC and attains the lowest error.

\begin{figure}[tbp]
    \centering
    \includegraphics[width=0.85\columnwidth]{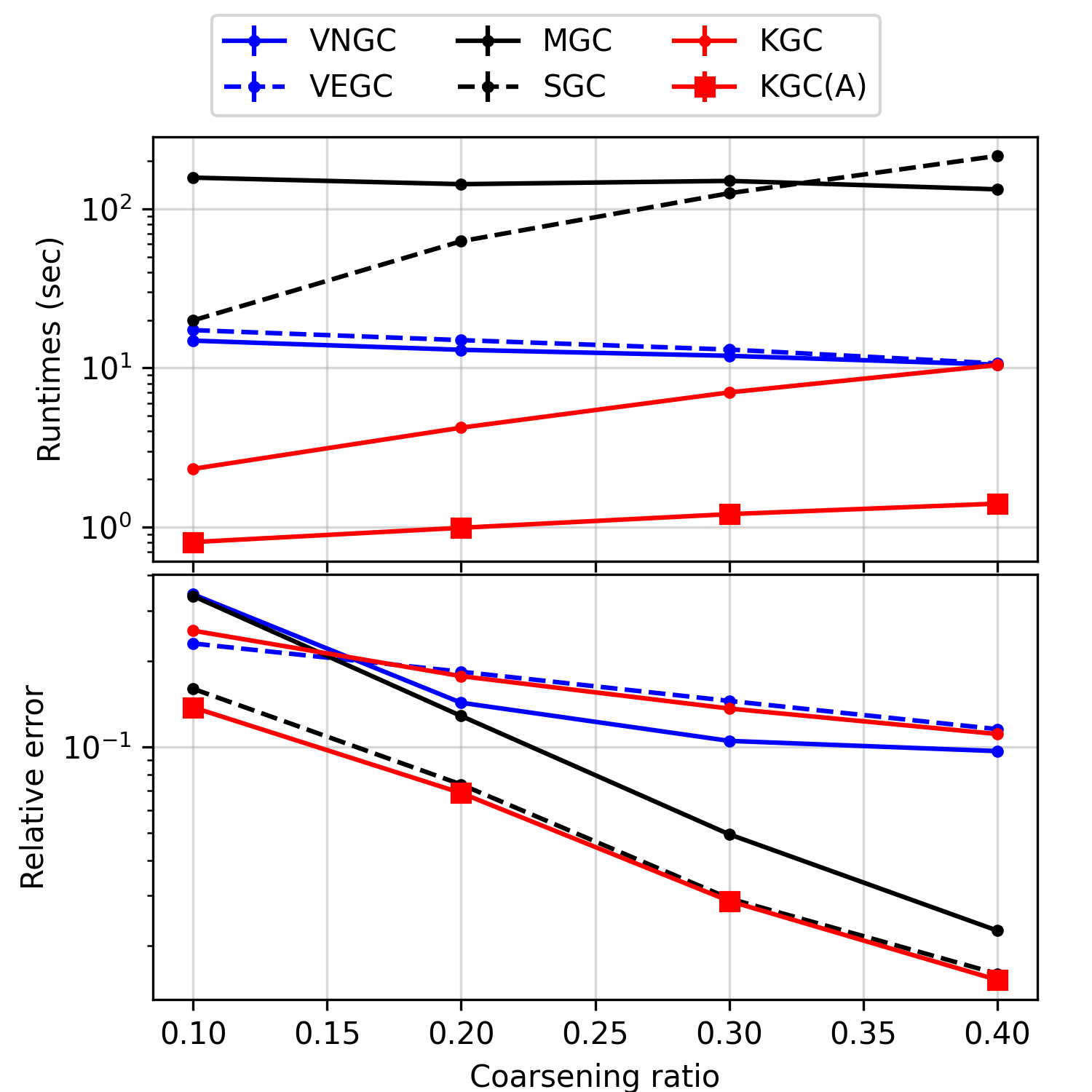}
    \caption{Runtime of coarsening methods and relative error for spectrum preservation on Tumblr. KGC(A) is initialized with SGC (the black dashed curve); its error curve (the red solid curve) is slightly below SGC.}
    \label{fig:time vs error}
\end{figure}

\begin{table*}[t]
\centering
\caption{Classification accuracy with coarsened graphs five times smaller than the original graphs.}
\label{tab:classification}
\resizebox{0.87\textwidth}{!}{
\begin{tabular}{ccccccc} 
\textbf{Datasets} & \textbf{MUTAG} & \textbf{PTC} & \textbf{PROTEINS} & \textbf{MSRC} & \textbf{IMDB} & \textbf{Tumblr} \\
\hline 
VNGC & $76.11\pm2.25$ & $56.69\pm2.52$ & $65.44\pm1.57$ & $14.92\pm1.57$  & $53.90\pm0.50$ & $50.43\pm2.62$ \\
VEGC & $84.59\pm2.02$ & $56.39\pm2.03$ & $64.08\pm1.11$& $16.80\pm2.15$  & $64.20\pm1.90$ & $48.26\pm1.71$ \\
MGC & $84.15\pm3.14$ & $54.66\pm3.59$ & $66.16\pm1.64$& $15.36\pm1.80$  & $\mathbf{69.50\pm1.42}$ & $50.14\pm2.67$ \\
SGC & $84.44\pm2.86$ & $53.79\pm2.28$ & $63.91\pm1.51$ & $16.76\pm2.50$ & $66.00\pm1.26$ & $48.53\pm2.35$ \\
\hline
KGC & $81.90\pm2.74$ & $\mathbf{61.58\pm2.49}$ & $63.45\pm0.83$ & $\mathbf{19.84\pm2.23}$ & $67.80\pm1.65$ & $52.52\pm2.81$ \\
KGC(A) & $\mathbf{86.23\pm2.69}$ & ${57.25\pm2.16}$ & $\mathbf{66.43\pm0.92}$ & $17.17\pm2.91$ & $69.20\pm1.37$ & $\mathbf{52.57\pm2.22}$ \\
\hline
EIG   & $85.61\pm1.69$ & $56.08\pm2.28$ & $64.35\pm1.43$ & $12.19\pm2.79$ & $68.70\pm1.71$ & $49.57\pm1.95$ \\
FULL   & $84.59\pm2.51$ & $54.37\pm2.12$ & $67.51\pm0.82$ & $23.58\pm2.50$ & $69.90\pm1.40$ & $52.57\pm3.36$ \\
\hline
\end{tabular}
}
\end{table*}

\subsection{Graph classification using Laplacian spectrum}
\label{sec:graph_cls}

We test the performance of the various coarsening methods for graph classification. We follow the setting of \citet{jin2020graph} and adopt the Network Laplacian Spectral Descriptor~\citep[NetLSD]{tsitsulin2018netlsd} along with a $1$-NN classifier as the classification method. We set $n=0.2N$. For evaluation, we select the best average accuracy of 10-fold cross validations among three different seeds. Additionally, we add two baselines, EIG and FULL, which respectively use the first $n$ eigenvalues and the full spectrum of $\mtx{\m L}$ in NetLSD. Similarly to before, we initialize KGC(A) with the best baseline.

Table~\ref{tab:classification} shows that preserving the spectrum does not necessarily leads to the best classification, since EIG and FULL, which use the original spectrum, are sometime outperformed by other methods. On the other hand, our proposed method, KGC or KGC(A), is almost always the best.

\subsection{Graph regression using GCNs}
\label{sec:exp regression}

We follow the setting of \citet{dwivedi2020benchmarkgnns} to perform graph regression on AQSOL and ZINC (see Appendix~\ref{app:exp_gcn} for details). For evaluation, we pre-coarsen the whole datasets, reduce the graph size by 70\%, and run GCN on the coarsened graphs. Table~\ref{tab:regression} suggests that KGC(A) initialized with VEGC (the best baseline) always returns the best test MAE. On ZINC, KGC sometimes suffers from the initialization, but its performance is still comparable to a reported MLP baseline~\citep[TestMAE $0.706\pm0.006$]{dwivedi2020benchmarkgnns}.

\begin{table}[t]
\caption{Graph regression results on AQSOL and ZINC. The asterisk symbol $*$ indicates the TestMAE improvement of KGC(A) over its VEGC initialization is statistically significant at the 95\% significance level in an paired $t$-test.}
\label{tab:regression}
\begin{subtable}[h]{\columnwidth}
\centering
\caption{AQSOL.}
\resizebox{0.95\columnwidth}{!}{
\begin{tabular}{r|ccc}
\textbf{Methods} & \textbf{TestMAE$\pm$s.d.}& \textbf{TrainMAE$\pm$s.d.} & \textbf{Epochs} \\
\hline
VNGC   & $1.403\pm0.005$ & $0.629\pm0.018$ & $135.75$ \\
VEGC  & $1.390\pm0.005$ & $0.702\pm0.003$ & $107.75$ \\
MGC   & $1.447\pm0.005$ & $0.628\pm0.012$ & $111.00$ \\
SGC   & $1.489\pm0.010$ & $0.676\pm0.021$ & $107.00$ \\
KGC & $1.389\pm0.015$ & $0.678\pm0.013$ & $112.00$  \\
KGC(A) & $\mathbf{1.383\pm0.005}^*$ & $0.657\pm0.013$ & $124.75$ \\
\hline
FULL & $1.372\pm0.020$ & $0.593\pm0.030$ & $119.50$ \\
\hline
\end{tabular}
}
\end{subtable}
\par\medskip
\hfill
\begin{subtable}[h]{\columnwidth}
\centering
\caption{ZINC.}
\resizebox{0.95\columnwidth}{!}{
\begin{tabular}{r|ccc}
\textbf{Methods} & \textbf{TestMAE$\pm$s.d.}& \textbf{TrainMAE$\pm$s.d.} & \textbf{Epochs} \\
\hline
VNGC   & $0.709\pm0.005$ & $0.432\pm0.012$ & $120.00$ \\
VEGC  & $0.646\pm0.001$ & $0.418\pm0.008$ & $138.25$ \\
MGC   & $0.677\pm0.002$ & $0.414\pm0.006$ & $112.50$ \\
SGC   & $0.649\pm0.007$ & $0.429\pm0.008$ & $111.75$ \\
KGC & $0.737\pm0.010$ & $0.495\pm0.012$ & $113.50$  \\
KGC(A) & $\mathbf{0.641\pm0.003}^*$ & $0.433\pm0.013$ & $126.50$ \\
\hline
FULL & $0.416\pm0.006$ & $0.313\pm0.011$ & $159.50$ \\
\hline
\end{tabular}
}
\end{subtable}
\end{table}

\section{Conclusions}

In this work, we propose a new perspective to study graph coarsening, by analyzing the distance of graphs on the GW space. We derive an upper bound on the change of the distance caused by coarsening, which depends on only the spectral difference $\Delta=\sum_{i=1}^n \paren{\lambda_i - \lambda^{(c)}_i}$. This bound in a way justifies the idea of preserving the spectral information as the main objective of graph coarsening, although our definition of ``spectral-preserving'' differs from prior spectral coarsening techniques. More importantly, we point out the equivalence between the bound and the objective of weighted kernel $K$-means clustering. This equivalence leads to a new coarsening method we termed KGC. Our experiment results validate the theoretical analysis, showing that KGC preserves the GW distance between graphs and improves the accuracy of graph-level classification and regression tasks.

\section*{Acknowledgements}

We wish to appreciate all the constructive and insightful comments from the anonymous reviewers.
Yun Yang's research was supported in part by U.S. NSF grant DMS-2210717.
Jie Chen acknowledges supports from the MIT-IBM Watson AI Lab.

\clearpage
\bibliography{graph_coarsening}

\begin{thebibliography}{59}
\providecommand{\natexlab}[1]{#1}
\providecommand{\url}[1]{\texttt{#1}}
\expandafter\ifx\csname urlstyle\endcsname\relax
  \providecommand{\doi}[1]{doi: #1}\else
  \providecommand{\doi}{doi: \begingroup \urlstyle{rm}\Url}\fi

\bibitem[Arthur \& Vassilvitskii(2006)Arthur and Vassilvitskii]{arthur2006k}
Arthur, D. and Vassilvitskii, S.
\newblock k-means++: The advantages of careful seeding.
\newblock Technical report, Stanford, 2006.

\bibitem[Belkin \& Niyogi(2003)Belkin and Niyogi]{belkin2003laplacian}
Belkin, M. and Niyogi, P.
\newblock Laplacian eigenmaps for dimensionality reduction and data
  representation.
\newblock \emph{Neural computation}, 15\penalty0 (6):\penalty0 1373--1396,
  2003.

\bibitem[Bellman(1997)]{bellman1997introduction}
Bellman, R.
\newblock \emph{Introduction to matrix analysis}.
\newblock SIAM, 1997.

\bibitem[Borgwardt et~al.(2005)Borgwardt, Ong, Sch{\"o}nauer, Vishwanathan,
  Smola, and Kriegel]{borgwardt2005protein}
Borgwardt, K.~M., Ong, C.~S., Sch{\"o}nauer, S., Vishwanathan, S., Smola,
  A.~J., and Kriegel, H.-P.
\newblock Protein function prediction via graph kernels.
\newblock \emph{Bioinformatics}, 21\penalty0 (suppl\_1):\penalty0 i47--i56,
  2005.

\bibitem[Briggs et~al.(2000)Briggs, Henson, and McCormick]{Briggs2000}
Briggs, W.~L., Henson, V.~E., and McCormick, S.~F.
\newblock \emph{A Multigrid Tutorial}.
\newblock Society for Industrial and Applied Mathematics, 2nd edition, 2000.

\bibitem[Cai et~al.(2021)Cai, Wang, and Wang]{DBLP:conf/iclr/CaiWW21}
Cai, C., Wang, D., and Wang, Y.
\newblock Graph coarsening with neural networks.
\newblock In \emph{9th International Conference on Learning Representations,
  {ICLR} 2021, Virtual Event, Austria, May 3-7, 2021}. OpenReview.net, 2021.

\bibitem[Chan \& Airoldi(2014)Chan and Airoldi]{pmlr-v32-chan14}
Chan, S. and Airoldi, E.
\newblock A consistent histogram estimator for exchangeable graph models.
\newblock In Xing, E.~P. and Jebara, T. (eds.), \emph{Proceedings of the 31st
  International Conference on Machine Learning}, volume~32 of \emph{Proceedings
  of Machine Learning Research}, pp.\  208--216, Bejing, China, 22--24 Jun
  2014. PMLR.
\newblock URL \url{https://proceedings.mlr.press/v32/chan14.html}.

\bibitem[Chen et~al.(2018)Chen, Perozzi, Hu, and Skiena]{Chen2018}
Chen, H., Perozzi, B., Hu, Y., and Skiena, S.
\newblock {HARP}: Hierarchical representation learning for networks.
\newblock In \emph{AAAI}, 2018.

\bibitem[Chen et~al.(2022)Chen, Saad, and Zhang]{chen2022graph}
Chen, J., Saad, Y., and Zhang, Z.
\newblock Graph coarsening: from scientific computing to machine learning.
\newblock \emph{SeMA Journal}, 79\penalty0 (1):\penalty0 187--223, 2022.

\bibitem[Chowdhury \& M{\'e}moli(2019)Chowdhury and
  M{\'e}moli]{chowdhury2019gromov}
Chowdhury, S. and M{\'e}moli, F.
\newblock The gromov--wasserstein distance between networks and stable network
  invariants.
\newblock \emph{Information and Inference: A Journal of the IMA}, 8\penalty0
  (4):\penalty0 757--787, 2019.

\bibitem[Chowdhury \& Needham(2021)Chowdhury and
  Needham]{chowdhury2021generalized}
Chowdhury, S. and Needham, T.
\newblock Generalized spectral clustering via gromov-wasserstein learning.
\newblock In \emph{International Conference on Artificial Intelligence and
  Statistics}, pp.\  712--720. PMLR, 2021.

\bibitem[Chung \& Langlands(1996)Chung and Langlands]{chung1996combinatorial}
Chung, F.~R. and Langlands, R.~P.
\newblock A combinatorial laplacian with vertex weights.
\newblock \emph{journal of combinatorial theory, Series A}, 75\penalty0
  (2):\penalty0 316--327, 1996.

\bibitem[Cvetkovi{\'c} et~al.(2007)Cvetkovi{\'c}, Rowlinson, and
  Simi{\'c}]{cvetkovic2007signless}
Cvetkovi{\'c}, D., Rowlinson, P., and Simi{\'c}, S.~K.
\newblock Signless laplacians of finite graphs.
\newblock \emph{Linear Algebra and its applications}, 423\penalty0
  (1):\penalty0 155--171, 2007.

\bibitem[Debnath et~al.(1991)Debnath, Lopez~de Compadre, Debnath, Shusterman,
  and Hansch]{debnath1991structure}
Debnath, A.~K., Lopez~de Compadre, R.~L., Debnath, G., Shusterman, A.~J., and
  Hansch, C.
\newblock Structure-activity relationship of mutagenic aromatic and
  heteroaromatic nitro compounds. correlation with molecular orbital energies
  and hydrophobicity.
\newblock \emph{Journal of medicinal chemistry}, 34\penalty0 (2):\penalty0
  786--797, 1991.

\bibitem[Defferrard et~al.(2016)Defferrard, Bresson, and
  Vandergheynst]{defferrard2016convolutional}
Defferrard, M., Bresson, X., and Vandergheynst, P.
\newblock Convolutional neural networks on graphs with fast localized spectral
  filtering.
\newblock In \emph{Advances in Neural Information Processing Systems 29: Annual
  Conference on Neural Information Processing Systems 2016, December 5-10,
  2016, Barcelona, Spain}, pp.\  3837--3845, 2016.

\bibitem[Dhillon et~al.(2004)Dhillon, Guan, and Kulis]{dhillon2004kernel}
Dhillon, I.~S., Guan, Y., and Kulis, B.
\newblock Kernel k-means: spectral clustering and normalized cuts.
\newblock In \emph{Proceedings of the tenth ACM SIGKDD international conference
  on Knowledge discovery and data mining}, pp.\  551--556, 2004.

\bibitem[Dhillon et~al.(2007)Dhillon, Guan, and Kulis]{dhillon2007weighted}
Dhillon, I.~S., Guan, Y., and Kulis, B.
\newblock Weighted graph cuts without eigenvectors a multilevel approach.
\newblock \emph{IEEE transactions on pattern analysis and machine
  intelligence}, 29\penalty0 (11):\penalty0 1944--1957, 2007.

\bibitem[Dobson \& Doig(2003)Dobson and Doig]{dobson2003distinguishing}
Dobson, P.~D. and Doig, A.~J.
\newblock Distinguishing enzyme structures from non-enzymes without alignments.
\newblock \emph{Journal of molecular biology}, 330\penalty0 (4):\penalty0
  771--783, 2003.

\bibitem[Dwivedi et~al.(2020)Dwivedi, Joshi, Luu, Laurent, Bengio, and
  Bresson]{dwivedi2020benchmarkgnns}
Dwivedi, V.~P., Joshi, C.~K., Luu, A.~T., Laurent, T., Bengio, Y., and Bresson,
  X.
\newblock Benchmarking graph neural networks.
\newblock \emph{arXiv preprint arXiv:2003.00982}, 2020.

\bibitem[Flamary et~al.(2021)Flamary, Courty, Gramfort, Alaya, Boisbunon,
  Chambon, Chapel, Corenflos, Fatras, Fournier, Gautheron, Gayraud, Janati,
  Rakotomamonjy, Redko, Rolet, Schutz, Seguy, Sutherland, Tavenard, Tong, and
  Vayer]{flamary2021pot}
Flamary, R., Courty, N., Gramfort, A., Alaya, M.~Z., Boisbunon, A., Chambon,
  S., Chapel, L., Corenflos, A., Fatras, K., Fournier, N., Gautheron, L.,
  Gayraud, N.~T., Janati, H., Rakotomamonjy, A., Redko, I., Rolet, A., Schutz,
  A., Seguy, V., Sutherland, D.~J., Tavenard, R., Tong, A., and Vayer, T.
\newblock Pot: Python optimal transport.
\newblock \emph{Journal of Machine Learning Research}, 22\penalty0
  (78):\penalty0 1--8, 2021.
\newblock URL \url{http://jmlr.org/papers/v22/20-451.html}.

\bibitem[Grattarola et~al.(2022)Grattarola, Zambon, Bianchi, and
  Alippi]{grattarola2022understanding}
Grattarola, D., Zambon, D., Bianchi, F.~M., and Alippi, C.
\newblock Understanding pooling in graph neural networks.
\newblock \emph{IEEE Transactions on Neural Networks and Learning Systems},
  2022.

\bibitem[Helma et~al.(2001)Helma, King, Kramer, and
  Srinivasan]{helma2001predictive}
Helma, C., King, R.~D., Kramer, S., and Srinivasan, A.
\newblock The predictive toxicology challenge 2000--2001.
\newblock \emph{Bioinformatics}, 17\penalty0 (1):\penalty0 107--108, 2001.

\bibitem[Hermsdorff \& Gunderson(2019)Hermsdorff and
  Gunderson]{bravo2019unifying}
Hermsdorff, G.~B. and Gunderson, L.~M.
\newblock A unifying framework for spectrum-preserving graph sparsification and
  coarsening.
\newblock In \emph{Advances in Neural Information Processing Systems 32: Annual
  Conference on Neural Information Processing Systems 2019, NeurIPS 2019,
  December 8-14, 2019, Vancouver, BC, Canada}, pp.\  7734--7745, 2019.

\bibitem[Hu et~al.(2020)Hu, Fey, Zitnik, Dong, Ren, Liu, Catasta, and
  Leskovec]{hu2020open}
Hu, W., Fey, M., Zitnik, M., Dong, Y., Ren, H., Liu, B., Catasta, M., and
  Leskovec, J.
\newblock Open graph benchmark: Datasets for machine learning on graphs.
\newblock \emph{Advances in neural information processing systems},
  33:\penalty0 22118--22133, 2020.

\bibitem[Huang et~al.(2021)Huang, Zhang, Xi, Liu, and Zhou]{huang2021scaling}
Huang, Z., Zhang, S., Xi, C., Liu, T., and Zhou, M.
\newblock Scaling up graph neural networks via graph coarsening.
\newblock In \emph{Proceedings of the 27th ACM SIGKDD Conference on Knowledge
  Discovery \& Data Mining}, pp.\  675--684, 2021.

\bibitem[Irwin et~al.(2012)Irwin, Sterling, Mysinger, Bolstad, and
  Coleman]{irwin2012zinc}
Irwin, J.~J., Sterling, T., Mysinger, M.~M., Bolstad, E.~S., and Coleman, R.~G.
\newblock Zinc: a free tool to discover chemistry for biology.
\newblock \emph{Journal of chemical information and modeling}, 52\penalty0
  (7):\penalty0 1757--1768, 2012.

\bibitem[Ivashkin \& Chebotarev(2016)Ivashkin and
  Chebotarev]{ivashkin2016logarithmic}
Ivashkin, V. and Chebotarev, P.
\newblock Do logarithmic proximity measures outperform plain ones in graph
  clustering?
\newblock In \emph{International Conference on Network Analysis}, pp.\
  87--105. Springer, 2016.

\bibitem[Jin et~al.(2021)Jin, Zhao, Zhang, Liu, Tang, and Shah]{jin2021graph}
Jin, W., Zhao, L., Zhang, S., Liu, Y., Tang, J., and Shah, N.
\newblock Graph condensation for graph neural networks.
\newblock \emph{arXiv preprint arXiv:2110.07580}, 2021.

\bibitem[Jin et~al.(2020)Jin, Loukas, and J{\'{a}}J{\'{a}}]{jin2020graph}
Jin, Y., Loukas, A., and J{\'{a}}J{\'{a}}, J.
\newblock Graph coarsening with preserved spectral properties.
\newblock In \emph{The 23rd International Conference on Artificial Intelligence
  and Statistics, {AISTATS} 2020, 26-28 August 2020, Online [Palermo, Sicily,
  Italy]}, volume 108 of \emph{Proceedings of Machine Learning Research}, pp.\
  4452--4462. {PMLR}, 2020.

\bibitem[Karypis \& Kumar(1999)Karypis and Kumar]{Karypis1999}
Karypis, G. and Kumar, V.
\newblock A fast and high quality multilevel scheme for partitioning irregular
  graphs.
\newblock \emph{SIAM Journal on Scientific Computing}, 20\penalty0
  (1):\penalty0 359--392, 1999.

\bibitem[Kipf \& Welling(2017)Kipf and Welling]{kipf2016semi}
Kipf, T.~N. and Welling, M.
\newblock Semi-supervised classification with graph convolutional networks.
\newblock In \emph{5th International Conference on Learning Representations,
  {ICLR} 2017, Toulon, France, April 24-26, 2017, Conference Track
  Proceedings}. OpenReview.net, 2017.

\bibitem[Kokiopoulou et~al.(2011)Kokiopoulou, Chen, and Saad]{Kokiopoulou2011}
Kokiopoulou, E., Chen, J., and Saad, Y.
\newblock Trace optimization and eigenproblems in dimension reduction methods.
\newblock \emph{Numerical Linear Algebra with Applications}, 18\penalty0
  (3):\penalty0 565--602, 2011.

\bibitem[Kriege \& Mutzel(2012)Kriege and Mutzel]{kriege2012subgraph}
Kriege, N.~M. and Mutzel, P.
\newblock Subgraph matching kernels for attributed graphs.
\newblock In \emph{Proceedings of the 29th International Conference on Machine
  Learning, {ICML} 2012, Edinburgh, Scotland, UK, June 26 - July 1, 2012}.
  icml.cc / Omnipress, 2012.

\bibitem[Liu et~al.(2018)Liu, Safavi, Dighe, and Koutra]{10.1145/3186727}
Liu, Y., Safavi, T., Dighe, A., and Koutra, D.
\newblock Graph summarization methods and applications: A survey.
\newblock \emph{ACM Comput. Surv.}, 51\penalty0 (3), jun 2018.
\newblock ISSN 0360-0300.
\newblock \doi{10.1145/3186727}.
\newblock URL \url{https://doi.org/10.1145/3186727}.

\bibitem[Loukas(2019)]{loukas2019graph}
Loukas, A.
\newblock Graph reduction with spectral and cut guarantees.
\newblock \emph{Journal of Machine Learning Research}, 20\penalty0
  (116):\penalty0 1--42, 2019.

\bibitem[Loukas \& Vandergheynst(2018)Loukas and
  Vandergheynst]{loukas2018spectrally}
Loukas, A. and Vandergheynst, P.
\newblock Spectrally approximating large graphs with smaller graphs.
\newblock In \emph{Proceedings of the 35th International Conference on Machine
  Learning, {ICML} 2018, Stockholmsm{\"{a}}ssan, Stockholm, Sweden, July 10-15,
  2018}, volume~80 of \emph{Proceedings of Machine Learning Research}, pp.\
  3243--3252. {PMLR}, 2018.

\bibitem[M{\'e}moli(2011)]{memoli2011gromov}
M{\'e}moli, F.
\newblock Gromov--wasserstein distances and the metric approach to object
  matching.
\newblock \emph{Foundations of computational mathematics}, 11\penalty0
  (4):\penalty0 417--487, 2011.

\bibitem[Minka(2000)]{minka2000old}
Minka, T.~P.
\newblock Old and new matrix algebra useful for statistics, 2000.
\newblock \url{https://tminka.github.io/papers/matrix/}.

\bibitem[Morris et~al.(2020)Morris, Kriege, Bause, Kersting, Mutzel, and
  Neumann]{Morris+2020}
Morris, C., Kriege, N.~M., Bause, F., Kersting, K., Mutzel, P., and Neumann, M.
\newblock Tudataset: A collection of benchmark datasets for learning with
  graphs.
\newblock In \emph{ICML 2020 Workshop on Graph Representation Learning and
  Beyond (GRL+ 2020)}, 2020.

\bibitem[Neumann et~al.(2016)Neumann, Garnett, Bauckhage, and
  Kersting]{neumann2016propagation}
Neumann, M., Garnett, R., Bauckhage, C., and Kersting, K.
\newblock Propagation kernels: efficient graph kernels from propagated
  information.
\newblock \emph{Machine Learning}, 102\penalty0 (2):\penalty0 209--245, 2016.

\bibitem[Oettershagen et~al.(2020)Oettershagen, Kriege, Morris, and
  Mutzel]{oettershagen2020temporal}
Oettershagen, L., Kriege, N.~M., Morris, C., and Mutzel, P.
\newblock Temporal graph kernels for classifying dissemination processes.
\newblock In \emph{Proceedings of the 2020 {SIAM} International Conference on
  Data Mining, {SDM} 2020, Cincinnati, Ohio, USA, May 7-9, 2020}, pp.\
  496--504. {SIAM}, 2020.
\newblock \doi{10.1137/1.9781611976236.56}.

\bibitem[Peyr{\'{e}} et~al.(2016)Peyr{\'{e}}, Cuturi, and
  Solomon]{peyre2016gromov}
Peyr{\'{e}}, G., Cuturi, M., and Solomon, J.
\newblock Gromov-wasserstein averaging of kernel and distance matrices.
\newblock In \emph{Proceedings of the 33nd International Conference on Machine
  Learning, {ICML} 2016, New York City, NY, USA, June 19-24, 2016}, volume~48
  of \emph{{JMLR} Workshop and Conference Proceedings}, pp.\  2664--2672.
  JMLR.org, 2016.

\bibitem[Ron et~al.(2011)Ron, Safro, and Brandt]{Dorit2011}
Ron, D., Safro, I., and Brandt, A.
\newblock Relaxation-based coarsening and multiscale graph organization.
\newblock \emph{Multiscale Modeling \& Simulation}, 9\penalty0 (1):\penalty0
  407--423, 2011.

\bibitem[Ruge \& St{\"u}ben(1987)Ruge and St{\"u}ben]{ruge1987algebraic}
Ruge, J.~W. and St{\"u}ben, K.
\newblock Algebraic multigrid.
\newblock In \emph{Multigrid methods}, pp.\  73--130. SIAM, 1987.

\bibitem[Ruhe(1970)]{ruhe1970perturbation}
Ruhe, A.
\newblock Perturbation bounds for means of eigenvalues and invariant subspaces.
\newblock \emph{BIT Numerical Mathematics}, 10\penalty0 (3):\penalty0 343--354,
  1970.

\bibitem[Saad(2003)]{saad2003}
Saad, Y.
\newblock \emph{Iterative Methods for Sparse Linear Systems}.
\newblock Society for Industrial and Applied Mathematics, 2nd edition, 2003.

\bibitem[Schomburg et~al.(2004)Schomburg, Chang, Ebeling, Gremse, Heldt, Huhn,
  and Schomburg]{schomburg2004brenda}
Schomburg, I., Chang, A., Ebeling, C., Gremse, M., Heldt, C., Huhn, G., and
  Schomburg, D.
\newblock Brenda, the enzyme database: updates and major new developments.
\newblock \emph{Nucleic acids research}, 32\penalty0 (suppl\_1):\penalty0
  D431--D433, 2004.

\bibitem[Scrucca \& Raftery(2015)Scrucca and Raftery]{scrucca2015improved}
Scrucca, L. and Raftery, A.~E.
\newblock Improved initialisation of model-based clustering using gaussian
  hierarchical partitions.
\newblock \emph{Advances in data analysis and classification}, 9\penalty0
  (4):\penalty0 447--460, 2015.

\bibitem[Shi \& Malik(2000)Shi and Malik]{shi2000normalized}
Shi, J. and Malik, J.
\newblock Normalized cuts and image segmentation.
\newblock \emph{IEEE Transactions on pattern analysis and machine
  intelligence}, 22\penalty0 (8):\penalty0 888--905, 2000.

\bibitem[Sorkun et~al.(2019)Sorkun, Khetan, and Er]{sorkun2019aqsoldb}
Sorkun, M.~C., Khetan, A., and Er, S.
\newblock Aqsoldb, a curated reference set of aqueous solubility and 2d
  descriptors for a diverse set of compounds.
\newblock \emph{Scientific data}, 6\penalty0 (1):\penalty0 1--8, 2019.

\bibitem[Titouan et~al.(2019)Titouan, Courty, Tavenard, and
  Flamary]{titouan2019optimal}
Titouan, V., Courty, N., Tavenard, R., and Flamary, R.
\newblock Optimal transport for structured data with application on graphs.
\newblock In \emph{International Conference on Machine Learning}, pp.\
  6275--6284. PMLR, 2019.

\bibitem[Tsitsulin et~al.(2018)Tsitsulin, Mottin, Karras, Bronstein, and
  M{\"{u}}ller]{tsitsulin2018netlsd}
Tsitsulin, A., Mottin, D., Karras, P., Bronstein, A.~M., and M{\"{u}}ller, E.
\newblock Netlsd: Hearing the shape of a graph.
\newblock In \emph{Proceedings of the 24th {ACM} {SIGKDD} International
  Conference on Knowledge Discovery {\&} Data Mining, {KDD} 2018, London, UK,
  August 19-23, 2018}, pp.\  2347--2356. {ACM}, 2018.
\newblock \doi{10.1145/3219819.3219991}.

\bibitem[Vincent-Cuaz et~al.(2022)Vincent-Cuaz, Flamary, Corneli, Vayer, and
  Courty]{vincent-cuaz2022semirelaxed}
Vincent-Cuaz, C., Flamary, R., Corneli, M., Vayer, T., and Courty, N.
\newblock Semi-relaxed gromov-wasserstein divergence and applications on
  graphs.
\newblock In \emph{International Conference on Learning Representations}, 2022.
\newblock URL \url{https://openreview.net/forum?id=RShaMexjc-x}.

\bibitem[Xu et~al.(2019{\natexlab{a}})Xu, Luo, and Carin]{xu2019scalable}
Xu, H., Luo, D., and Carin, L.
\newblock Scalable gromov-wasserstein learning for graph partitioning and
  matching.
\newblock \emph{Advances in neural information processing systems}, 32,
  2019{\natexlab{a}}.

\bibitem[Xu et~al.(2019{\natexlab{b}})Xu, Luo, Zha, and Carin]{xu2019gromov}
Xu, H., Luo, D., Zha, H., and Carin, L.
\newblock Gromov-wasserstein learning for graph matching and node embedding.
\newblock In \emph{Proceedings of the 36th International Conference on Machine
  Learning, {ICML} 2019, 9-15 June 2019, Long Beach, California, {USA}},
  volume~97 of \emph{Proceedings of Machine Learning Research}, pp.\
  6932--6941. {PMLR}, 2019{\natexlab{b}}.

\bibitem[Xu et~al.(2020)Xu, Luo, Carin, and Zha]{Xu2020LearningGV}
Xu, H., Luo, D., Carin, L., and Zha, H.
\newblock Learning graphons via structured gromov-wasserstein barycenters.
\newblock In \emph{AAAI Conference on Artificial Intelligence}, 2020.

\bibitem[Yanardag \& Vishwanathan(2015)Yanardag and
  Vishwanathan]{yanardag2015deep}
Yanardag, P. and Vishwanathan, S. V.~N.
\newblock Deep graph kernels.
\newblock In \emph{Proceedings of the 21th {ACM} {SIGKDD} International
  Conference on Knowledge Discovery and Data Mining, Sydney, NSW, Australia,
  August 10-13, 2015}, pp.\  1365--1374. {ACM}, 2015.
\newblock \doi{10.1145/2783258.2783417}.

\bibitem[Ying et~al.(2018)Ying, You, Morris, Ren, Hamilton, and
  Leskovec]{Ying2018}
Ying, R., You, J., Morris, C., Ren, X., Hamilton, W.~L., and Leskovec, J.
\newblock Hierarchical graph representation learning with differentiable
  pooling.
\newblock In \emph{NeurIPS}, 2018.

\bibitem[Zheng et~al.(2022)Zheng, Xiao, and Niu]{zheng2022brief}
Zheng, L., Xiao, Y., and Niu, L.
\newblock A brief survey on computational gromov-wasserstein distance.
\newblock \emph{Procedia Computer Science}, 199:\penalty0 697--702, 2022.

\end{thebibliography}
\bibliographystyle{icml2023}

\newpage
\appendix
\onecolumn

\section{List of notations}\label{app: list_notation}
\begin{table}[ht]
\centering
\label{tab: notation}
\begin{tabular}{r|c}
\textbf{Notations} & \textbf{Meaning}\\
\hline
$\mtx L (\mtx L^{(c)})$    & (coarsened) graph Laplacian matrix\\
$\mtx{\m L} (\mtx{\m L}^{(c)})$   & (coarsened) normalized graph Laplacian matrix\\
$G^{(c)}$   & coarsened graph\\
$\mtx D(\mtx D^{(c)})$ & (coarsened) degree matrix\\
$\mtx A(\mtx A^{(c)})$ & (coarsened) adjacency matrix\\
$\mtx C_p$ & membership matrix, coarsening matrix with entries $\in \set{0, 1}$ \\
$\mtx C_w$ & orthogonal coarsening matrix, a variant of coarsening matrix \\
$\bar{\mtx C}_w$ & weighted averaging matrix, a variant of coarsening matrix \\
$\mtx \Pi_w$ & projection matrix induced by $\mtx C_w$ \\
$\mtx W$ & diagonal node mass matrix\\
$\mtx S$($\mtx S^{(c)}$)& (coarsened) similarity matrix\\
\hline
\end{tabular}
\end{table}
We elaborate more about the family of coarsening matrices here. 
Specifically, we define the matrix $\mtx C_p\in\mb R^{n\times N}$ as
\begin{align*}
    \mtx C_p(k, i) = 
    \begin{cases}
    1 & v_i\in\m P_k\\
    0 & otherwise
    \end{cases}.
\end{align*}
Let $p_i = |\m P_i|$ be the number of nodes in the $i$-th partition $\m P_i$ and $n$ be the number of partitioning. 
Then, we define the weight matrix $\mtx W = \text{diag}(m_1,\cdots, m_N)$ and let $c_i = \sum_{j\in\m P_i}m_j$.
We can thus give the definition of the weighted averaging matrix as
\begin{align*}
    \bar{\mtx C}_w = 
    \begin{pmatrix}
        \frac{1}{c_1} & & & \\
        & \frac{1}{c_2} & &\\
        & & \cdots &\\
        & & & \frac{1}{c_n}
    \end{pmatrix}
    \mtx C_p \mtx W.
\end{align*}
and the orthogonal coarsening matrix
\begin{align*}
    \mtx C_w = 
    \begin{pmatrix}
        \sqrt{\frac{1}{c_1}} & & & \\
        & \sqrt{\frac{1}{c_2}} & &\\
        & & \cdots &\\
        & & & \sqrt{\frac{1}{c_n}}
    \end{pmatrix}
    \mtx C_p\mtx W^{\frac{1}{2}}
\end{align*}
Let the projection matrix be $\mtx \Pi_w = \mtx C_w^\intercal\mtx C_w$. We can check that $\mtx \Pi_w^2 = \mtx \Pi_w$.

\section{Derivations Omitted in the Main Text}

\subsection{Weighted graph coarsening leads to doubly-weighted Laplacian}
\label{app: doubly}

We show in the following that $\mtx C_w \mtx{\m L} \mtx C_w^\intercal$ can reproduce the normalized graph Laplacian $\mtx{\m L}^{(c)}$.
In this case, $\mtx C_w = \paren{\mtx C_p \mtx D \mtx C_p^\intercal}^{-\frac12} \mtx C_p \mtx{D}^\frac12$ and interestingly $\mtx C_p \mtx D \mtx C_p^\intercal = \mathrm{diag}\paren{\mtx C_p \mtx A \mtx C_p^\intercal \mtx 1} = \mathrm{diag}\paren{\mtx A^{(c)} \mtx 1} = \mtx D^{(c)}$, indicating
\begin{align*}
& \mtx C_w \mtx{\m L} \mtx C_w^\intercal = \mtx I_n - \mtx C_w \mtx{D}^{-\frac12} \mtx{A} \mtx{D}^{-\frac12} \mtx C_w^\intercal \\
=& \mtx I_n - \paren{\mtx D^{(c)}}^{-\frac12} \mtx C_p \mtx{D}^{\frac12} \mtx{D}^{-\frac12} \mtx{A} \mtx{D}^{-\frac12} \mtx{D}^{\frac12} \mtx C_p^\intercal \paren{\mtx D^{(c)}}^{-\frac12} \\
=& \mtx I_n - \paren{\mtx D^{(c)}}^{-\frac12} \mtx A^{(c)} \paren{\mtx D^{(c)}}^{-\frac12}
= \mtx{\m L}^{(c)}.
\end{align*}
The results above imply we can unify previous coarsening results under the weighted graph coarsening framework in this paper, with a proper choice of similarity matrix $\mtx S$ and node measure $\mu$.

\subsection{A toy example of coarsening a 3-node graph}
\label{app: toy scale}

Consider a fully-connected 3-node graph with equal weights for nodes and the partition $\set{\set{v_1}, \set{v_2, v_3}}$. 
Its similarity matrix $\mtx S = \mtx D + \mtx A$ and the three possible coarsened similarity matrices will respectively be 
\begin{align*}
\mtx S = 
\begin{pmatrix}
    2 & 1 & 1 \\
    1 & 2 & 1 \\
    1 & 1 & 2
\end{pmatrix}, \quad
\bar{\mtx{C}}_w \mtx{S} \bar{\mtx{C}}_w^{\intercal} = 
\begin{pmatrix}
    2 & 1  \\
    1 & 3/2
\end{pmatrix}, \quad
\mtx C_w \mtx{S} \mtx C_w^{\intercal} = 
\begin{pmatrix}
    2 & \sqrt{2}  \\
    \sqrt{2} & 3
\end{pmatrix}, \quad
\mtx C_p \mtx{S} \mtx C_p^{\intercal} = 
\begin{pmatrix}
    2 & 2  \\
    2 & 6
\end{pmatrix}.
\end{align*}
It is clear that $\bar{\mtx{C}}_w \mtx{S} \bar{\mtx{C}}_w^{\intercal}$, which will have the most appropriate entry magnitude to give the minimal GW-distance, is different from $\mtx C_p \mtx{S} \mtx C_p^\intercal$ proposed in \citet{jin2020graph} and \cite{DBLP:conf/iclr/CaiWW21}.
We note in $\mtx S^{(c)} = \bar{\mtx{C}}_w \mtx{S} \bar{\mtx{C}}_w^{\intercal}$ the edge weight (similarity) is still 1, since we explicitly specify the node weight of the supernode becomes $2$.
The new GW geometric framework thus decouples the node weights and similarity intensities.

\subsection{Deriving the ``Averaging'' magnitude from the constrained srGW barycenter problem}
\label{app:srgw}

We first introduce the definition of srGW divergence.
For a given graph $G$ with node mass distribution $\mtx m \in \Delta^{N-1}$ and similarity matrix $\mtx S$, we can construct another graph $G^{(c)}$ with given similarity matrix $\mtx S^{(c)}$ and \textit{unspecified} node mass distribution $\mtx m^{(c)} \in \Delta^{n-1}$. 
To better reflect the dependence of $\mathrm{GW}_2^2\paren{G, G^{(c)}}$ on node mass distribution and similarity matrix, 
we abuse the notation $\mathrm{GW}_2$ as $\mathrm{GW}_2^2\paren{\mtx S, \mtx m, \mtx S^{(c)}, \mtx m^{(c)}}$ and $\Pi\paren{\mtx m, \mtx m^{(c)}} \defeq \set{\mtx T \in \mb R_+^{N\times n}: \mtx{T1} = \mtx m, \mtx{T}^\intercal \mtx 1 
 = \mtx m^{(c)}}$ in this subsection.
\citet{vincent-cuaz2022semirelaxed} then defined
\begin{align*}
    \mathrm{srGW}_2^2\paren{\mtx S, \mtx m, \mtx S^{(c)}} \defeq \min_{\mtx m^{(c)}} \mathrm{GW}_2^2\paren{\mtx S, \mtx m, \mtx S^{(c)}, \mtx m^{(c)}},
\end{align*}
and we can further find an optimal (w.r.t.\ the srGW divergence) by solving the following srGW barycenter problem with only one input $(\mtx S, \mtx m)$, i.e.
\begin{align}
\label{eqn:srgwb}
\min_{\mtx S^{(c)}} \mathrm{srGW}_2^2\paren{\mtx S, \mtx m, \mtx S^{(c)}}.
\end{align}
We can then do the following transform to \Cref{eqn:srgwb} to reveal its connection with our proposed ``Averaging'' magnitude $\bar{\mtx{C}}_w \mtx{S} \bar{\mtx{C}}_w^{\intercal}$ for the coarsened similarity matrix $\mtx S^{(c)}$.
\begin{align*}
\min_{\mtx S^{(c)}} \mathrm{srGW}_2^2\paren{\mtx S, \mtx m, \mtx S^{(c)}} 
&= \min_{\mtx S^{(c)}} \min_{\mtx m^{(c)}} \mathrm{GW}_2^2\paren{\mtx S, \mtx m, \mtx S^{(c)}, \mtx m^{(c)}} \\
&= \min_{\mtx S^{(c)}} \min_{\mtx m^{(c)}} \min_{\mtx T \in \Pi\paren{\mtx m, \mtx m^{(c)}}} \dotp{\mtx M\paren{\mtx S, \mtx S^{(c)}, \mtx T}}{\mtx T\paren{\mtx m, \mtx m^{(c)}}} \\
&= \min_{\mtx m^{(c)}, \mtx T \in \Pi\paren{\mtx m, \mtx m^{(c)}}} \min_{\mtx S^{(c)}} \dotp{\mtx M\paren{\mtx S, \mtx S^{(c)}, \mtx T}}{\mtx T\paren{\mtx m, \mtx m^{(c)}}} \\
&= \min_{\mtx T \in \mb R_+^{N\times n}: \mtx{T1} = \mtx m} \min_{\mtx S^{(c)}} \dotp{\mtx M\paren{\mtx S, \mtx S^{(c)}, \mtx T}}{\mtx T}.
\end{align*}
In the display above, we use $\mtx M\paren{\mtx S, \mtx S^{(c)}, \mtx T}$ and $\mtx T\paren{\mtx m, \mtx m^{(c)}}$ to show the dependence terms of the cross-graph dissimilarity matrix $\mtx M$ and the transport matrix $\mtx T$.
The last equation holds since for a given transport matrix $\mtx T$ the new node mass distribution $\mtx m^{(c)}$ is uniquely decided by $\mtx m^{(c)} = \mtx T^\intercal \mtx 1$.

Notably, there is a closed-form solution for the inner minimization problem $\min_{\mtx S^{(c)}} \dotp{\mtx M\paren{\mtx S, \mtx S^{(c)}, \mtx T}}{\mtx T}$.
\citet[Equation~(14)]{peyre2016gromov} derive that the optimal $\mtx S^{(c)}$ reads
\begin{align*}
\mathrm{diag}\paren{\mtx m^{(c)}}^{-1} \mtx T^\intercal \mtx S \mtx T \mathrm{diag}\paren{\mtx m^{(c)}}^{-1}.
\end{align*}
If we then enforce the restriction that the node mass transport must be performed in a clustering manner (i.e., the transport matrix $\mtx T = \mtx W \mtx C_p^\intercal \in \mb R^{N \times n}$ for a certain membership matrix $\mtx C_p$),
we exactly have $\mtx S^{(c)} = \bar{\mtx{C}}_w \mtx{S} \bar{\mtx{C}}_w^{\intercal}$. 
The derivation above verifies the effectiveness of the ``Averaging'' magnitude we propose.

\subsection{Comparing spectral difference $\Delta$ to spectral distance in \citet{jin2020graph}}
\label{app:spectral_distance}

\citet{jin2020graph} proposed to specify the graph coarsening plan by minimizing the following \textit{full spectral distance} term (c.f. their Equation (8)):
\begin{align*}
SD_{\text{full}}\paren{G, G^{(c)}} &= \sum_{i=1}^{k_1} \left|\boldsymbol{\lambda}(i)-\boldsymbol{\lambda}_c(i)\right|+\sum_{i=k_1+1}^{k_2}|\boldsymbol{\lambda}(i)-1| + \sum_{i=k_2+1}^N\left|\boldsymbol{\lambda}(i)-\boldsymbol{\lambda}_c(i-N+n)\right| \\
&= \sum_{i=1}^{k_1} \paren{\boldsymbol{\lambda}_c(i)-\boldsymbol{\lambda}(i)} +  \sum_{i=k_2+1}^N \paren{\boldsymbol{\lambda}(i)-\boldsymbol{\lambda}_c(i-N+n)}
+ \sum_{i=k_1+1}^{k_2}|\boldsymbol{\lambda}(i)-1|,
\end{align*}
where $\boldsymbol{\lambda}(i)$'s and $\boldsymbol{\lambda}_c(i)$'s correspond to the eigenvalues (in an ascending order) of the normalized Laplacian $\mtx{\m L}$ and $\mtx{\m L}^{(c)}$,
and $k_1$ and $k_2$ are defined as $k_1 = \argmax_i \set{i: \boldsymbol{\lambda}_c(i) < 1}, k_2 = N - n + k_1$.
The last equation holds due to their Interlacing Property~4.1 (similar to \Cref{thm: poincare_sep}).

We note (\rom 1) the two ``spectral'' loss functions, $\Delta$ and $SD_{\text{full}}$, refer to different spectra. We leverage the spectrum of $C_w U C_w^T$ while they focus on graph Laplacians. Our framework is more general and takes node weights into consideration.
(\rom 2) They actually divided $\boldsymbol{\lambda}_c(i)$’s into two sets and respectively compared them to $\boldsymbol{\lambda}(i)$ and $\boldsymbol{\lambda}(i+N-n)$; 
the signs for $\lambda(i) - \lambda_c(i)$ and $\lambda(i+N-n) - \lambda_c(i)$ are thus different.

\subsection{Time complexity of Algorithm~\ref{alg:WKKmeans}}
\label{sec:alg time}

We first recall the complexity of Algorithm~\ref{alg:WKKmeans} stated in Section~\ref{sec:equivalence}.
Let $T$ be the upper bound of the $K$-means iteration, the time complexity of Algorithm~\ref{alg:WKKmeans} is $\m O\paren{T \paren{M+Nn}}$, 
where $M = \mathrm{nnz}(\mtx S)$ is the number of non-zero elements in $\mtx S$. 

The cost mainly comes from the computation of the ``distance'' (\ref{eqn:kmeans_dist}), which takes $\m O(M)$ time to obtain the second term in \Cref{eqn:kmeans_dist} and pre-compute the third term (independent of $i$);
obtaining the exact $\mathrm{dist}_j(i)$ for all the $Nn$ $(i, j)$ pairs requires another $\m O(Nn)$ time.
Compared to the previous spectral graph coarsening method~\citep{jin2020graph}, Algorithm~\ref{alg:WKKmeans} removes the partial sparse eigenvalue decomposition, which takes $\m O\paren{R(M n + N n^2)}$ time using Lanczos iteration with $R$ restarts.
The $K$-means part of spectral graph coarsening takes $\m O(TNn^2)$ for a certain choice of the eigenvector feature dimension $k_1$ \citep[Section~5.2]{jin2020graph},
while weighted kernel $K$-means clustering nested in our algorithm can better utilize the sparsity in the similarity matrix.

\section{Details of Experiments}
\label{app: exp}

We first introduce the hardware and the codebases we utilize in the experiments.
The algorithms tested are all implemented in unoptimized Python code, and run with one core of a server CPU (Intel(R) Xeon(R) Gold 6240R CPU @ 2.40GHz) on Ubuntu 18.04.
Specifically, our method KGC is developed based on the (unweighted) kernel $K$-means clustering program provided by \citet{ivashkin2016logarithmic};
for the other baseline methods, the variation methods VNGC and VEGC are implemented by \citet{loukas2019graph}, and the spectrum-preserving methods MGC and SGC are implemented by \citet{jin2020graph};
The S-GWL method \citep{xu2019scalable, chowdhury2021generalized} is tested as well (can be found in the GitHub repository of this work), while this algorithm cannot guarantee that the number of output partition is the same as specified.
For the codebases, we implement the experiments mainly using the code by \citet{jin2020graph, dwivedi2020benchmarkgnns}, for graph classification and graph regression respectively.
More omitted experimental settings in Section~\ref{sec:exp} will be introduced along the following subsections.

\begin{table}[tbp]
\caption{Average squared GW distance $GW_2^2(G^{(c)}, G)$ on PTC and IMDB dataset.} 
\label{tab:avg_gw}
\begin{center}
\begin{tabular}{c|ccccc}
\textbf{Dataset} &\textbf{Methods} &\textbf{$c=0.3$}$\downarrow$ &\textbf{$c=0.5$}$\downarrow$ &\textbf{$c=0.7$}$\downarrow$ &\textbf{$c=0.9$}$\downarrow$ \\
\hline
\multirow{6}{*}{PTC} &VNGC    & 0.05558 & 0.04880 & 0.03781 & 0.03326 \\
&VEGC    & 0.03064 & 0.02352 & 0.01614 & 0.00927 \\
&MGC     & 0.05290 & 0.04360 & 0.02635 & 0.00598 \\
&SGC     & 0.03886 & 0.03396 & 0.02309 & 0.00584 \\
&KGC     & 0.03332 & 0.02369 & \textbf{0.01255} & \textbf{0.00282} \\
&KGC(A)  & \textbf{0.03055} & \textbf{0.02346} & 0.01609 & 0.00392 \\
\hline
\multirow{6}{*}{IMDB} &VNGC   & 0.05139 & 0.05059 & 0.05043 & 0.05042 \\
&VEGC   & 0.02791 & \textbf{0.02106} & 0.01170 & 0.00339 \\
&MGC    & \textbf{0.02748} & 0.02116 & 0.01175 & 0.00339 \\
&SGC    & 0.02907 & 0.02200 & 0.01212 & 0.00352 \\
&KGC    & 0.02873 & 0.02111 & \textbf{0.01137} & \textbf{0.00320} \\
&KGC(A) & \textbf{0.02748} & \textbf{0.02106} & 0.01170 & 0.00337 \\
\hline
\end{tabular}

\end{center}
\end{table}

\subsection{Details of GW distance approximation in Section~\ref{sec:exp_distMtx}}
\label{app:gw_distMtx}

\begin{table}[tbp]
\caption{Average empirical bound gaps (in \Cref{thm: self-coarsened}) on PTC and IMDB dataset.} 
\label{tab:avg_gap}
\begin{center}
\begin{tabular}{c|ccccc}
\textbf{Dataset} &\textbf{Methods} &\textbf{$c=0.3$} &\textbf{$c=0.5$} &\textbf{$c=0.7$} &\textbf{$c=0.9$} \\
\hline
\multirow{6}{*}{PTC} &VNGC   & 0.06701 & 0.06671 & 0.05393 & 0.04669 \\
&VEGC   & 0.06246 & 0.06129 & 0.04424 & 0.02577 \\
&MGC    & 0.03203 & 0.03200 & 0.02167 & 0.00540 \\
&SGC    & 0.04599 & 0.04156 & 0.02488 & 0.00554 \\
&KGC    & 0.05145 & 0.05173 & 0.03530 & 0.00852 \\
&KGC(A) & 0.06519 & 0.06402 & 0.04702 & 0.00372 \\
\hline
\multirow{6}{*}{IMDB} &VNGC   & 0.009281 & 0.00927  & 0.009278 & 0.009268 \\
&VEGC   & 0.016879 & 0.016735 & 0.01636  & 0.008221 \\
&MGC    & 0.017307 & 0.01663  & 0.016309 & 0.008179 \\
&SGC    & 0.015719 & 0.015793 & 0.015934 & 0.008049 \\
&KGC    & 0.016054 & 0.016679 & 0.016687 & 0.008347 \\
&KGC(A) & 0.017307 & 0.016735 & 0.01636  & 0.008177 \\
\hline
\end{tabular}

\end{center}
\end{table}

We compute the pair-wise $\GW_2$ distance matrix $\mtx G$ for graphs in PTC and IMDB, with the normalized signless Laplacians set as the similarity matrices. 
For the computation of the $\GW_2$ distance, we mainly turn to the popular OT package \texttt{POT} \citep[Python Optimal Transport]{flamary2021pot}.

The omitted tables of average squared GW distance $GW_2^2(G^{(c)}, G)$ and average empirical bound gaps are presented in Tables~\ref{tab:avg_gw} and \ref{tab:avg_gap}.

Regarding time efficiency, we additionally remark our method KGC works on the dense graph matrix, even though it has a better theoretical complexity using a sparse matrix; 
for small graphs in MUTAG, directly representing them by dense matrices would even be faster, when a modern CPU is used. 

\subsection{Details of Laplacian spectrum preservation in Section~\ref{sec:lap_spec_exp}}

We mainly specify the evaluation metric for spectrum preservation in this subsection. 
Following the eigenvalue notation in Theorem~\ref{thm: self-coarsened}, we define the top-$5$ eigenvalue relative error as $\frac15 \sum_{i=1}^5 \frac{\lambda_i - \lambda_i^{(c)}}{\lambda_i}$.
Here for all coarsening methods $\lambda_i - \lambda_i^{(c)}$ is always non-negative thanks to Poincar\'e separation theorem (Theorem~\ref{thm: poincare_sep}).

\subsection{Details of Graph classification with Laplacian spectrum in Section~\ref{sec:graph_cls}}

We remark the graph classification experiments are mainly adapted from the similar experiments in \citet[Section~6.1]{jin2020graph}.
For MUTAG and PTC datasets, we apply Algorithm~\ref{alg:WKKmeans} to $\mtx D + \mtx A$; for the other four datasets, we utilize the normalized signless Laplacian $\mtx{I}_N + \mtx{D}^{-\frac12} \mtx{A} \mtx{D}^{-\frac12}$.

\subsection{Details of Graph regression with GCNs in Section~\ref{sec:exp regression}}
\label{app:exp_gcn}

We mainly follow the GCN settings in \citet{dwivedi2020benchmarkgnns}.
More detailed settings are stated as follows.

\textbf{Graph regression.} The procedure of graph regression is as follows:
Taking a graph $G$ as input, a GCN will return a graph embedding $\mtx y_{G} \in \mb R^{d}$; 
\citet{dwivedi2020benchmarkgnns} then pass $\mtx y_{G}$ to an MLP and compute the prediction score $y_{\text{pred}} \in \mathbb R$ as
$$
y_{\text{pred}} = \mtx P \cdot \text{ReLU}(\mtx Q \mtx y_{\mathcal G}),
$$
where $\mtx P \in \mathbb R^{1 \times d}, \mtx Q \in \mathbb R^{d \times d}$. 
They will then use $|y_{\text{pred}} - y|$, the $L_1$ loss (the MAE metric in \Cref{tab:regression}) between the predicted score $y_{\text{pred}}$ and the groundtruth score $y$, both in training and performance evaluation.

\textbf{Data splitting.} They apply a scaffold splitting~\citep{hu2020open} to the AQSOL dataset in the ratio $8 : 1 : 1$ to have $7831, 996$, and $996$ samples for train, validation, and test sets.

\textbf{Training hyperparameters.} 
For the learning rate strategy across all GCN models, we follow the existing setting to choose the initial learning rate as $1 \times 10^{-3}$, the reduce factor is set as $0.5$, and the stopping learning rate is $1 \times 10^{-5}$.
Also, all the GCN models tested in our experiments share the same architecture---the network has $4$ layers and $108442$ tunable parameters.

As for the concrete usage of graph coarsening to GCNs, we discuss the main idea in \Cref{sec:gcgcn} and leave the technical details to \Cref{app: gcgcn}.

\subsubsection{Application of graph coarsening to GCNs}
\label{sec:gcgcn}

Motivated by the GW setting, we discuss the application of graph coarsening to a prevailing and fundamental graph model---GCN\footnote{We omit the introduction to GCN here and refer readers to \citet{kipf2016semi} for more details.}.
After removing the bias term for simplicity and adapting the notations in this paper, we take a vanilla $1$-layer GCN as an example and formulate it as
\begin{align*}
    \mtx y^\intercal = \frac{\mtx 1_N^\intercal}{N} \sigma\paren{\mtx D^{-\frac12}\mtx A \mtx D^{-\frac12} \mtx H \mtx W_{\mathrm{GCN}}}, 
\end{align*}
where $\mtx y$ is the graph representation of a size-$N$ graph associated with the adjacency matrix $\mtx A$, $\sigma$ is an arbitrary activation function, $\mtx H$ is the embedding matrix for nodes in the graph, and $\mtx W_{\mathrm{GCN}}$ is the weight matrix for the linear transform in the layer.
We take the mean operation $\frac{\mtx 1_N^\intercal}{N}$ in the GCN as an implication of even node weights (therefore no $w$ subscript for coarsening matrices), and intuitively incorporate graph coarsening into the computation by solely replacing $\mtx D^{-\frac12}\mtx A \mtx D^{-\frac12}$ with $\mtx \Pi \mtx D^{-\frac12}\mtx A \mtx D^{-\frac12} \mtx \Pi$ and denote $\mtx y^{(c)}$ as the corresponding representation.
We have
\begin{align}
\paren{\mtx y^{(c)}}^\intercal = \mtx c^\intercal \sigma\paren{{\bar{\mtx C} \mtx D^{-\frac12}\mtx A \mtx D^{-\frac12} \mtx C_p} {\bar{\mtx C} \mtx H \mtx W_{\mathrm{GCN}}}}, 
\label{eqn:gcgcn}
\end{align}
and the graph matrix $\mtx D^{-\frac12}\mtx A \mtx D^{-\frac12}$ is supposed to be coarsened as $\bar{\mtx C} \mtx D^{-\frac12}\mtx A \mtx D^{-\frac12} \mtx C_p$, which can be extended to multiple layers.
Due to the space limit, we defer the derivation to Appendix~\ref{app: gcgcn} and solely leave some remarks here.

The propagation above is guided by the GW setting along this paper: even the nodes in the original graph are equally-weighted, the supernodes after coarsening can have different weights, which induces the new readout operation $\mtx c^\intercal$ ($\mtx c$ is the mass vector of the supernodes).
Furthermore, an obvious difference between \Cref{eqn:gcgcn} and previous designs~\citep{huang2021scaling, DBLP:conf/iclr/CaiWW21} of applying graph coarsening to GNNs is the asymmetry of the coarsened graph matrix $\bar{\mtx C} \mtx D^{-\frac12}\mtx A \mtx D^{-\frac12} \mtx C_p$.
The design (\ref{eqn:gcgcn}) is tested for the graph regression experiment in Section~\ref{sec:exp regression}.
More technical details are introduced in the subsequent subsection.

\subsubsection{Derivation of the GCN propagation in the GW setting}
\label{app: gcgcn}

We consider a general layer-$L$ GCN used by \citet{dwivedi2020benchmarkgnns}. We first recall the definition (normalization modules are omitted for simplicity):
\begin{align*}
    \mtx y^\intercal &= \frac{\mtx 1_N^\intercal}{N} \mtx H^{(L)}, \\
    \mtx H^{(l)} &= \sigma (\mtx Z^{(l)}), 
    \quad \mtx Z^{(l)} = \paren{\mtx D^{-\frac12}\mtx A \mtx D^{-\frac12}} \mtx H^{(l-1)} \mtx W^{(l)}, \quad \forall l \in [L], \\
    \mtx H^{(0)} &\defeq \mtx X, \quad \text{the embedding matrix for each nodes},
\end{align*}
where $\sigma$ is an activation function and from now on we will abuse $\mtx P$ here to denote $\mtx D^{-\frac12}\mtx A \mtx D^{-\frac12}$;
$\mtx H^{(l)}$ is the embedding matrix of the graph nodes in the $l$-th layer, 
and $\mtx W^{(l)}$ is the weight matrix of the same layer. 

To apply the coarsened graph, we enforce the following regulations:
\begin{align*}
    \mtx H^{(c, l)} &= \sigma (\mtx Z^{(c, l)}), 
    \quad \mtx Z^{(c, l)} = \paren{\mtx \Pi \mtx P \mtx \Pi} \mtx H^{(c, l-1)} \mtx W^{(l)}, \quad \forall l \in [L].
\end{align*}
To reduce the computation in node aggregation, we utilize the two properties that $\mtx \Pi = \mtx C_p^\intercal \bar{\mtx C}$ and $\sigma\paren{\mtx C_p^\intercal \mtx B} = \mtx C_p^\intercal \sigma\paren{\mtx B}$, for any element-wise activation function and matrix $\mtx B$ with a proper shape (considering $\mtx C_p^\intercal$ simply ``copy'' the rows from $\mtx B$);
for the top two layers, we then have
\begin{align*}
    \mtx y^\intercal &= \frac{\mtx 1_N^\intercal}{N} \sigma (\mtx C_p^\intercal \bar{\mtx C} \mtx P \mtx \Pi \mtx H^{(c, L-1)} \mtx W^{(L)})= 
    \frac{\mtx 1_N^\intercal}{N} \mtx C_p^\intercal \sigma\paren{\bar{\mtx C} \mtx P \mtx C_p^\intercal \bar{\mtx C} \mtx H^{(c, L-1)} \mtx W^{(L)}}, \\ 
    \mtx H^{(c, L-1)} &= \sigma\paren{\mtx C_p^\intercal \bar{\mtx C} \mtx P \mtx C_p^\intercal \bar{\mtx C} \mtx H^{(c, L-2)} \mtx W^{(L-1)}} = \mtx C_p^\intercal \sigma\paren{\bar{\mtx C} \mtx P \mtx C_p^\intercal \bar{\mtx C} \mtx H^{(c, L-2)} \mtx W^{(L-1)}}.
\end{align*}
Note $\mtx C_p^\intercal \bar{\mtx C} \mtx C_p^\intercal = \mtx C_p^\intercal$; for the top two layers we finally obtain
\begin{align*}
    \mtx y^\intercal = \frac{\mtx 1_N^\intercal}{N} \mtx C_p^\intercal \sigma\paren{\bar{\mtx C} \mtx P \mtx C_p^\intercal \bar{\mtx C} \mtx H^{(c, L-1)} \mtx W^{(L)}} = 
    \mtx c^\intercal \sigma\paren{\bar{\mtx C} \mtx P \mtx C_p^\intercal \sigma\paren{\bar{\mtx C} \mtx P \mtx C_p^\intercal \bar{\mtx C} \mtx H^{(c, L-2)} \mtx W^{(L-1)}} \mtx W^{(L)}},
\end{align*}
implying that we can replace $\mtx P$ with $\bar{\mtx C} \mtx P \mtx C_p^\intercal$ in the propagation of the top two layers.
The trick can indeed be repeated for each layer, and specifically, in the bottom layer we can have
\begin{align*}
    \mtx H^{(c, 1)} = 
    \mtx C_p^\intercal \sigma\brkt{\paren{\bar{\mtx C} \mtx P \mtx C_p^\intercal} \paren{\bar{\mtx C} \mtx H^{(0)}} \mtx W^{(1)}},
\end{align*}
which well corresponds to the node weight concept in the GW setting: $\bar{\mtx C} \mtx H^{(0)}$ uses the average embedding of the nodes within the cluster to represent the coarsened centroid.
We then finish the justification of the new GCN propagation in \Cref{eqn:gcgcn}.

\section{Useful Facts}
\label{sec:facts}

\subsection{Poincar\'e separation theorem}
\label{sec:interlacing}

For convenience of the reader, we repeat Poincar\'e separation theorem~\citep{bellman1997introduction} in this subsection.

\begin{theorem}[Poincar\'e separation theorem]\label{thm: poincare_sep}
Let $\mtx A$ be an $N \times N$ real symmetric matrix and $\mtx C$ an $n \times N$ semi-orthogonal matrix such that $\mtx C \mtx C^\intercal = \mtx I_n$. 
Denote by $\lambda_{i}, i = 1, 2, \dots, N$ and $\lambda^{(c)}_{i}, i = 1, 2, \dots, n$ the eigenvalues of $\mtx A$ and $\mtx C \mtx A \mtx C^\intercal$, respectively (in descending order). We have
\begin{align}
\lambda_{i} \geq \lambda^{(c)}_{i} \geq \lambda_{{N-n+i}},
\end{align}
\end{theorem}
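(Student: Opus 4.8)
The plan is to obtain both inequalities from the Courant--Fischer variational characterization of the eigenvalues of a symmetric matrix. The single structural fact that makes this work is that $\mtx C\mtx C^{\intercal} = \mtx I_n$ forces $\mtx C^{\intercal}$ to be a linear isometry of $\mb R^n$ into $\mb R^N$: since $(\mtx C^{\intercal})^{\intercal}\mtx C^{\intercal} = \mtx C\mtx C^{\intercal} = \mtx I_n$, we have $\|\mtx C^{\intercal}y\| = \|y\|$ for all $y \in \mb R^n$, the map $y \mapsto \mtx C^{\intercal}y$ is injective, and it carries every $k$-dimensional subspace of $\mb R^n$ to a $k$-dimensional subspace of $\mb R^N$. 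In particular the Rayleigh quotients transfer without distortion: for every $y \ne 0$, $y^{\intercal}(\mtx C\mtx A\mtx C^{\intercal})y / (y^{\intercal}y) = (\mtx C^{\intercal}y)^{\intercal}\mtx A(\mtx C^{\intercal}y) / \|\mtx C^{\intercal}y\|^2$.

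For the upper bound $\lambda^{(c)}_i \le \lambda_i$, I would take an $i$-dimensional subspace $V \subseteq \mb R^n$ realizing the max--min value $\lambda^{(c)}_i = \max_{\dim V = i}\ \min_{0\ne y\in V} y^{\intercal}(\mtx C\mtx A\mtx C^{\intercal})y/(y^{\intercal}y)$; then $\mtx C^{\intercal}V$ is an $i$-dimensional subspace of $\mb R^N$ on which the Rayleigh quotient of $\mtx A$ attains exactly the same minimum, so the max--min characterization of $\lambda_i$ gives $\lambda_i \ge \lambda^{(c)}_i$. For the lower bound $\lambda^{(c)}_i \ge \lambda_{N-n+i}$, I would instead use the dual (min--max) form: pick an $(n-i+1)$-dimensional subspace $W \subseteq \mb R^n$ realizing $\lambda^{(c)}_i = \min_{\dim W = n-i+1}\ \max_{0\ne y\in W} y^{\intercal}(\mtx C\mtx A\mtx C^{\intercal})y/(y^{\intercal}y)$; then $\mtx C^{\intercal}W$ is an $(n-i+1)$-dimensional subspace of $\mb R^N$, and since $N - (N-n+i) + 1 = n-i+1$, the min--max characterization of $\lambda_{N-n+i}$ yields $\lambda_{N-n+i} \le \lambda^{(c)}_i$. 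Chaining the two gives $\lambda_i \ge \lambda^{(c)}_i \ge \lambda_{N-n+i}$ for $i = 1,\dots,n$.

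I do not expect a genuine obstacle here; the only place needing care is the index bookkeeping in the lower-bound step, namely recognizing that the correct subspace dimension for the min--max form of $\lambda^{(c)}_i$ is $n-i+1$ and that $N-n+i$ is precisely the eigenvalue index of $\mtx A$ whose min--max form uses subspaces of that same dimension. An alternative, essentially equivalent, route is to complete the rows of $\mtx C$ to an orthonormal basis of $\mb R^N$, giving an $N\times N$ orthogonal matrix $\mtx Q$ with $\mtx C$ as its first $n$ rows; then $\mtx Q\mtx A\mtx Q^{\intercal}$ is orthogonally similar to $\mtx A$ and $\mtx C\mtx A\mtx C^{\intercal}$ is its leading $n\times n$ principal submatrix, so the statement follows from Cauchy's interlacing theorem for principal submatrices---but since that theorem is itself proved by the same min--max argument, the direct Courant--Fischer proof above is the most economical.
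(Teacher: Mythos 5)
Your proof is correct. Note that the paper does not prove this statement at all---it quotes the Poincar\'e separation theorem as a known fact from Bellman's \emph{Introduction to Matrix Analysis} (Appendix ``Useful Facts'')---so there is no in-paper argument to compare against. Your Courant--Fischer argument is the standard proof and is complete: the key observation that $\mtx C\mtx C^{\intercal}=\mtx I_n$ makes $y\mapsto \mtx C^{\intercal}y$ an isometry (hence dimension-preserving on subspaces and Rayleigh-quotient-preserving) is exactly what is needed, and the index bookkeeping in the lower-bound step ($\dim W = n-i+1$ matching the min--max form of $\lambda_{N-n+i}$, since $N-(N-n+i)+1=n-i+1$) checks out.
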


\subsection{Ruhe's trace inequality}
\label{sec:von_neumann}

For convenience of the reader, Ruhe's trace inequality~\citep{ruhe1970perturbation} is stated as follows.

\begin{lemma}[Ruhe's trace inequality]\label{lem: Ruhe_ineq}
If $\mtx A, \mtx B$ are both $N \times N$ PSD matrices with eigenvalues,
\begin{align*}
\lambda_1 \geq \cdots \geq \lambda_N \geq 0, \quad
\nu_1 \geq \cdots \geq \nu_N \geq 0,
\end{align*}
then
\begin{align*}
\sum_{i=1}^N \lambda_i \nu_{N-i+1} \leq \mathrm{tr}(AB) \leq \sum_{i=1}^N \lambda_i \nu_i.
\end{align*}
\end{lemma}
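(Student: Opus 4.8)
The plan is to prove both the upper bound $\mathrm{tr}(AB) \leq \sum_{i=1}^N \lambda_i \nu_i$ and the lower bound $\sum_{i=1}^N \lambda_i \nu_{N-i+1} \leq \mathrm{tr}(AB)$ via the spectral decomposition of both matrices together with a doubly-stochastic-matrix argument. First I would write $\mtx A = \mtx U \Lambda \mtx U^\intercal$ and $\mtx B = \mtx V \mathrm{N} \mtx V^\intercal$ where $\mtx U, \mtx V$ are orthogonal, $\Lambda = \mathrm{diag}(\lambda_1,\dots,\lambda_N)$, $\mathrm{N} = \mathrm{diag}(\nu_1,\dots,\nu_N)$, with eigenvalues in descending order. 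Then $\mathrm{tr}(\mtx A \mtx B) = \mathrm{tr}(\Lambda \mtx U^\intercal \mtx V \mathrm{N} \mtx V^\intercal \mtx U) = \mathrm{tr}(\Lambda \mtx Q \mathrm{N} \mtx Q^\intercal)$ where $\mtx Q = \mtx U^\intercal \mtx V$ is orthogonal. Expanding, $\mathrm{tr}(\Lambda \mtx Q \mathrm{N} \mtx Q^\intercal) = \sum_{i,j} \lambda_i \nu_j Q_{ij}^2$.

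The key observation is that the matrix $\mtx P$ with entries $P_{ij} = Q_{ij}^2$ is doubly stochastic: each row and column sums to $1$ because $\mtx Q$ is orthogonal. Hence $\mathrm{tr}(\mtx A \mtx B) = \sum_{i,j} \lambda_i \nu_j P_{ij}$ is a convex combination of the quantities obtained by taking $\mtx P$ to be a permutation matrix, by the Birkhoff--von Neumann theorem (the doubly stochastic matrices form a polytope whose vertices are permutation matrices). For a permutation $\pi$, the value is $\sum_i \lambda_i \nu_{\pi(i)}$, so $\mathrm{tr}(\mtx A \mtx B)$ lies between $\min_\pi \sum_i \lambda_i \nu_{\pi(i)}$ and $\max_\pi \sum_i \lambda_i \nu_{\pi(i)}$. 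It then remains to show that, since the $\lambda_i$ and $\nu_i$ are each sorted in descending order, the sum $\sum_i \lambda_i \nu_{\pi(i)}$ is maximized by the identity permutation (giving $\sum_i \lambda_i \nu_i$) and minimized by the order-reversing permutation $\pi(i) = N-i+1$ (giving $\sum_i \lambda_i \nu_{N-i+1}$); this is the rearrangement inequality, provable by a standard exchange/swap argument showing any inversion in the pairing can be removed without increasing (resp.\ decreasing) the sum.

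An alternative, perhaps shorter, route avoids Birkhoff explicitly: one can directly invoke the rearrangement inequality in the averaged form $\sum_i \lambda_i \nu_{N-i+1} \leq \sum_{i,j} \lambda_i \nu_j P_{ij} \leq \sum_i \lambda_i \nu_i$ for any doubly stochastic $\mtx P$, which itself follows from summation by parts (Abel summation) using the partial-sum bounds $\sum_{j \le k}\sum_i P_{ij} \le$ and $\ge$ appropriate counts, together with monotonicity of both sequences. I would likely present the Birkhoff version since it is the most transparent. The main obstacle is not conceptual difficulty but bookkeeping: one must handle the non-negativity assumption ($\lambda_N, \nu_N \ge 0$) correctly — it is actually not needed for the inequality itself once the sequences are monotone, but it is consistent with the PSD hypothesis — and one must be careful that the rearrangement/exchange argument is stated for descending (not ascending) orderings so that the extremal permutations come out as claimed. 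No step requires heavy computation; the proof is essentially the chain: spectral decomposition $\to$ doubly stochastic $\mtx P$ $\to$ Birkhoff $\to$ rearrangement inequality.
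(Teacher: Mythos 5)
Your proposed proof is correct. Note, however, that the paper does not prove this lemma at all: it is quoted verbatim as a known result with a citation to Ruhe (1970), so there is no in-paper argument to compare against. Your chain --- spectral decompositions $\mtx A = \mtx U \Lambda \mtx U^\intercal$, $\mtx B = \mtx V \mathrm{N} \mtx V^\intercal$, the identity $\mathrm{tr}(\mtx A\mtx B) = \sum_{i,j}\lambda_i \nu_j Q_{ij}^2$ with $\mtx Q = \mtx U^\intercal \mtx V$, double stochasticity of $[Q_{ij}^2]$, Birkhoff--von Neumann, and the rearrangement inequality for descending sequences --- is the standard self-contained derivation and each step is sound; your side remark that positive semi-definiteness is not actually needed (the bound holds for all symmetric matrices once the eigenvalue lists are sorted) is also accurate, so your argument is in fact slightly more general than the PSD statement the paper needs. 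What the citation-only route buys the paper is brevity; what your route buys is a complete, elementary proof whose only external ingredients are the spectral theorem, the Birkhoff polytope description, and a swap argument, all of which you have correctly identified. If you write it up, the only care needed is the bookkeeping you already flagged: keep both orderings descending so that the identity permutation realizes the maximum and the reversal realizes the minimum.
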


\section{Proof of Theorem~\ref{thm: self-coarsened} and Theorem~\ref{thm: coarsen12}}\label{ref:gw2cg}
We will first prove an intermediate result Lemma~\ref{thm: coarsen1} for coarsened graph $G_1^{(c)}$ and un-coarsend graph $G_2$ to introduce necessary technical tools for Theorem~\ref{thm: self-coarsened} and Theorem~\ref{thm: coarsen12}.
The ultimate proof of Theorem~\ref{thm: self-coarsened} and Theorem~\ref{thm: coarsen12} will be stated in Appendix~\ref{app: self-coarsened} and Appendix~\ref{app: coarsen12} respectively.

\subsection{Lemma \ref{thm: coarsen1} and Its Proof}

We first give the complete statement of \Cref{thm: coarsen1}.
In Lemma~\ref{thm: coarsen1}, we consider the case in which only graph $G_1$ is coarsened, and the notations are slightly simplified: 
when the context is clear, the coarsening-related terms without subscripts specific to graphs, e.g.\ $\mtx C_{p}, \bar{\mtx C}_{w}$, are by default associated with $G_1$ unless otherwise announced.
We follow the simplified notation in the statement and proof of Theorem~\ref{thm: self-coarsened}, which focuses on solely $\text{GW}_2(G^{(c)}, G)$ and the indices $1, 2$ are not involved.
For Theorem~\ref{thm: coarsen12}, we will explicitly use specific subscripts, such as $\mtx C_{p, 1}, \bar{\mtx C}_{w, 2}$, for disambiguation.
\begin{lemma}\label{thm: coarsen1}
Let $\mtx P = \mtx W_1^{-\frac12} \mtx T^* \mtx W_2^{-\frac12}$. If both $\mtx S_1$ and $\mtx S_2$ are PSD, we have
\begin{equation}\label{thm:one-side-bd}
\begin{aligned}
\lvert\GW_2^2(G_1, G_2) - \GW_2^2(G_1^{(c)}, G_2)\rvert 
    &\leq \max \left\{ \vphantom{\paren{\sum_{i=1}^{n_1}}} \lambda_{N_1-n_1+1} \sum_{i=1}^{n_1} \paren{\lambda_i - \lambda^{(c)}_i} + C_{\mtx U, n_1}, \right. \\
    & \qquad \qquad \left. 2\paren{\nu_{N_1-n_1+1} \sum_{i=1}^{n_1} \paren{\lambda_i - \lambda^{(c)}_i} + C_{\mtx U, \mtx V, n_1}} \right\}.
\end{aligned}
\end{equation}
Here, $\lambda_1^{(c)}\geq \cdots \geq \lambda_{n}^{(c)}$ are eigenvalues of $\mtx\Pi_w\mtx U\mtx \Pi_w$,
$\mtx U = \mtx W_1^\frac12 \mtx{S}_1 \mtx W_1^\frac12$ with eigenvalues $\lambda_{1} \geq \lambda_{2} \geq \cdots \geq \lambda_{N_1}$, and $\mtx V = \mtx P \mtx W_2^{\frac12} \mtx S_2 \mtx W_2^{\frac12} \mtx P^\intercal$ with eigenvalues $\nu_{1} \geq \nu_{2}\geq \cdots \geq \nu_{N_1}$, and we let $C_{\mtx U, n_1} \defeq \sum_{i=1}^{n_1} \lambda_{i} (\lambda_{i} - \lambda_{N_1-n_1+i}) + \sum_{i=n_1+1}^{N_1} \lambda_{i}^2$ and $C_{\mtx U, \mtx V, n_1} \defeq \sum_{i=1}^{n_1} \lambda_{i} \paren{\nu_{i} - \nu_{N_1-i+1}} + \sum_{i=n_1+1}^{N_1} \lambda_{i} \nu_{i}$ be two non-negative constants.
\end{lemma}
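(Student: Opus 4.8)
The plan is to control $\GW_2^2(G_1^{(c)}, G_2)$ by plugging a carefully chosen (sub-optimal but feasible) transport plan into the GW objective and comparing it with the optimal plan $\mtx T^*$ for $\GW_2(G_1, G_2)$. Specifically, since $\GW_2^2(G_1, G_2) = \dotp{\mtx M}{\mtx T^*}$ with $\mtx M$ as in \Cref{prop: decompose_loss}, I would first rewrite this inner product using the decomposable-loss structure (with $f_1(a)=a^2$, $f_2(b)=b^2$, $h_1(a)=a$, $h_2(b)=2b$) so that the coupling term becomes a trace of products of $\mtx S_1$, $\mtx T^*$, and $\mtx S_2$. Then, to build a feasible plan between $G_1^{(c)}$ and $G_2$, I would push $\mtx T^*$ through the averaging coarsening matrix, i.e. take $\bar{\mtx C}_w \mtx T^*$ (which has the correct marginals $\mtx m_1^{(c)}$ and $\mtx m_2$ by construction). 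The difference $\GW_2^2(G_1, G_2) - \dotp{\mtx M^{(c)}}{\bar{\mtx C}_w \mtx T^*}$ can then be bounded above, and by symmetry of the argument (coarsening can only ``hurt'' in a controlled way in both directions), one obtains the two-sided bound on $\lvert\GW_2^2(G_1,G_2) - \GW_2^2(G_1^{(c)},G_2)\rvert$ as a maximum of two quantities.

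The two branches of the $\max$ should arise from two different ways of estimating the discrepancy. After the substitutions, the difference reduces to terms of the form $\Tr(\mtx U) - \Tr(\mtx \Pi_w \mtx U \mtx \Pi_w) = \sum_{i=1}^{n_1}(\lambda_i - \lambda_i^{(c)}) + \sum_{i=n_1+1}^{N_1}\lambda_i$ (the ``self'' part, involving only $\mtx S_1$ and $\mtx W_1$) and cross terms that couple $\mtx U$ with $\mtx V = \mtx P \mtx W_2^{\frac12}\mtx S_2\mtx W_2^{\frac12}\mtx P^\intercal$ (the ``interaction'' part). For the self part I would apply the Poincaré separation theorem (\Cref{thm: poincare_sep}) to $\mtx U$ and the projector $\mtx \Pi_w = \mtx C_w^\intercal\mtx C_w$, using $\lambda_i \geq \lambda_i^{(c)} \geq \lambda_{N_1-n_1+i}$ to both guarantee non-negativity of each $\lambda_i - \lambda_i^{(c)}$ and to factor out a leading eigenvalue $\lambda_{N_1-n_1+1}$ (respectively $\nu_{N_1-n_1+1}$) times $\Delta_1 = \sum_{i=1}^{n_1}(\lambda_i-\lambda_i^{(c)})$, leaving the coarsening-independent remainders $C_{\mtx U, n_1}$ and $C_{\mtx U, \mtx V, n_1}$. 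For the cross terms I expect to need Ruhe's trace inequality (\Cref{lem: Ruhe_ineq}) to bound $\Tr(\mtx U \mtx V)$-type quantities by sums of products of sorted eigenvalues, which is exactly how the $C_{\mtx U, \mtx V, n_1}$ term with its $\nu_i - \nu_{N_1-i+1}$ structure is produced; the factor $2$ in the second branch traces back to $h_2(b) = 2b$.

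The main obstacle I anticipate is bookkeeping the interaction between the transport plan and the coarsening operator correctly — in particular, verifying that $\bar{\mtx C}_w \mtx T^*$ is genuinely feasible for the coarsened problem and that the cross term $h_1(\mtx S_1) \mtx T^* h_2(\mtx S_2)^\intercal$, after coarsening, can be written cleanly in terms of $\mtx U$, $\mtx \Pi_w$, and $\mtx V$ via the normalization $\mtx P = \mtx W_1^{-\frac12}\mtx T^*\mtx W_2^{-\frac12}$. The algebraic identity $\bar{\mtx C}_w = \mathrm{diag}(c_i^{-1})\mtx C_p\mtx W$, together with $\mtx C_w = \mathrm{diag}(c_i^{-1/2})\mtx C_p\mtx W^{1/2}$ and $\mtx \Pi_w = \mtx W^{1/2}\mtx C_p^\intercal\bar{\mtx C}_w\mtx W^{-1/2} = \mtx C_w^\intercal\mtx C_w$, is what lets the $\mtx W^{1/2}$ factors migrate so that everything is expressed through the symmetric PSD matrices $\mtx U$ and $\mtx V$ to which the spectral inequalities apply; getting these conjugations exactly right, and tracking which eigenvalue list ($\lambda$ vs.\ $\nu$) governs which term, is the delicate part. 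Once the difference is in the form $(\text{leading eigenvalue})\cdot\Delta_1 + (\text{coarsening-independent const})$ in each branch, the two-sided bound follows by taking the larger of the two estimates.
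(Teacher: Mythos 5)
Your overall architecture matches the paper's: decompose $\GW_2^2$ via the decomposable loss of \Cref{prop: decompose_loss} into a coarsening-sensitive ``self'' term and a ``cross'' term, control each by substituting feasible transport plans, and finish with Poincar\'e separation and Ruhe's trace inequality (the factor $2$ does indeed come from $h_2(b)=2b$). However, two concrete steps would fail as written. First, the feasible plan you propose, $\bar{\mtx C}_w\mtx T^*$, does not have the correct first marginal: its $k$-th row sum is $\frac{1}{c_k}\sum_{i\in\m P_k}m_{1,i}^2$, whereas the supernode mass is $c_k=\sum_{i\in\m P_k}m_{1,i}$, and these differ unless the masses are constant within each partition. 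The correct push-forward is the accumulation $\mtx C_p\mtx T^*$; the averaging matrix $\bar{\mtx C}_w$ is reserved for the similarity matrix, $\mtx S_1^{(c)}=\bar{\mtx C}_w\mtx S_1\bar{\mtx C}_w^\intercal$, and conflating the two roles breaks both the marginal constraints and the conjugation algebra that turns the cross term into $\Tr\brkt{\paren{\mtx U-\mtx\Pi_w\mtx U\mtx\Pi_w}\mtx V}$.

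Second, your account of where the $\max$ comes from (``two different ways of estimating the discrepancy'' / ``symmetry'') misses the actual mechanism and, followed literally, would not produce the stated bound. Writing $\GW_2^2(G_1,G_2)-\GW_2^2(G_1^{(c)},G_2)=(I_1-I_1')-2(I_3-I_3')$ in the notation of \Cref{lem: decomp_GW}, the paper first establishes the two sign facts $I_1-I_1'\ge 0$ (Poincar\'e applied to $\mtx U$ versus $\mtx\Pi_w\mtx U\mtx\Pi_w$) and $I_3-I_3'\ge 0$ (optimality of $\mtx T^*$ tested against the \emph{lifted} plan $\bar{\mtx C}_w^\intercal\mtx T_{co}^*$), and only then invokes $\abs{a-b}\le\max\set{a,b}$ for $a,b\ge 0$. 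Without these sign facts, bounding each piece separately yields only the \emph{sum} of the two upper bounds, which is weaker than the claimed maximum. In particular you need both feasible-plan substitutions --- the forward one $\mtx C_p\mtx T^*$ to upper-bound $I_3-I_3'$ and the backward one $\bar{\mtx C}_w^\intercal\mtx T_{co}^*$ to show $I_3-I_3'\ge 0$ --- not just the forward direction you describe.
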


\begin{remark}
\normalfont
Take $G_2=G_1$, and we have $\mtx T^\ast = \mtx W_1$ which implies $\mtx P = \mtx I_N$ and $\mtx V = \mtx W_1^{\frac{1}{2}}\mtx S_1\mtx W_1^{\frac{1}{2}} = \mtx U$. 
This directly leads to the bound in Theorem~\ref{thm: self-coarsened} though with an additional factor $2$. 
In Appendix~\ref{app: self-coarsened} we will show the approach to obtain a slightly tighter bound removing the unnecessary factor $2$.
\end{remark}

To illustrate our idea more clearly, we will start from the following decomposition of $\GW_2$ distance. The detailed proofs of all the lemmas in this section will be provided to Appendix \ref{app: proof_in_main_thm} for the readers interested. 
\begin{lemma}\label{lem: decomp_GW}
For any two graphs $G_1$ and $G_2$, we have
\begin{align*}
    \GW_2^2(G_1, G_2) =  I_1 + I_2 - 2I_3,
\end{align*}
where
\begin{align*}
I_1 = \Tr \paren{\paren{\mtx W_1^\frac12 \mtx{S_1} \mtx W_1^\frac12} \paren{\mtx W_1^\frac12 \mtx{S_1} \mtx W_1^\frac12}}, \qquad
I_2 = \Tr \paren{\paren{\mtx W_2^\frac12 \mtx{S_2} \mtx W_2^\frac12} \paren{\mtx W_2^\frac12 \mtx{S_2} \mtx W_2^\frac12}}
\end{align*}
do not depend on the choice of transport map, while
\begin{align*}
    I_3 = \Tr \paren{\mtx S_1 \mtx T^* \mtx S_2 \paren{\mtx T^*}^{\intercal}}
\end{align*}
requires the optimal transport map $\mtx T^\ast$ from the graph $G_1$ to the graph $G_2$.
\end{lemma}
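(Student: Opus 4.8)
The plan is to prove Lemma~\ref{lem: decomp_GW} by a direct algebraic expansion of the quadratic objective in the definition~\eqref{eqn:gwd} of $\GW_2^2$, followed by a change of variables that absorbs the node masses into symmetrized ``mass-weighted'' similarity matrices. Recall that $\GW_2^2(G_1,G_2) = \min_{\mtx T\in\Pi_{1,2}}\sum_{i,j}\sum_{i',j'}\bigl(s^1_{ij}-s^2_{i'j'}\bigr)^2 \mtx T_{ii'}\mtx T_{jj'}$. First I would expand the square inside the sum as $(s^1_{ij})^2 - 2\,s^1_{ij}s^2_{i'j'} + (s^2_{i'j'})^2$ and distribute the sum into three pieces corresponding to $I_1$, $-2I_3$, and $I_2$ respectively, plugging in the optimal plan $\mtx T^*$ (whose existence is noted in the footnote to \Cref{prop: decompose_loss}).

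The key bookkeeping step is handling the marginal constraints. For the first piece, $\sum_{i,j}\sum_{i',j'}(s^1_{ij})^2\mtx T^*_{ii'}\mtx T^*_{jj'}$: the $(i',j')$ sum factors and, using $\sum_{i'}\mtx T^*_{ii'} = (\mtx m_1)_i$ and $\sum_{j'}\mtx T^*_{jj'} = (\mtx m_1)_j$, collapses to $\sum_{i,j}(s^1_{ij})^2 (\mtx m_1)_i (\mtx m_1)_j = \sum_{i,j}\bigl((\mtx m_1)_i^{1/2} s^1_{ij} (\mtx m_1)_j^{1/2}\bigr)^2$, which is exactly $\|\mtx W_1^{1/2}\mtx S_1\mtx W_1^{1/2}\|_F^2 = \Tr\bigl((\mtx W_1^{1/2}\mtx S_1\mtx W_1^{1/2})(\mtx W_1^{1/2}\mtx S_1\mtx W_1^{1/2})\bigr)$ since that matrix is symmetric; this is $I_1$. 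The third piece is symmetric and gives $I_2$. For the cross term $-2\sum_{i,j,i',j'} s^1_{ij}s^2_{i'j'}\mtx T^*_{ii'}\mtx T^*_{jj'}$, I would recognize the sum over $i,i'$ and $j,j'$ as a matrix product: $\sum_{i,i'} s^1_{ij}\mtx T^*_{ii'}(\cdots) $ rearranges to $\sum_{j,j'}\bigl(\sum_i s^1_{ij}\sum_{i'}\mtx T^*_{ii'}s^2_{i'j'}\bigr)\mtx T^*_{jj'}^{\!\top}$-style contraction, which is precisely $\sum_{j,j'}(\mtx S_1^\top \mtx T^* \mtx S_2)_{j j'}(\mtx T^*)_{j j'} = \Tr\bigl(\mtx S_1\mtx T^*\mtx S_2(\mtx T^*)^\intercal\bigr)$ using symmetry of $\mtx S_1$; this is $-2I_3$.

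The main obstacle — though a mild one — is keeping the index contractions transparent: the quadratic-in-$\mtx T$ structure means each of the four index pairs $(i,i')$, $(j,j')$ must be tracked carefully so that the marginal substitutions are applied to the correct indices, and one must confirm the symmetry of $\mtx S_1$, $\mtx S_2$ (and of $\mtx W_k^{1/2}\mtx S_k\mtx W_k^{1/2}$) is what licenses rewriting the Frobenius-norm squared and the bilinear contraction as traces of matrix products. I would also remark that $I_1$ and $I_2$ genuinely do not involve $\mtx T^*$ (the plan only entered through its marginals, which are fixed), whereas $I_3$ retains the full $\mtx T^*$; this is the structural fact that later lets \Cref{thm: coarsen1} bound only the $I_3$-type term by a spectral quantity. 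No deep inequality is needed here — it is purely the expansion identity — so the proof is short, and the substantive work is deferred to the subsequent lemmas that estimate $I_3$ under coarsening.
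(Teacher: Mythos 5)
Your proposal is correct and follows essentially the same route as the paper's proof: expand the squared cost, use the marginal constraints $\mtx T^*\mtx 1 = \mtx m_1$ and $(\mtx T^*)^\intercal\mtx 1 = \mtx m_2$ to collapse the self-terms into $\Tr\bigl((\mtx W_k^{1/2}\mtx S_k\mtx W_k^{1/2})^2\bigr)$, and identify the cross term as $\Tr\bigl(\mtx S_1\mtx T^*\mtx S_2(\mtx T^*)^\intercal\bigr)$ via symmetry of the $\mtx S_k$. The only difference is presentational — you carry out the expansion at the index level, whereas the paper routes it through Proposition~\ref{prop: decompose_loss} and a Hadamard-product trace identity — and your index bookkeeping checks out.
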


Replacing the $G_1$-related terms in \Cref{eqn:tr_gw} with their $G_1^{(c)}$ counterparts, we know that the distance between $G_1^{(c)}$ and $G_2$ is $\text{GW}_2(G_1^{(c)}, G_2) = I_1' + I_2 - 2I_3'$, where:
\begin{align}
\label{eqn:tr_gwc}
I_1' = \Tr \paren{\brkt{\paren{\mtx W_1^{(c)}}^\frac12 \mtx{S_1^{(c)}} \paren{\mtx W_1^{(c)}}^\frac12}^2}, \qquad
I_3' = \Tr \paren{\mtx S_1^{(c)} \mtx T_{co}^* \mtx S_2 \paren{\mtx T_{co}^*}^{\intercal}}.
\end{align}
$\mtx T_{co}^\ast \in \Pi(\mu_1^{(c)}, \mu_2)$ (\texttt{co} represents the transport from the ``c''oarsened graph to the ``o''riginal graph) here is the optimal transport matrix induced by the GW distance and $\mtx{S}_1^{(c)} \defeq \bar{\mtx C}_w \mtx{S}_1 \bar{\mtx C}_w^\intercal$.

The key step to preserve the GW distance is to control the difference $|I_1 - I_1'|$ and $|I_3 - I_3'|$, since $I_2 = I'_2$ will cancel each other. We will start from the bound of $|I_1 - I_1'|$.

\begin{lemma}\label{lem: bd_diff_I1}
Let $\mtx U \defeq \mtx W_1^\frac12 \mtx{S}_1 \mtx W_1^\frac12$. If the similarity matrix $\mtx S_1\in\mb R^{N\times N}$ is PSD, we have 
\begin{align*}
    0\leq I_1 - I_1' \leq \lambda_{N-n+1} \sum_{i=1}^n \paren{\lambda_{i} - \lambda^{(c)}_{i}} + C_{\mtx U, n}.
\end{align*}
Here, $\lambda_{1} \geq \lambda_{2} \geq \cdots \geq \lambda_{N}$ are the eigenvalues of $\mtx U$, and $C_{\mtx U, n} \defeq \sum_{i=1}^n \lambda_{i} (\lambda_{i} - \lambda_{N-n+i}) + \sum_{i=n+1}^N \lambda_{i}^2$ is non-negative.
\end{lemma}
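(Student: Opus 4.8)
The plan is to rewrite $I_1$ and $I_1'$ as spectral sums of two matrices related by a semi-orthogonal compression, and then apply the Poincar\'e separation theorem (\Cref{thm: poincare_sep}). Clearly $I_1 = \Tr(\mtx U^2) = \sum_{i=1}^{N}\lambda_i^2$ with $\mtx U = \mtx W_1^{1/2}\mtx S_1\mtx W_1^{1/2}$. For $I_1'$, recall $\mtx S_1^{(c)} = \bar{\mtx C}_w\mtx S_1\bar{\mtx C}_w^\intercal$ and $\mtx W_1^{(c)} = \mathrm{diag}(c_1,\dots,c_n)$, and note the one-line identity $(\mtx W_1^{(c)})^{1/2}\bar{\mtx C}_w = \mtx C_w\mtx W_1^{1/2}$, which follows immediately from the definitions of $\bar{\mtx C}_w$ and $\mtx C_w$. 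Substituting gives $(\mtx W_1^{(c)})^{1/2}\mtx S_1^{(c)}(\mtx W_1^{(c)})^{1/2} = \mtx C_w\mtx U\mtx C_w^\intercal = \mtx U^{(c)}$, so $I_1' = \Tr((\mtx U^{(c)})^2) = \sum_{i=1}^{n}(\lambda_i^{(c)})^2$, where the $\lambda_i^{(c)}$ are the (non-negative, since $\mtx U^{(c)}$ is PSD) eigenvalues of $\mtx U^{(c)}$; these coincide with the top-$n$ eigenvalues of $\mtx \Pi_w\mtx U\mtx \Pi_w = \mtx C_w^\intercal\mtx U^{(c)}\mtx C_w$ appearing in the statement, because $\mtx C_w\mtx C_w^\intercal = \mtx I_n$.

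Since $\mtx C_w\mtx C_w^\intercal = \mtx I_n$, \Cref{thm: poincare_sep} gives $\lambda_i \geq \lambda_i^{(c)} \geq \lambda_{N-n+i}$ for $i = 1,\dots,n$, and together with $\lambda_N \geq 0$ this makes $C_{\mtx U,n}$ non-negative. Writing $I_1 - I_1' = \sum_{i=1}^{n}(\lambda_i^2 - (\lambda_i^{(c)})^2) + \sum_{i=n+1}^{N}\lambda_i^2$, each term in the first sum is $\geq 0$ because $0 \leq \lambda_i^{(c)} \leq \lambda_i$, and the second sum is clearly $\geq 0$; this proves $I_1 - I_1' \geq 0$.

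For the upper bound it remains to dominate $\sum_{i=1}^n(\lambda_i^2 - (\lambda_i^{(c)})^2)$ by $\lambda_{N-n+1}\sum_{i=1}^n(\lambda_i - \lambda_i^{(c)})$ plus coarsening-independent terms. The crude estimate $(\lambda_i^{(c)})^2 \leq \lambda_i^2$ is too lossy, and a direct term-by-term bound with coefficient $\lambda_{N-n+1}$ fails because $\lambda_i > \lambda_{N-n+1}$ for small $i$. The fix is, for each $i \leq n$, to route through $\lambda_{N-n+i}$:
\[
\lambda_i^2 - (\lambda_i^{(c)})^2 = \lambda_i(\lambda_i - \lambda_{N-n+i}) + \big(\lambda_i\lambda_{N-n+i} - (\lambda_i^{(c)})^2\big) \leq \lambda_i(\lambda_i - \lambda_{N-n+i}) + \lambda_{N-n+i}(\lambda_i - \lambda_i^{(c)}),
\]
where the last step uses $(\lambda_i^{(c)})^2 \geq \lambda_{N-n+i}\,\lambda_i^{(c)}$, valid because $\lambda_i^{(c)} \geq \lambda_{N-n+i} \geq 0$. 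Replacing $\lambda_{N-n+i}$ by the larger $\lambda_{N-n+1}$ in the last term (legitimate since $\lambda_i - \lambda_i^{(c)} \geq 0$), summing over $i = 1,\dots,n$, and adding back $\sum_{i=n+1}^N\lambda_i^2$ yields exactly $\lambda_{N-n+1}\sum_{i=1}^n(\lambda_i - \lambda_i^{(c)}) + C_{\mtx U,n}$, completing the argument.

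\textbf{Main obstacle.} The reduction to spectra is bookkeeping, though matching $\bar{\mtx C}_w$, $\mtx C_w$, and $\mtx W_1^{(c)}$ must be done with care. The one genuinely non-mechanical ingredient is the intermediate inequality $\lambda_i\lambda_{N-n+i} - (\lambda_i^{(c)})^2 \leq \lambda_{N-n+i}(\lambda_i - \lambda_i^{(c)})$: it is precisely what lets all the interlacing lower endpoints $\lambda_{N-n+i}$ be collapsed into the single constant $\lambda_{N-n+1}$ while keeping the residual $\sum_{i=1}^n\lambda_i(\lambda_i - \lambda_{N-n+i})$ independent of the coarsening --- the structural feature the lemma (and hence \Cref{thm: self-coarsened}) relies on.
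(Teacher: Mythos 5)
Your proposal is correct and follows essentially the same route as the paper: identify $I_1'$ with $\sum_{i=1}^n(\lambda_i^{(c)})^2$, apply the Poincar\'e separation theorem, use $(\lambda_i^{(c)})^2 \geq \lambda_{N-n+i}\lambda_i^{(c)}$ to extract the factor $\lambda_{N-n+i}(\lambda_i-\lambda_i^{(c)})$, and then relax $\lambda_{N-n+i}$ to $\lambda_{N-n+1}$. The only cosmetic difference is that you reach $I_1'=\Tr[(\mtx U^{(c)})^2]$ via the identity $(\mtx W_1^{(c)})^{1/2}\bar{\mtx C}_w=\mtx C_w\mtx W_1^{1/2}$, whereas the paper manipulates the traces directly to get $I_1'=\Tr[(\mtx\Pi_w\mtx U\mtx\Pi_w)^2]$; these are equivalent.
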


We can similarly bound $I_3 - I'_3$ with an additional tool Ruhe's trace inequality (a variant of Von Neumann's trace inequality specific to PSD matrices, c.f.\ Appendix~\ref{sec:von_neumann}). 
\begin{lemma}\label{lem: bd_diff_I3_further}
Let $\mtx P = \mtx W_1^{-\frac12} \mtx T^* \mtx W_2^{-\frac12}$ be the normalized optimal transport matrix, and $\mtx V \defeq \mtx P \mtx W_2^{\frac12} \mtx S_2 \mtx W_2^{\frac12} \mtx P^\intercal$ with eigenvalues $\nu_{1} \geq \nu_{2}\geq \cdots \geq \nu_{N}$. If both $\mtx S_1$ and $\mtx S_2$ are PSD, we have
\begin{align*}
    0\leq I_3 - I_3' \leq \nu_{N-n+1} \sum_{i=1}^n \paren{\lambda_i - \lambda^{(c)}_i} + C_{\mtx U, \mtx V, n}.
\end{align*}
Here, $C_{\mtx U, \mtx V, n} \defeq \sum_{i=1}^n \lambda_{i} \paren{\nu_{i} - \nu_{N-i+1}} + \sum_{i=n+1}^N \lambda_{i} \nu_{i}$ is non-negative.
\end{lemma}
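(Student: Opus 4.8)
The plan is to sandwich $I_3'$ from below by the value of a conveniently coarsened transport plan, so that the gap $I_3-I_3'$ is controlled by a purely spectral quantity, and then to invoke Ruhe's trace inequality (\Cref{lem: Ruhe_ineq}) in the two opposite directions together with the Poincar\'e separation theorem (\Cref{thm: poincare_sep}). Two preliminary observations set the stage. First, with $\mtx P=\mtx W_1^{-1/2}\mtx T^*\mtx W_2^{-1/2}$, \Cref{lem: decomp_GW} expresses $\GW_2^2$ as $I_1+I_2-2I_3$, and cyclicity of the trace rewrites $I_3=\Tr(\mtx S_1\mtx T^*\mtx S_2(\mtx T^*)^\intercal)$ as $\Tr(\mtx U\mtx V)$ with $\mtx U=\mtx W_1^{1/2}\mtx S_1\mtx W_1^{1/2}$ and $\mtx V=\mtx P\mtx W_2^{1/2}\mtx S_2\mtx W_2^{1/2}\mtx P^\intercal$, both PSD by the assumptions on $\mtx S_1,\mtx S_2$. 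Second, because $I_1'$ and $I_2$ do not depend on the transport plan, the minimization defining $\GW_2^2$ makes $I_3=\max_{\mtx T}\Tr(\mtx S_1\mtx T\mtx S_2\mtx T^\intercal)$ over couplings of $(\mtx m_1,\mtx m_2)$ and, likewise, $I_3'=\max_{\mtx T}\Tr(\mtx S_1^{(c)}\mtx T\mtx S_2\mtx T^\intercal)$ over couplings of $(\mtx m_1^{(c)},\mtx m_2)$.

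Nonnegativity $I_3\ge I_3'$ comes from lifting: using the elementary identities relating $\mtx C_p$, $\bar{\mtx C}_w$, $\mtx W_1$ and the supernode masses $c_i$ (which give $\bar{\mtx C}_w\mtx 1_{N_1}=\mtx 1_n$ and $\bar{\mtx C}_w^\intercal\mtx m_1^{(c)}=\mtx m_1$), the matrix $\bar{\mtx C}_w^\intercal\mtx T_{co}^*$ is a valid coupling of $(\mtx m_1,\mtx m_2)$, and substituting it into the ``max'' form of $I_3$ yields exactly $\Tr(\bar{\mtx C}_w\mtx S_1\bar{\mtx C}_w^\intercal\mtx T_{co}^*\mtx S_2(\mtx T_{co}^*)^\intercal)=I_3'$. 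For the upper bound I push in the other direction: $\mtx C_p\mtx T^*$ is a valid coupling of $(\mtx m_1^{(c)},\mtx m_2)$ since $\mtx C_p\mtx m_1=\mtx m_1^{(c)}$ and $\mtx C_p^\intercal\mtx 1_n=\mtx 1_N$, hence $I_3'\ge\Tr(\mtx S_1^{(c)}(\mtx C_p\mtx T^*)\mtx S_2(\mtx C_p\mtx T^*)^\intercal)$. Inserting $\mtx S_1^{(c)}=\bar{\mtx C}_w\mtx S_1\bar{\mtx C}_w^\intercal$ and the paper's identity $\mtx \Pi_w=\mtx W_1^{1/2}\mtx C_p^\intercal\bar{\mtx C}_w\mtx W_1^{-1/2}$, then moving all diagonal mass factors into $\mtx U$ and $\mtx V$ by cyclicity, this right-hand side simplifies to $\Tr(\mtx U\mtx \Pi_w\mtx V\mtx \Pi_w)$. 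Therefore $I_3-I_3'\le\Tr(\mtx U\mtx V)-\Tr\bigl((\mtx \Pi_w\mtx U\mtx \Pi_w)\mtx V\bigr)$, where the second trace has been rewritten by cycling one copy of $\mtx \Pi_w$.

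The crux is the spectral estimate. I apply \Cref{lem: Ruhe_ineq} to $\Tr(\mtx U\mtx V)$ from above, obtaining $\sum_{i=1}^N\lambda_i\nu_i$, and to $\Tr\bigl((\mtx \Pi_w\mtx U\mtx \Pi_w)\mtx V\bigr)$ from below: since $\mtx \Pi_w\mtx U\mtx \Pi_w$ is PSD of rank at most $n$ with sorted eigenvalues $\lambda_1^{(c)}\ge\cdots\ge\lambda_n^{(c)}\ge 0=\cdots=0$, the lower Ruhe bound equals $\sum_{i=1}^n\lambda_i^{(c)}\nu_{N-i+1}$. Hence $I_3-I_3'\le\sum_{i=1}^N\lambda_i\nu_i-\sum_{i=1}^n\lambda_i^{(c)}\nu_{N-i+1}$. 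To match the claimed right-hand side I use the identity $C_{\mtx U,\mtx V,n}=\sum_{i=1}^N\lambda_i\nu_i-\sum_{i=1}^n\lambda_i\nu_{N-i+1}$, which both rewrites the constant and, via the rearrangement inequality, certifies $C_{\mtx U,\mtx V,n}\ge 0$; subtracting, the stated bound minus the bound just obtained telescopes to $\sum_{i=1}^n(\lambda_i-\lambda_i^{(c)})(\nu_{N-n+1}-\nu_{N-i+1})$, each summand of which is nonnegative because $\lambda_i\ge\lambda_i^{(c)}$ by \Cref{thm: poincare_sep} and $\nu_{N-i+1}\le\nu_{N-n+1}$ for $1\le i\le n$ by the descending order of the $\nu$'s.

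The delicate point is not any single calculation but the combination: one must choose the right test plans on the two sides (pushing $\mtx T^*$ down by $\mtx C_p$, pulling $\mtx T_{co}^*$ up by $\bar{\mtx C}_w^\intercal$) and, crucially, apply Ruhe's inequality in \emph{opposite} directions to the two trace terms --- an upper bound for the maximization quantity $I_3$ and a lower bound for the quantity that the coarsened maximization $I_3'$ dominates. This is exactly what makes the leading coefficient come out as $\nu_{N-n+1}$ rather than a crude $\nu_1$; reconciling the remaining eigenvalue sums with $\nu_{N-n+1}\sum_{i=1}^n(\lambda_i-\lambda_i^{(c)})+C_{\mtx U,\mtx V,n}$ is then routine Poincar\'e interlacing bookkeeping, parallel to the proof of \Cref{lem: bd_diff_I1}.
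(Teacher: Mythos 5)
Your proof is correct and follows essentially the same route as the paper's: the same substitute couplings ($\bar{\mtx C}_w^\intercal\mtx T_{co}^*$ for the lower bound, $\mtx C_p\mtx T^*$ for the upper), the same reduction to $\Tr(\mtx U\mtx V)-\Tr(\mtx\Pi_w\mtx U\mtx\Pi_w\,\mtx V)$, and the same two-sided use of Ruhe's inequality followed by Poincar\'e bookkeeping. The only (harmless) variation is that you apply Ruhe's lower bound directly to the $N\times N$ pair $(\mtx\Pi_w\mtx U\mtx\Pi_w,\mtx V)$ with zero-padded eigenvalues, whereas the paper applies it to the compressed $n\times n$ pair and then interlaces $\nu^{(c)}_{n-i+1}\ge\nu_{N-i+1}$ via Poincar\'e.
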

Now we have
\begin{align*}
    \lvert\GW_2^2(G_1, G_2) - \GW_2^2(G_1^{(c)}, G_2)\rvert = \lvert I_1 - I_1' + 2(I_3' - I_3)\rvert \leq \max\{I_1 - I_1', 2(I_3 - I_3')\}
\end{align*}
considering that $I_1 - I'_1 \geq 0$ and $I'_3 - I_3 \leq 0$. 
Then, combining all the pieces above yields the bound in Lemma~\ref{thm: coarsen1}.

\subsection{Proof of Theorem \ref{thm: self-coarsened}}
\label{app: self-coarsened}

With the above dissection of the terms $I'_1, I'_3$, we can now give a finer control of the distance $\text{GW}_2^2(G_1^{(c)}, G_1)$.
We first expand $\text{GW}_2^2(G_1^{(c)}, G_1)$ as
\begin{align*}
\text{GW}_2^2(G_1^{(c)}, G_1) = I_1 + I'_1 - 2 \Tr \paren{\mtx S_1^{(c)} \mtx T_{co}^* \mtx S_1 \paren{\mtx T_{co}^*}^{\intercal}},
\end{align*}
where the notation $\mtx T_{co}^*$ is now abused as the optimal transport matrix induced by $\mathrm{GW}_2(G_1^{(c)}, G_1)$.
Applying the optimality inequality in Lemma~\ref{lem: bd_diff_I3}, we have
\begin{align*}
\text{GW}_2^2(G_1^{(c)}, G_1) \leq I'_1 + I_1 - 2 \Tr \paren{\mtx S_1^{(c)} \mtx C_{p} \mtx W_1 \mtx S_1 \mtx W_1 \mtx C_{p}^\intercal},
\end{align*}
where we remark $\mtx C_{p} \mtx W_1$ is a qualified transport matrix for $G_1^{(c)}$ and $G_1$.

To further simplify the upper bound above, we show the equivalence between $I'_1$ and $\Tr \paren{\mtx S_1^{(c)} \mtx C_{p} \mtx W_1 \mtx S_1 \mtx W_1 \mtx C_{p}^\intercal}$ as follows:
\begin{align*}
\Tr \paren{\mtx S_1^{(c)} \mtx C_{p} \mtx W_1 \mtx S_1 \mtx W_1 \mtx C_{p}^\intercal}
&= \Tr \paren{\bar{\mtx C}_{w} \mtx S_1 \bar{\mtx C}_{w} \mtx C_{p} \mtx W_1 \mtx S_1 \mtx W_1 \mtx C_{p}^\intercal} = \Tr \paren{\mtx \Pi_{w} \mtx U \mtx \Pi_{w} \mtx U}
= \Tr \paren{\mtx \Pi_{w} \mtx \Pi_{w} \mtx U \mtx \Pi_{w} \mtx \Pi_{w} \mtx U} \\
&= \Tr \paren{\mtx \Pi_{w} \mtx U \mtx \Pi_{w} \mtx \Pi_{w} \mtx U \mtx \Pi_{w} } = I'_1.
\end{align*}
Combing the above pieces together, we obtain
\begin{align*}
\text{GW}_2^2(G_1^{(c)}, G_1) \leq I_1 + I'_1 - 2 I'_1 \leq \lambda_{N-n+1} \sum_{i=1}^n \paren{\lambda_{i} - \lambda^{(c)}_{i}} + C_{\mtx U, n},
\end{align*}
which uses Lemma~\ref{lem: bd_diff_I1} for the last inequality.
The proof of Theorem \ref{thm: self-coarsened} is now complete.

\begin{remark}
\normalfont
Theorem~\ref{thm: self-coarsened} can be leveraged to directly control $\abs{\text{GW}_2(G_1^{(c)}, G_2^{(c)}) - \text{GW}_2(G_1, G_2)}$.
Note that $\text{GW}_2$ is a pseudo-metric and satisfies triangular inequality \citep{chowdhury2019gromov}, which implies
\begin{align*}
    \text{GW}_2(G_1^{(c)}, G_2^{(c)}) - \text{GW}_2(G_1, G_2) 
    &\leq \text{GW}_2(G_1^{(c)}, G_1) + \text{GW}_2(G_1, G_2) + \text{GW}_2(G_2, G_2^{(c)}) - \text{GW}_2(G_1, G_2)\\
    &= \text{GW}_2(G_1^{(c)}, G_1) + \text{GW}_2(G_2, G_2^{(c)}) \\
    &\leq \sqrt{2} \paren{\text{GW}_2^2(G_1^{(c)}, G_1) + \text{GW}_2^2(G_2, G_2^{(c)})}^\frac12,
\end{align*}
and similarly we have
\begin{align*}
    \text{GW}_2(G_1, G_2) - \text{GW}_2(G_1^{(c)}, G_2^{(c)}) \leq \text{GW}_2(G_1^{(c)}, G_1) + \text{GW}_2(G_2, G_2^{(c)}) 
\leq \sqrt{2} \paren{\text{GW}_2^2(G_1^{(c)}, G_1) + \text{GW}_2^2(G_2, G_2^{(c)})}^\frac12.
\end{align*}
This implies 
\begin{align*}
    &\quad\big|\text{GW}_2(G_1^{(c)}, G_2^{(c)}) - \text{GW}_2(G_1, G_2)\big| \leq \sqrt{2} \paren{\text{GW}_2^2(G_1^{(c)}, G_1) + \text{GW}_2^2(G_2, G_2^{(c)})}^\frac12 \\
    &\leq \sqrt{2} \bigg(\lambda_{1, N_1-n_1+1} \sum_{i=1}^{n_1} \paren{\lambda_{1,i} - \lambda^{(c)}_{1,i}} + C_{\mtx U_1, n_1} + 
    \lambda_{2, N_2-n_2+1} \sum_{i=1}^{n_2} \paren{\lambda_{2,i} - \lambda^{(c)}_{2,i}} + C_{\mtx U_2, n_2}\bigg)^\frac12.
\end{align*}
Here, the last inequality is due to Theorem~\ref{thm: self-coarsened}.
We comment the above result obtained from a direct application of triangle inequality is indeed weaker than the result stated in Theorem~\ref{thm: coarsen12};
we will then devote the remaining part of this section to the proof thereof.
\end{remark}

\subsection{Proof of Theorem \ref{thm: coarsen12}}
\label{app: coarsen12}

For one side of the result, we can follow the derivation in Lemma~\ref{lem: decomp_GW} and apply Theorem~\ref{thm: self-coarsened} to have
\begin{align*}
\text{GW}_2^2(G_1, G_2) - \text{GW}_2^2(G_1^{(c)}, G_2^{(c)})
=& I_1 - I'_1 + I_2 - I'_2 + 2(I'_3 - I_3) \leq I_1 - I'_1 + I_2 - I'_2 \\
\leq& \lambda_{N_1-n_1+1} \sum_{i=1}^{n_1} \paren{\lambda_{i} - \lambda^{(c)}_{i}} + C_{\mtx U_1, n_1}
+ \lambda_{N_2-n_2+1} \sum_{i=1}^{n_2} \paren{\lambda_{i} - \lambda^{(c)}_{i}} + C_{\mtx U_2, n_2},
\end{align*}
where we recall $I_3 \defeq \Tr \paren{\mtx S_1 \mtx T^* \mtx S_2 \paren{\mtx T^*}^{\intercal}}$ and $I'_3$ now is $\Tr \paren{\mtx S_1^{(c)} \mtx T_{cc}^* \mtx S_2^{(c)} \paren{\mtx T_{cc}^*}^{\intercal}}$ ($T_{cc}^*$ is the optimal transport matrix for $G_1^{(c)}$ and $G_2^{(c)}$).

For the other side, we still decompose the object $\text{GW}_2^2(G_1^{(c)}, G_2^{(c)}) - \text{GW}_2^2(G_1, G_2)$ as $(I'_1 - I_1) + (I'_2 - I_2) + 2 (I_3 - I'_3)$, 
and similarly we can follow Lemma~\ref{lem: bd_diff_I1} to bound the first two terms.
The next task is to disassemble $I_3 - I'_3$.

We first prepare some notations for the analysis. 
To clarify the affiliation, we redefine $\mtx U_1 \defeq \mtx W_1^\frac12 \mtx{S}_1 \mtx W_1^\frac12$ with eigenvalues $\lambda_{1, 1} \geq \lambda_{1, 2} \geq \cdots \geq \lambda_{1, N_1}$, 
$\mtx V_1 = \mtx P \mtx W_2^{\frac12} \mtx S_2 \mtx W_2^{\frac12} \mtx P^\intercal$ with eigenvalues $\nu_{1,1} \geq \nu_{1,2}\geq \cdots \geq \nu_{1,N_1}$\footnote{We index $\mtx P \mtx W_2^{\frac12} \mtx S_2 \mtx W_2^{\frac12} \mtx P^\intercal$ as $\mtx V_1$ since it is associated with $\mtx U_1$ and is an $N_1 \times N_1$ matrix.}, 
$\mtx U_2 = \mtx W_2^\frac12 \mtx{S}_2 \mtx W_2^\frac12$ with eigenvalues $\lambda_{2, 1} \geq \lambda_{2, 2} \geq \cdots \geq \lambda_{2, N_2}$, 
$\mtx V_2 = \mtx P^\intercal \mtx W_1^{\frac12} \mtx S_1 \mtx W_1^{\frac12} \mtx P$ with eigenvalues $\nu_{2,1} \geq \nu_{2,2}\geq \cdots \geq \nu_{2,N_2}$,
and similarly re-introduce $C_{p, 1}, \bar{\mtx C}_{w, 1}, \mtx C_{w, 1}$, $C_{p, 2}, \bar{\mtx C}_{w, 2}, \mtx C_{w, 2}$, and
\begin{align*}
\mtx\Pi_{w, 1} \defeq \mtx W_1^\frac{1}{2} \mtx C_{p, 1}^\intercal \bar{\mtx C}_{w, 1}\mtx W_1^{-\frac{1}{2}} = \mtx C_{w, 1}^\intercal\mtx C_{w, 1}, \qquad 
\mtx\Pi_{w, 2} \defeq \mtx W_2^\frac{1}{2}\mtx C_{p, 2}^\intercal \bar{\mtx C}_{w, 2}\mtx W_2^{-\frac{1}{2}} = \mtx C_{w, 2}^\intercal\mtx C_{w, 2}.
\end{align*}
Recalling Lemma~\ref{lem: bd_diff_I3}, we can analogously obtain $I'_3 - I_3 \leq 0$; 
replacing $\mtx T_{cc}^*$ with $\mtx C_{p, 1} \mtx T^* \mtx C_{p, 2}^\intercal$, we have
\begin{align*}
I_3 - I'_3 
\leq &\Tr \paren{\mtx S_1 \mtx T^* \mtx S_2 \paren{\mtx T^*}^{\intercal}} - \Tr \paren{\mtx S_1^{(c)} \mtx C_{p, 1} \mtx T^* \mtx C_{p, 2}^\intercal \mtx S_2^{(c)} \mtx C_{p, 2} \paren{\mtx T^*}^\intercal \mtx C_{p, 1}^\intercal} \\
= &\Tr \paren{\mtx U_1 \mtx P \mtx U_2 \mtx P^{\intercal}} - \Tr \paren{\mtx\Pi_{w, 1} \mtx U_1 \mtx\Pi_{w, 1} \mtx P \mtx\Pi_{w, 2} \mtx U_2 \mtx\Pi_{w, 2} \mtx P^{\intercal}},
\end{align*}
where we apply the same derivation as in Equation~(\ref{eqn: introduce_proj}) to obtain the second line above.
For simplicity, we let $\mtx U_1^\Pi \defeq \mtx\Pi_{w, 1} \mtx U_1 \mtx\Pi_{w, 1}$ and $\mtx U_2^\Pi \defeq \mtx\Pi_{w, 2} \mtx U_2 \mtx\Pi_{w, 2}$.
We can now bound $I_3 - I'_3$ as
\begin{equation}
\label{eqn:coarsen12bd}
\begin{aligned}
I_3 - I'_3 
\leq& \Tr \paren{\mtx U_1 \mtx P \mtx U_2 \mtx P^{\intercal}} - \Tr \paren{\mtx U_1^\Pi \mtx P \mtx U_2^\Pi \mtx P^{\intercal}} \\
=& \Tr \paren{\mtx U_1 \mtx P \mtx U_2 \mtx P^{\intercal}} - \Tr \paren{\mtx U_1^\Pi \mtx P \mtx U_2 \mtx P^{\intercal}} + \Tr \paren{\mtx U_1^\Pi \mtx P \mtx U_2 \mtx P^{\intercal}} - \Tr \paren{\mtx U_1^\Pi \mtx P \mtx U_2^\Pi \mtx P^{\intercal}} \\
=& \Tr\brkt{\paren{\mtx U_1 - \mtx U_1^\Pi} \mtx P \mtx U_2 \mtx P^{\intercal}} + \Tr\brkt{\mtx U_1^\Pi \mtx P \paren{\mtx U_2 - \mtx U_2^\Pi} \mtx P^{\intercal}} \\
=& \Tr\brkt{\paren{\mtx U_1 - \mtx U_1^\Pi} \mtx P \mtx U_2 \mtx P^{\intercal}} + 
\Tr\brkt{\mtx U_1 \mtx P \paren{\mtx U_2 - \mtx U_2^\Pi} \mtx P^{\intercal}} \\
&\qquad - \Tr\brkt{\mtx U_1 \mtx P \paren{\mtx U_2 - \mtx U_2^\Pi} \mtx P^{\intercal}} +
\Tr\brkt{\mtx U_1^\Pi \mtx P \paren{\mtx U_2 - \mtx U_2^\Pi} \mtx P^{\intercal}} \\
=& \Tr\brkt{\paren{\mtx U_1 - \mtx U_1^\Pi} \mtx V_1} + \Tr\brkt{\mtx V_2 \paren{\mtx U_2 - \mtx U_2^\Pi}} - \Tr\brkt{\paren{\mtx U_1 - \mtx U_1^\Pi} \mtx P \paren{\mtx U_2 - \mtx U_2^\Pi} \mtx P^{\intercal}}.
\end{aligned} 
\end{equation}

The first two terms in the last line above can be directly addressed by Lemma~\ref{lem: bd_diff_I3_further};
for the last term, we can bound it as
\begin{align*}
\abs{\Tr\brkt{\paren{\mtx U_1 - \mtx U_1^\Pi} \mtx P \paren{\mtx U_2 - \mtx U_2^\Pi} \mtx P^{\intercal}}} 
\leq& \norm{\paren{\mtx U_1 - \mtx U_1^\Pi} \mtx P}_F \norm{\paren{\mtx U_2 - \mtx U_2^\Pi} \mtx P^{\intercal}}_F \\
\leq& \norm{\mtx U_1 - \mtx U_1^\Pi}_F \norm{\mtx P} \norm{\mtx U_2 - \mtx U_2^\Pi}_F \norm{\mtx P}
\leq \norm{\mtx U_1 - \mtx U_1^\Pi}_F \norm{\mtx U_2 - \mtx U_2^\Pi}_F,
\end{align*}
where Lemma~\ref{lem:normalized_transport} shows $\norm{\mtx P} \leq 1$ and justifies the last inequality.

We now give another useful fact that $I_1 - I'_1 = \norm{\mtx U_1 - \mtx U_1^\Pi}_F^2$:
\begin{align*}
I_1 - I'_1 \overset{(\rom 1)}{=}& \Tr \brkt{\mtx U_1^2 -\paren{\mtx U_1^\Pi}^2} = \Tr \brkt{\mtx U_1^2 + \paren{\mtx U_1^\Pi}^2} - 2 \Tr \brkt{\paren{\mtx\Pi_{w, 1} \mtx U_1 \mtx\Pi_{w, 1}}^2} \\
=& \Tr \brkt{\mtx U_1^2 + \paren{\mtx U_1^\Pi}^2} 
    - \Tr \paren{\mtx\Pi_{w, 1} \mtx U_1 \mtx\Pi_{w, 1} \mtx U_1}
    - \Tr \paren{\mtx U_1 \mtx\Pi_{w, 1} \mtx U_1 \mtx\Pi_{w, 1}} 
= \Tr \brkt{\paren{\mtx U_1 - \mtx U_1^\Pi}^2} \\
=& \norm{\mtx U_1 - \mtx U_1^\Pi}_F^2,
\end{align*}
where $(\rom 1)$ comes from Equation~(\ref{eqn:I1}).
Analogously we have $I_2 - I'_2 = \norm{\mtx U_2 - \mtx U_2^\Pi}_F^2$.
Combining the above pieces together we can bound the object as
\begin{align*}
\text{GW}_2^2(G_1^{(c)}, G_2^{(c)}) - \text{GW}_2^2(G_1, G_2) 
\leq& -\norm{\mtx U_1 - \mtx U_1^\Pi}_F^2 - \norm{\mtx U_2 - \mtx U_2^\Pi}_F^2 + \\
&\quad 2 \Tr\brkt{\paren{\mtx U_1 - \mtx U_1^\Pi} \mtx V_1} + 2 \Tr\brkt{\mtx V_2 \paren{\mtx U_2 - \mtx U_2^\Pi}} + 2 \norm{\mtx U_1 - \mtx U_1^\Pi}_F \norm{\mtx U_2 - \mtx U_2^\Pi}_F \\
\leq& 2 \Tr\brkt{\paren{\mtx U_1 - \mtx U_1^\Pi} \mtx V_1} + 2 \Tr\brkt{\mtx V_2 \paren{\mtx U_2 - \mtx U_2^\Pi}} \\
\overset{(\rom 1)}{\leq}& 2 \cdot \left[ \nu_{1, N_1 - n_1 + 1} \sum_{i=1}^{n_1} \paren{\lambda_{1, i} - \lambda^{(c)}_{1, i}} + C_{\mtx U_1, \mtx V_1, n_1} + \right.\\
&\quad \left. \nu_{2, N_2 - n_2 + 1} \sum_{i=1}^{n_2} \paren{\lambda_{2, i} - \lambda^{(c)}_{2, i}} + C_{\mtx U_2, \mtx V_2, n_2} \right],
\end{align*}
where we reuse Inequality~(\ref{eqn:I1}) to attain $(\rom 1)$.
The proof of Theorem~\ref{thm: coarsen12} is then completed.

\subsection{Proof of some technical results}\label{app: proof_in_main_thm}
\subsubsection{Proof of Lemma \ref{lem: decomp_GW}}
\begin{proof}
Following the definition in \Cref{eqn:gwd}, we rewrite the GW distance in the trace form and have
\begin{align}
\dotp{\mtx M}{\mtx T^*} &= \dotp{f_{1}(\mtx S_1) \mtx m_1 \mtx{1}_{N_{2}}^{\intercal}}{\mtx T^*} + 
\dotp{\mtx{1}_{N_{1}} \mtx m_2^{\intercal} f_{2}(\mtx S_2)^{\intercal}}{\mtx T^*} - 
\dotp{h_{1}(\mtx S_1) \mtx T^* h_{2}(\mtx S_2)^{\intercal}}{\mtx T^*} \nonumber \\
&= \Tr \paren{f_{1}(\mtx S_1) \mtx m_1 \mtx{1}_{N_{2}}^{\intercal} \paren{\mtx T^*}^{\intercal}} + 
\Tr \paren{f_{2}(\mtx S_2) \mtx m_2 \mtx{1}_{N_{1}}^{\intercal} \mtx T^*} -
\Tr \paren{h_{1}(\mtx S_1) \mtx T^* h_{2}(\mtx S_2)^{\intercal} \paren{\mtx T^*}^{\intercal}} \nonumber \\
&= \Tr \paren{\mtx m_1^{\intercal} f_{1}(\mtx S_1) \mtx m_1} + 
\Tr \paren{\mtx m_2^{\intercal} f_{2}(\mtx S_2) \mtx m_2} -
\Tr \paren{h_{1}(\mtx S_1) \mtx T^* h_{2}(\mtx S_2)^{\intercal} \paren{\mtx T^*}^{\intercal}};
\label{eqn:expansion}
\end{align}
the third equation above holds because for any $\mtx T \in \Pi(\mu_1, \mu_2)$, $\mtx T \mtx{1}_{N_{2}} = \mtx m_1$ and $\mtx T^{\intercal} \mtx{1}_{N_{1}} = \mtx m_2$.

In the classical square loss case, we can immediately have $f_1(\mtx S_1) = \mtx{S_1} \odot \mtx{S_1}, f_2(\mtx S_2) = \mtx{S_2} \odot \mtx{S_2}, h_{1}(\mtx S_1) = \mtx S_1$ and $h_{2}(\mtx S_2) = 2 \mtx S_2$,
where $\odot$ denotes the Hadamard product of two matrices with the same size.
We can accordingly expand the first term in \Cref{eqn:expansion} as
\begin{align*}
\Tr \paren{\mtx m_1^{\intercal} f_{1}(\mtx S_1) \mtx m_1} 
= \mtx m_1^{\intercal} \paren{\mtx{S_1} \odot \mtx{S_1}} \mtx m_1
= \Tr \paren{\mtx{S_1}^{\intercal} \text{diag}(\mtx{m_1}) \mtx{S_1} \text{diag}(\mtx{m_1})},
\end{align*}
in which the proof of the last equation is provided in a summary sheet by \citet{minka2000old}. 
We note $\mtx W_1 \defeq \text{diag}(\mtx{m_1})$ and $\mtx S_1$ is constructed symmetric; 
combining the pieces above we have
\begin{align*}
\Tr \paren{\mtx m_1^{\intercal} f_{1}(\mtx S_1) \mtx m_1} = 
\Tr \paren{\paren{\mtx W_1^\frac12 \mtx{S_1} \mtx W_1^\frac12} \paren{\mtx W_1^\frac12 \mtx{S_1} \mtx W_1^\frac12}},
\end{align*}
and similarly we obtain $\Tr \paren{\mtx m_2^{\intercal} f_{2}(\mtx S_2) \mtx m_2} = \Tr \paren{\paren{\mtx W_2^\frac12 \mtx{S_2} \mtx W_2^\frac12} \paren{\mtx W_2^\frac12 \mtx{S_2} \mtx W_2^\frac12}}$.
The GW distance (\ref{eqn:expansion}) can be therefore represented as
\begin{align}
\label{eqn:tr_gw}
\underbrace{\Tr \paren{\paren{\mtx W_1^\frac12 \mtx{S_1} \mtx W_1^\frac12} \paren{\mtx W_1^\frac12 \mtx{S_1} \mtx W_1^\frac12}}}_{=\,:I_1} + 
\underbrace{\Tr \paren{\paren{\mtx W_2^\frac12 \mtx{S_2} \mtx W_2^\frac12} \paren{\mtx W_2^\frac12 \mtx{S_2} \mtx W_2^\frac12}}}_{=\,: I_2} - 
2 \underbrace{\Tr \paren{\mtx S_1 \mtx T^* \mtx S_2 \paren{\mtx T^*}^{\intercal}}}_{=\,:I_3}.
\end{align}
\end{proof}

\subsubsection{Proof of Lemma \ref{lem: bd_diff_I1}}
\begin{proof}
First, recall that $\mtx\Pi_{w} = \mtx W_1^\frac{1}{2}\mtx C_{p}^\intercal\bar{\mtx C}_{w}\mtx W_1^{-\frac{1}{2}} = \mtx C_{w}^\intercal\mtx C_{w}$. So, we have
\begin{equation}
\label{eqn: introduce_proj}
\begin{aligned}
    I_1' &= \Tr \paren{\brkt{\paren{\mtx C_{p}\mtx W_1\mtx C_{p}^\intercal}^\frac{1}{2}\paren{\bar{\mtx C}_{w}\mtx S_1\bar{\mtx C}_{w}^\intercal}\paren{\mtx C_{w}\mtx W_1\mtx C_{w}^\intercal}^\frac{1}{2}}^2}\\
    &= \Tr\brkt{{\paren{\mtx C_{p}\mtx W_1\mtx C_{p}^\intercal}\paren{\bar{\mtx C}_{w}\mtx S_1\bar{\mtx C}_{w}^\intercal}\paren{\mtx C_{w}\mtx W_1\mtx C_{w}^\intercal}\paren{\bar{\mtx C}_{w}\mtx S_1\bar{\mtx C}_{w}^\intercal}}}\\
    &= \Tr \paren{\mtx W_1^\frac{1}{2}\mtx C_{p}^\intercal\bar{\mtx C}_{w}\mtx S_1\bar{\mtx C}_{w}^\intercal\mtx C_{w}\mtx W_1\mtx C_{p}^\intercal\bar{\mtx C}_{w}\mtx S_1\bar{\mtx C}_{w}^\intercal\mtx C_{p}\mtx W_1^\frac{1}{2}}\\
    &= \Tr\paren{\mtx\Pi_{w}\mtx U_1\mtx\Pi_{w}^\intercal\mtx\Pi_{w}\mtx U_1\mtx\Pi_{w}}\\
    &= \Tr\brkt{\paren{\mtx\Pi_{w}\mtx U_1\mtx\Pi_{w}}^2}.
\end{aligned}
\end{equation}

This implies $I_1 - I_1' = \Tr \brkt{\mtx U^2 - \paren{\mtx\Pi_{w} \mtx U \mtx \Pi_{w}}^2}$. Applying Lemma~\ref{thm:trace_bound} yields $I_1 - I_1' \geq 0$.

To bound the other direction, we have
\begin{align}
\label{eqn:I1}
&\quad\, I_1 - I'_1
= \Tr \brkt{\mtx U^2 - \paren{\mtx\Pi_{w} \mtx U \mtx \Pi_{w}}^2} \nonumber \\
&= \sum_{i=1}^{N_1} \lambda_{i}^2 - \sum_{i=1}^{n_1} \paren{\lambda^{(c)}_{i}}^2 
\stackrel{\ri}{\leq} \sum_{i=1}^{N_i} \lambda_{i}^2 - \sum_{i=1}^{n_i} \lambda^{(c)}_{i} \lambda_{N-n+i} \nonumber \\
&= \sum_{i=1}^{n_1} \paren{\lambda_{i} - \lambda^{(c)}_{i}} \lambda_{N-n+i} + \sum_{i=1}^{n_1} \lambda_{i} (\lambda_{i} - \lambda_{N-n+i}) + \!\!\sum_{i=n_1+1}^{N_1} \lambda_{i}^2 \nonumber \\
&= \sum_{i=1}^{n_1} \paren{\lambda_{i} - \lambda^{(c)}_{i}} \lambda_{N_1-n_1+i} + C_{\mtx U, n_1} \nonumber \\
&\stackrel{\rii}{\leq} \lambda_{N_1-n_1+1} \sum_{i=1}^{n_1} \paren{\lambda_{i} - \lambda^{(c)}_{i}} + C_{\mtx U, n_1}, 
\end{align}
Here, both (i) and (ii) are by Theorem \ref{thm: poincare_sep}. 
\end{proof}

\subsubsection{Proof of Lemma \ref{lem: bd_diff_I3_further}}
\begin{proof}
By applying Lemma \ref{lem: bd_diff_I3}, we have
\begin{align*}
0\leq \Tr \paren{\mtx S_1 \mtx T^* \mtx S_2 \paren{\mtx T^*}^{\intercal} - \mtx S_1^{(c)} \mtx T_{co}^* \mtx S_2 \paren{\mtx T_{co}^*}^{\intercal}} 
&\leq \Tr \brkt{\paren{\mtx U - \mtx \Pi_{w} \mtx U \mtx \Pi_{w}} \mtx V}\\
&= \Tr \paren{\mtx U \mtx V} - \Tr \paren{\mtx C_{w} \mtx U \mtx C_{w}^\intercal \mtx C_{w} \mtx V \mtx C_{w}^\intercal}.
\end{align*}
Recall that $\set{\lambda^{(c)}_{i}}_{i=1}^{n_1}$ are eigenvalues of $\mtx C_{w}\mtx U\mtx C_{w}^\intercal$, and let $\nu_{1} \geq \nu_{2}\geq \cdots \geq \nu_{n_1}$ be eigenvalues of $\mtx C_{w}\mtx V\mtx C_{w}^\intercal$.
Applying Ruhe's trace inequality (c.f.\ Appendix~\ref{sec:von_neumann}), we further have
\begin{align}
&\Tr \paren{\mtx U \mtx V} - \Tr \paren{\mtx C_{w} \mtx U \mtx C_{w}^\intercal \mtx C_{w} \mtx V \mtx C_{w}^\intercal}
\stackrel{\ri}{\leq} \sum_{i=1}^{N_1} \lambda_{i} \nu_{i} - \sum_{i=1}^{n_1} \lambda^{(c)}_{i} \nu^{(c)}_{n_1-i+1}
\stackrel{\rii}{\leq} \sum_{i=1}^{N_1} \lambda_{i} \nu_{i} - \sum_{i=1}^{n_1} \lambda^{(c)}_{i} \nu_{N_1-i+1} \nonumber \\
&= \sum_{i=1}^{n_1} \paren{\lambda_{i} - \lambda^{(c)}_{i}} \nu_{N_1-i+1} + \sum_{i=1}^{n_1} \lambda_{i} \paren{\nu_{i} - \nu_{N_1-i+1}} + \sum_{i=n_1+1}^{N_1} \lambda_{i} \nu_{i}
= \sum_{i=1}^{n_1} \paren{\lambda_{i} - \lambda^{(c)}_{i}} \nu_{N_1-i+1} + C_{\mtx U, \mtx V, n_1} \nonumber \\
&\stackrel{\riii}{\leq} \nu_{N_1-n_1+1} \sum_{i=1}^{n_1} \paren{\lambda_{i} - \lambda^{(c)}_{i}} + C_{\mtx U, \mtx V, n_1},
\label{eqn:I3}
\end{align}
(i) is by Ruhe's trace inequality (Lemma \ref{lem: Ruhe_ineq}), since both $\mtx U$ and $\mtx V$ are PSD, and both (ii) and (iii) are by Poincar\'e separation theorem (Theorem \ref{thm: poincare_sep}).
\end{proof}

\subsection{Other technical lemmas}
\label{sec:normalized_transport}

\begin{lemma}\label{lem: bd_diff_I3}
We have
\begin{align*}
    0 \leq \Tr \paren{\mtx S_1 \mtx T^* \mtx S_2 \paren{\mtx T^*}^{\intercal} - \mtx S_1^{(c)} \mtx T_{co}^* \mtx S_2 \paren{\mtx T_{co}^*}^{\intercal}} \leq \Tr\brkt{\paren{\mtx U - \mtx\Pi_{w}\mtx U\mtx\Pi_{w}}\mtx V}.
\end{align*}
\end{lemma}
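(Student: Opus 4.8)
The plan is to deduce both inequalities from the optimality of the transport plans that define $I_3$ and $I_3'$. Recall from the proof of \Cref{lem: decomp_GW} that for the squared transportation cost one has $\dotp{\mtx M}{\mtx T} = I_1 + I_2 - 2\Tr\paren{\mtx S_1 \mtx T \mtx S_2 \mtx T^\intercal}$ with $I_1,I_2$ independent of $\mtx T$ (using that $\mtx S_1,\mtx S_2$ are symmetric). Thus minimizing the GW objective is the same as maximizing $\Tr\paren{\mtx S_1 \mtx T \mtx S_2 \mtx T^\intercal}$, so that
\begin{align*}
I_3 = \max_{\mtx T \in \Pi\paren{\mtx m_1, \mtx m_2}} \Tr\paren{\mtx S_1 \mtx T \mtx S_2 \mtx T^\intercal}, \qquad I_3' = \max_{\mtx T \in \Pi\paren{\mtx m_1^{(c)}, \mtx m_2}} \Tr\paren{\mtx S_1^{(c)} \mtx T \mtx S_2 \mtx T^\intercal},
\end{align*}
where $\mtx m_1^{(c)} = \mtx C_p \mtx m_1$ is the mass vector of the coarsened graph. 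Each of the two inequalities will then come from feeding a suitably chosen feasible plan into one of these two variational problems.

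For the left inequality $I_3 \geq I_3'$, I would take the optimal coarsened plan $\mtx T_{co}^*$ and lift it to $\mtx T \defeq \bar{\mtx C}_w^\intercal \mtx T_{co}^*$, which redistributes the mass a supernode sends to each target node among the members of its partition in proportion to $\mtx m_1$. The identities $\bar{\mtx C}_w^\intercal \mtx m_1^{(c)} = \mtx m_1$ and $\bar{\mtx C}_w \mtx 1_{N_1} = \mtx 1_n$ show $\mtx T \in \Pi\paren{\mtx m_1, \mtx m_2}$, and cyclicity of the trace together with $\mtx S_1^{(c)} = \bar{\mtx C}_w \mtx S_1 \bar{\mtx C}_w^\intercal$ gives $\Tr\paren{\mtx S_1 \mtx T \mtx S_2 \mtx T^\intercal} = \Tr\paren{\bar{\mtx C}_w \mtx S_1 \bar{\mtx C}_w^\intercal \mtx T_{co}^* \mtx S_2 \paren{\mtx T_{co}^*}^\intercal} = \Tr\paren{\mtx S_1^{(c)} \mtx T_{co}^* \mtx S_2 \paren{\mtx T_{co}^*}^\intercal} = I_3'$; the inequality then follows from the maximality of $I_3$.

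For the right inequality I would go the other way and pool the optimal original plan $\mtx T^*$ into $\mtx T \defeq \mtx C_p \mtx T^*$. Feasibility $\mtx T \in \Pi\paren{\mtx m_1^{(c)}, \mtx m_2}$ is immediate from $\mtx C_p \mtx m_1 = \mtx m_1^{(c)}$ and $\mtx C_p^\intercal \mtx 1_n = \mtx 1_{N_1}$, hence $I_3' \geq \Tr\paren{\mtx S_1^{(c)} \mtx C_p \mtx T^* \mtx S_2 \paren{\mtx T^*}^\intercal \mtx C_p^\intercal}$. It remains to identify this trace: plugging in $\mtx S_1^{(c)} = \bar{\mtx C}_w \mtx S_1 \bar{\mtx C}_w^\intercal$, the relation $\mtx\Pi_w = \mtx W_1^{\frac12} \mtx C_p^\intercal \bar{\mtx C}_w \mtx W_1^{-\frac12}$ together with its symmetry, and the normalization $\mtx T^* = \mtx W_1^{\frac12} \mtx P \mtx W_2^{\frac12}$ — the same chain of cyclic-trace rearrangements already carried out in \eqref{eqn: introduce_proj} — turns it into $\Tr\paren{\mtx\Pi_w \mtx U \mtx\Pi_w \mtx V}$. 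Since the same substitution yields $I_3 = \Tr\paren{\mtx S_1 \mtx T^* \mtx S_2 \paren{\mtx T^*}^\intercal} = \Tr\paren{\mtx U \mtx V}$, subtracting gives $I_3 - I_3' \leq \Tr\brkt{\paren{\mtx U - \mtx\Pi_w \mtx U \mtx\Pi_w} \mtx V}$.

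The only real obstacle is the bookkeeping around the two transport-plan surgeries: one must verify that upsampling by $\bar{\mtx C}_w^\intercal$ and pooling by $\mtx C_p$ each land in the correct transport polytope and each leave the relevant trace functional unchanged, so that the optimality of $I_3$ and $I_3'$ may legitimately be invoked. Once the bridging identities $\mtx S_1^{(c)} = \bar{\mtx C}_w \mtx S_1 \bar{\mtx C}_w^\intercal$ and $\mtx\Pi_w = \mtx W_1^{\frac12} \mtx C_p^\intercal \bar{\mtx C}_w \mtx W_1^{-\frac12}$ are in hand, everything reduces to cyclic-trace manipulations of the kind used in the proof of \Cref{lem: bd_diff_I1}; no spectral input, Ruhe's inequality, or PSD hypothesis is needed for this lemma, and these enter only in the downstream bounds on $\Tr\brkt{\paren{\mtx U - \mtx\Pi_w \mtx U \mtx\Pi_w} \mtx V}$.
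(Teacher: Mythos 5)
Your proposal is correct and follows essentially the same route as the paper's proof: the lower bound comes from lifting $\mtx T_{co}^*$ to $\bar{\mtx C}_w^\intercal \mtx T_{co}^*$ and invoking the optimality of $\mtx T^*$, and the upper bound comes from pooling $\mtx T^*$ to $\mtx C_p \mtx T^*$ and invoking the optimality of $\mtx T_{co}^*$, followed by the same cyclic-trace identification of $\Tr\paren{\mtx S_1^{(c)} \mtx C_p \mtx T^* \mtx S_2 \paren{\mtx T^*}^\intercal \mtx C_p^\intercal}$ with $\Tr\paren{\mtx\Pi_w \mtx U \mtx\Pi_w \mtx V}$. Your observation that no PSD hypothesis or Ruhe's inequality is needed at this stage is also consistent with the paper, where those ingredients enter only in the subsequent bound on $\Tr\brkt{\paren{\mtx U - \mtx\Pi_w \mtx U \mtx\Pi_w}\mtx V}$.
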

\begin{proof}
The proof is based on the optimality of $\mtx T_{co}^*$, i.e. the GW distance induced by $\mtx T_{co}^*$ must be upper bounded by any other transport matrix.
Intuitively, we can imagine the mass of a cluster center is transported to the same target nodes in $G_2$ as the original source nodes within this cluster, which corresponds to the transport matrix $\wt{\mtx T}_{co} \defeq \mtx C_{1,p} \mtx T^\ast$.
We can verify $\widetilde{\mtx T}_c\in\Pi(\mu_1^{(c)}, \mu_2)$ is feasible, since $(\mtx C_{1, p}\mtx T^\ast)\mtx 1_{N_2} = \mtx C_{1, p}\mtx m_1 = \mtx m_1^{(c)}$ and $\paren{\mtx C_{1,p} \mtx T^*}^\intercal \mtx 1_{n_1} = (\mtx T^\ast)^\intercal \mtx 1_{N_1} = \mtx m_2$.

To derive the upper bound, applying the optimality of $\mtx T_{co}^\ast$ yields
\begin{align}
&\quad\,\Tr \paren{\mtx S_1 \mtx T^* \mtx S_2 \paren{\mtx T^*}^{\intercal} - \mtx S_1^{(c)} \mtx T_{co}^* \mtx S_2 \paren{\mtx T_{co}^*}^{\intercal}}
\leq \Tr \paren{\mtx S_1 \mtx T^* \mtx S_2 \paren{\mtx T^*}^{\intercal}} - \Tr \paren{\mtx S_1^{(c)} \wt{\mtx T}_{co} \mtx S_2 \wt{\mtx T}_{co}^{\intercal}}\\
&= \Tr\paren{\mtx S_1 \mtx T^* \mtx S_2 \paren{\mtx T^*}^{\intercal}} - \Tr \paren{\big(\bar{\mtx C}_{w}\mtx S_1\bar{\mtx C}_{w}^\intercal\big)\big(\mtx C_{p}\mtx T^\ast\big)\mtx S_2\big(\mtx C_{p}\mtx T^\ast\big)^\intercal}\\
&= \Tr\paren{\mtx S_1\mtx T^\ast\mtx S_2(\mtx T^\ast)^\intercal} - \Tr\paren{\bar{\mtx C}_{w}\mtx S_1\bar{\mtx C}_{w}^\intercal\mtx C_{p}\mtx T^\ast\mtx S_2(\mtx T^\ast)^\intercal\mtx C_{p}^\intercal}\\
&= \Tr\paren{\mtx S_1\big(\mtx W_1^\frac{1}{2}\mtx P\mtx W_2^\frac{1}{2}\big)\mtx S_2\big(\mtx W_1^\frac{1}{2}\mtx P\mtx W_2^\frac{1}{2}\big)^\intercal} - \Tr \paren{\bar{\mtx C}_{w}\mtx{S}_1 \bar{\mtx C}_{w}^\intercal \mtx C_{p} \big(\mtx W_1^{\frac12} \mtx P \mtx W_2^{\frac12}\big) \mtx S_2 \big(\mtx W_2^{\frac12} \mtx P^\intercal \mtx W_1^{\frac12}\big) \mtx C_{p}^\intercal} \nonumber \\
&= \Tr \paren{\mtx{S}_1 \mtx W_1^{\frac12} \mtx P \mtx W_2^{\frac12} \mtx S_2 \mtx W_2^{\frac12} \mtx P^\intercal \mtx W_1^{\frac12}} - 
\Tr \paren{\bar{\mtx C}_{w} \mtx{S}_1 \bar{\mtx C}_{w}^\intercal \mtx C_{p} \mtx W_1^{\frac12} \mtx P \mtx W_2^{\frac12} \mtx S_2 \mtx W_2^{\frac12} \mtx P^\intercal \mtx W_1^{\frac12} \mtx C_{p}^\intercal} \nonumber \\
&= \Tr \brkt{\paren{
\underbrace{\mtx W_1^{\frac12} \mtx{S}_1 \mtx W_1^{\frac12} - \mtx W_1^{\frac12} \mtx C_{p}^\intercal \bar{\mtx C}_{w} \mtx{S}_1 \bar{\mtx C}_{w}^\intercal \mtx C_{p} \mtx W_1^{\frac12}}_{\mtx D_1}} \mtx P \mtx W_2^{\frac12} \mtx S_2 \mtx W_2^{\frac12} \mtx P^\intercal}.
\label{eqn:upside_bd}
\end{align}
The treatment is similar for the lower bound. We replace $\mtx T^*$ with $\wt{\mtx T} \defeq \bar{\mtx C}_{w}^\intercal \mtx T_{co}^* \in \Pi(\mu_1, \mu_2)$.
Then again due to the optimality of $\mtx T^*$:
\begin{align*}
\Tr \paren{\mtx S_1^{(c)} \mtx T_{co}^* \mtx S_2 \paren{\mtx T_{co}^*}^{\intercal} - \mtx S_1 \mtx T^* \mtx S_2 \paren{\mtx T^*}^{\intercal}} \leq \Tr\paren{\mtx S_1^{(c)}\mtx T_{co}^\ast\mtx S_2(\mtx T_{co}^\ast)^\intercal} - \Tr\paren{\mtx S_1\widetilde{\mtx T}\mtx S_2\widetilde{\mtx T}^\intercal} = 0
\end{align*}
given our definition. 
\end{proof}
\textbf{Remark.}
An intuitive scheme to control the upper bound (\ref{eqn:upside_bd}) is to upper bound the trace of the $G_1$-related matrix difference $\mtx D_1$, since in coarsening $G_1$ we have no information about $G_2$;
we will shortly showcase the term is the key to bound the whole GW distance difference $\abs{(\ref{eqn:tr_gw}) - (\ref{eqn:tr_gwc})}$ as well.

\begin{lemma}
\label{lem:normalized_transport}
Consider a non-negative matrix $\mtx T \in \mb R^{N_1 \times N_2}$ in which all the elements $t_{ij} \geq 0$. 
We keep to denote $\mtx W_1 = \mathrm{diag}\paren{\mtx T \mtx 1_{N_2}}, \mtx W_2 = \mathrm{diag}\paren{\mtx T^\intercal \mtx 1_{N_1}}$,
and $\mtx P = \mtx W_1^{-\frac12} \mtx T \mtx W_2^{-\frac12}$.
Then we have $\|\mtx P\| \leq 1$.
\end{lemma}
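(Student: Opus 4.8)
The plan is to bound the spectral norm through its variational characterization $\norm{\mtx P} = \sup\{\mtx x^\intercal \mtx P \mtx y : \norm{\mtx x} = \norm{\mtx y} = 1\}$ and to control the resulting bilinear form by a single application of the Cauchy--Schwarz inequality. Writing $w_{1,i} = (\mtx W_1)_{ii} = \sum_j t_{ij}$ and $w_{2,j} = (\mtx W_2)_{jj} = \sum_i t_{ij}$, the first step is to expand $\mtx x^\intercal \mtx P \mtx y = \sum_{i,j} t_{ij}\, x_i y_j / \paren{\sqrt{w_{1,i}}\sqrt{w_{2,j}}}$ and, using $t_{ij}\ge 0$, to split $t_{ij} = \sqrt{t_{ij}}\cdot\sqrt{t_{ij}}$, so that the sum is the inner product of the two families $\big(\sqrt{t_{ij}}\,x_i/\sqrt{w_{1,i}}\big)_{i,j}$ and $\big(\sqrt{t_{ij}}\,y_j/\sqrt{w_{2,j}}\big)_{i,j}$ indexed by the pairs $(i,j)$.

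The second step is Cauchy--Schwarz over the index set $\{(i,j)\}$, which bounds $\mtx x^\intercal\mtx P\mtx y$ by the product of $\big(\sum_{i,j} t_{ij} x_i^2/w_{1,i}\big)^{1/2}$ and $\big(\sum_{i,j} t_{ij} y_j^2/w_{2,j}\big)^{1/2}$. The third step is to collapse each of these factors using the row and column sum identities: $\sum_{i,j} t_{ij} x_i^2/w_{1,i} = \sum_i (x_i^2/w_{1,i})\sum_j t_{ij} = \sum_i x_i^2 = \norm{\mtx x}^2 = 1$, and symmetrically $\sum_{i,j} t_{ij} y_j^2/w_{2,j} = \norm{\mtx y}^2 = 1$. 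Hence $\mtx x^\intercal \mtx P \mtx y \le 1$ for all unit $\mtx x,\mtx y$; since this holds after any sign change of $\mtx x$, taking the supremum gives $\norm{\mtx P}\le 1$.

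The only point that needs a sentence of care rather than a computation is the possibility that some $w_{1,i}$ or $w_{2,j}$ vanishes, which would make $\mtx W_1^{-1/2}$ or $\mtx W_2^{-1/2}$ ill-defined. Because $\mtx T$ is entrywise nonnegative, $w_{1,i}=0$ forces the entire $i$-th row of $\mtx T$, and hence of $\mtx P$, to be zero, so such indices may simply be discarded without affecting $\norm{\mtx P}$; equivalently, one reads $\mtx W_1^{-1/2}$ and $\mtx W_2^{-1/2}$ as Moore--Penrose pseudoinverses and restricts every sum to indices of positive weight, after which the three steps above go through verbatim. In the use made of this lemma the marginals $\mtx m_1,\mtx m_2$ are fully supported probability vectors, so this degenerate case does not arise, and I do not anticipate any genuine obstacle beyond this bookkeeping.
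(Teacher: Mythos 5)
Your proof is correct and follows essentially the same route as the paper: both expand the bilinear form $\mtx u^\intercal \mtx P \mtx v = \sum_{i,j} t_{ij}\,u_i v_j/\sqrt{w_{1,i}w_{2,j}}$, exploit $t_{ij}\ge 0$ together with the marginal identities $\sum_j t_{ij}=w_{1,i}$ and $\sum_i t_{ij}=w_{2,j}$, and conclude; the only cosmetic difference is that you close with Cauchy--Schwarz over the pair index $(i,j)$ while the paper completes the square (termwise AM--GM, writing $1-\mtx u^\intercal\mtx P\mtx v$ as $\tfrac12\sum_{i,j}t_{ij}(u_i/\sqrt{w_{1,i}}-v_j/\sqrt{w_{2,j}})^2\ge 0$), which are interchangeable here. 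Your remark on zero marginals is a harmless extra piece of bookkeeping that the paper omits because its measures are fully supported.
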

\begin{proof}
Motivated by the regular proof for bounding the eigenvalues of normalized Laplacian matrices,
with two arbitrary vectors $\mtx u \in \mb R^{N_1}, \mtx v \in \mb R^{N_2}$ on the unit spheres we can recast the target statement as
\begin{align}
\label{eqn:norm_trans}
1 - \mtx u^\intercal \mtx P \mtx v \geq 0, \quad \forall~\mtx u, \mtx v \text{~s.t.~} \|\mtx u\|=1, \|\mtx v\|=1.
\end{align}
We denote the diagonals for $\mtx W_1$ and $\mtx W_2$ respectively as $\mtx m_1 = [m_{1, 1}, m_{1, 2}, \dots, m_{1, N_1}]^{\intercal}$ and $\mtx m_2 = [m_{2, 1}, m_{2, 2}, \dots, m_{2, N_2}]^{\intercal}$;
then we can rewrite the right-hand-side quantity in Inequality~(\ref{eqn:norm_trans}) as
\begin{align*}
1 - \mtx u^\intercal \mtx P \mtx v &= \frac12 \paren{\|\mtx u\|^2 + \|\mtx v\|^2} - \sum_{i=1}^{N_1} \sum_{j=1}^{N_2} \frac{t_{ij} \mtx u_i \mtx v_j}{\sqrt{m_{1, i} m_{2, j}}} \\
&= \frac12 \sum_{i=1}^{N_1} \sum_{j=1}^{N_2} \frac{t_{ij} \mtx u_i^2}{m_{1, i}}
+ \frac12 \sum_{j=1}^{N_2} \sum_{i=1}^{N_1} \frac{t_{ij} \mtx v_j^2}{m_{2, j}}
- \sum_{i=1}^{N_1} \sum_{j=1}^{N_2} \frac{t_{ij} \mtx u_i \mtx v_j}{\sqrt{m_{1, i} m_{2, j}}} \\
&= \frac12 \sum_{i=1}^{N_1} \sum_{j=1}^{N_2} t_{ij} \paren{\frac{\mtx u_i}{\sqrt{m_{1, i}}} - \frac{\mtx v_j}{\sqrt{m_{2, j}}}}^2 \geq 0,
\end{align*}
where the second equation holds due to the conditions $\sum_{j=1}^{N_2} t_{ij} = m_{1, i}$ and $\sum_{i=1}^{N_1} t_{ij} = m_{2, j}$,
and the last inequality holds since $t_{ij} \geq 0, \forall i \in [N_1], j \in [N_2]$.
\end{proof}

\begin{lemma}
\label{thm:trace_bound}
Consider a positive semi-definite (PSD) similarity matrix $S \in \mb R^{N \times N}$ along with the probability mass vector $\mtx m$ and the diagonal matrix $\mtx W \defeq \mathrm{diag}\paren{\mtx m}$.
For any non-overlapping partition $\set{\m P_1, \m P_2, \dots, \m P_n}$, we denote the corresponding coarsening matrices $\mtx C_p, \bar{\mtx C}_w$ and the projection matrix $\mtx \Pi_w = \mtx C_w^\intercal \mtx C_w$.
Let $\mtx A \defeq \mtx W^\frac12 \mtx{S} \mtx W^\frac12$. Then we have
\begin{align}
\Tr \paren{\mtx A^2}
\geq \Tr \brkt{\paren{\mtx \Pi_w \mtx A \mtx \Pi_w}^2}.
\label{eqn:trace_bound}
\end{align}
\end{lemma}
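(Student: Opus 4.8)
The plan is to reduce \eqref{eqn:trace_bound} to two elementary facts: that $\mtx\Pi_w$ is an orthogonal projection, and that right-multiplication by an orthogonal projection never increases the Frobenius norm. First I would record the structural properties in play. The matrix $\mtx A=\mtx W^{\frac12}\mtx S\mtx W^{\frac12}$ is symmetric and PSD, since $\mtx x^\intercal\mtx A\mtx x=(\mtx W^{\frac12}\mtx x)^\intercal\mtx S(\mtx W^{\frac12}\mtx x)\ge 0$ by the PSD-ness of $\mtx S$ (using that the diagonal $\mtx W^{\frac12}$ is symmetric). The matrix $\mtx\Pi_w=\mtx C_w^\intercal\mtx C_w$ is symmetric and, because $\mtx C_w\mtx C_w^\intercal=\mtx I_n$ (see \Cref{sec:scale}), idempotent; hence $0\preceq\mtx\Pi_w\preceq\mtx I_N$, and likewise $0\preceq\mtx I_N-\mtx\Pi_w\preceq\mtx I_N$. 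Since $\mtx A$ and $\mtx\Pi_w\mtx A\mtx\Pi_w$ are symmetric, the two sides of \eqref{eqn:trace_bound} are squared Frobenius norms: $\Tr\paren{\mtx A^2}=\norm{\mtx A}_F^2$ and $\Tr\brkt{\paren{\mtx\Pi_w\mtx A\mtx\Pi_w}^2}=\norm{\mtx\Pi_w\mtx A\mtx\Pi_w}_F^2$, so it suffices to prove $\norm{\mtx\Pi_w\mtx A\mtx\Pi_w}_F\le\norm{\mtx A}_F$.

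For this I would establish the contraction inequality $\norm{\mtx B\mtx\Pi_w}_F\le\norm{\mtx B}_F$ for an arbitrary matrix $\mtx B$: using $\mtx\Pi_w^\intercal=\mtx\Pi_w=\mtx\Pi_w^2$ and cyclicity of the trace, $\norm{\mtx B\mtx\Pi_w}_F^2=\Tr\paren{\mtx\Pi_w\mtx B^\intercal\mtx B\mtx\Pi_w}=\Tr\paren{\mtx\Pi_w\mtx B^\intercal\mtx B}=\Tr\paren{\mtx B^\intercal\mtx B}-\Tr\brkt{\paren{\mtx I_N-\mtx\Pi_w}\mtx B^\intercal\mtx B}\le\norm{\mtx B}_F^2$, where the last step uses that $\mtx I_N-\mtx\Pi_w$ and $\mtx B^\intercal\mtx B$ are both PSD, so the trace of their product is nonnegative. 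Applying this with $\mtx B=\mtx A$ gives $\norm{\mtx A\mtx\Pi_w}_F\le\norm{\mtx A}_F$; since $\mtx A$ is symmetric, $\norm{\mtx\Pi_w\mtx A}_F=\norm{\paren{\mtx A\mtx\Pi_w}^\intercal}_F=\norm{\mtx A\mtx\Pi_w}_F\le\norm{\mtx A}_F$; and applying the contraction once more with $\mtx B=\mtx\Pi_w\mtx A$ gives $\norm{\mtx\Pi_w\mtx A\mtx\Pi_w}_F\le\norm{\mtx\Pi_w\mtx A}_F\le\norm{\mtx A}_F$, which is the desired bound.

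An alternative route makes the PSD structure more explicit: set $\mtx M\defeq\mtx A^{\frac12}\mtx\Pi_w\mtx A^{\frac12}$. Cyclicity of the trace together with $\mtx\Pi_w^2=\mtx\Pi_w$ yields $\Tr\brkt{\paren{\mtx\Pi_w\mtx A\mtx\Pi_w}^2}=\Tr\paren{\mtx\Pi_w\mtx A\mtx\Pi_w\mtx A}=\Tr\paren{\mtx M^2}$, while $0\preceq\mtx M\preceq\mtx A^{\frac12}\mtx I_N\mtx A^{\frac12}=\mtx A$ because $0\preceq\mtx\Pi_w\preceq\mtx I_N$. Writing $\mtx N\defeq\mtx A-\mtx M\succeq 0$, we get $\Tr\paren{\mtx A^2}-\Tr\paren{\mtx M^2}=2\Tr\paren{\mtx M\mtx N}+\Tr\paren{\mtx N^2}\ge 0$, again since the trace of a product of two PSD matrices is nonnegative. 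I expect there to be no genuine obstacle here, only a care-point in bookkeeping: one must invoke $\mtx C_w\mtx C_w^\intercal=\mtx I_n$ to know $\mtx\Pi_w$ is idempotent, and must use the symmetry of $\mtx A$ (and of $\mtx\Pi_w\mtx A\mtx\Pi_w$) to pass between $\Tr(\cdot^2)$ and $\norm{\cdot}_F^2$; notably, no spectral inequality such as the Poincar\'e separation theorem or Ruhe's inequality is needed. I would present the Frobenius-norm argument as the main proof and mention the PSD-ordering argument as a remark.
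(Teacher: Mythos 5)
Your proof is correct, but it takes a genuinely different route from the paper's. The paper first uses $\mtx C_w \mtx C_w^\intercal = \mtx I_n$ to rewrite $\Tr\brkt{\paren{\mtx\Pi_w\mtx A\mtx\Pi_w}^2}$ as $\Tr\brkt{\paren{\mtx C_w\mtx A\mtx C_w^\intercal}^2}$, i.e.\ as the sum of squared eigenvalues of the $n\times n$ compression, and then invokes the Poincar\'e separation theorem together with PSD-ness to get $\lambda_i \geq \lambda_i^{(c)} \geq 0$ and hence $\sum_i \lambda_i^2 \geq \sum_i \paren{\lambda_i^{(c)}}^2$. You instead prove the Frobenius-norm contraction $\norm{\mtx B\mtx\Pi_w}_F \leq \norm{\mtx B}_F$ from idempotency of $\mtx\Pi_w$ and the nonnegativity of $\Tr(XY)$ for PSD $X,Y$, and apply it twice; your second route via $0\preceq \mtx A^{1/2}\mtx\Pi_w\mtx A^{1/2}\preceq \mtx A$ is likewise sound. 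Both of your arguments check out line by line. What your approach buys is elementarity and generality: the Frobenius argument needs no spectral interlacing and in fact works for any symmetric $\mtx A$, not just PSD ones (whereas the paper's step $\lambda_i^2\geq(\lambda_i^{(c)})^2$ genuinely needs $\lambda_i^{(c)}\geq 0$). What the paper's approach buys is the finer eigenvalue-by-eigenvalue comparison $\lambda_i \geq \lambda_i^{(c)}$, which is not merely decorative here --- it is reused throughout the paper (e.g.\ in the proof of Lemma~\ref{lem: bd_diff_I1} and in the nonnegativity of the spectral difference $\Delta$), so the interlacing machinery is not avoidable globally even though you have shown it is avoidable for this particular lemma.
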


\begin{proof}
We first transform $\Tr \brkt{\paren{\mtx\Pi_w \mtx A \mtx \Pi_w}^2}$ as follows:
\begin{align*}
\Tr \brkt{\paren{\mtx\Pi_w \mtx A \mtx \Pi_w}^2} = \Tr\paren{\mtx\Pi_w \mtx A \mtx \Pi_w \mtx\Pi_w \mtx A \mtx \Pi_w} = \Tr\paren{\mtx C_w^\intercal \mtx C_w \mtx A \mtx C_w^\intercal \mtx C_w \mtx A \mtx C_w^\intercal \mtx C_w}
= \Tr\paren{\mtx C_w \mtx A \mtx C_w^\intercal \mtx C_w \mtx A \mtx C_w^\intercal},
\end{align*}
and the last two equations hold due to $\mtx C_w \mtx C_w^\intercal = \mtx I_n$.

We notice $\mtx A \defeq \mtx W^\frac12 \mtx{S} \mtx W^\frac12$ is symmetric and we can apply Poincar\'e separation theorem (c.f.\ Appendix~\ref{sec:interlacing} for the complete statement) to control the eigenvalues of $\mtx C_w \mtx A \mtx C_w^\intercal$.
Specifically, let $\lambda_1 \geq \lambda_2 \geq \dots \geq \lambda_N $ be the eigenvalues of $\mtx A$, and let $\lambda^{(c)}_1 \geq \dots \geq \lambda^{(c)}_n$ be the eigenvalues of $\mtx C_w \mtx A \mtx C_w^\intercal$; 
for all $i \in [n]$ we have $\lambda_i \geq \lambda^{(c)}_i \geq 0$ (being non-negative due to the PSD-ness of $\mtx A$); 
and a further conclusion $\lambda_i^2 \geq \paren{\lambda^{(c)}_i}^2, \forall i \in [n]$.
We can therefore complete the proof with
$\Tr \paren{\mtx A^2} = \sum_{i=1}^N \lambda_i^2 \geq \sum_{i=1}^n \paren{\lambda^{(c)}_i}^2 = \Tr \brkt{\paren{\mtx\Pi_w \mtx A \mtx \Pi_w}^2}$.
\end{proof}

\end{document}